\documentclass{article}
\usepackage{float}

\usepackage{microtype}
\usepackage{graphicx}
\usepackage{subcaption}
\usepackage{booktabs} 

\usepackage{hyperref}
\usepackage[ruled,vlined]{algorithm2e} 
\usepackage{enumitem}
\usepackage{amsmath}

\usepackage[accepted]{icml2026}

\usepackage{amsmath}
\usepackage{amssymb}
\usepackage{mathtools}
\usepackage{amsthm}
\usepackage{color}
\mathtoolsset{showonlyrefs}

\usepackage[capitalize,noabbrev]{cleveref}

\usepackage{xurl} 
\usepackage{listings}
\usepackage{xcolor}

\usepackage{pifont}
\newcommand{\cmark}{\ding{51}}
\newcommand{\xmark}{\ding{55}}

\theoremstyle{plain}
\newtheorem{theorem}{Theorem}[section]

\newtheorem{lemma}[theorem]{Lemma}
\newtheorem{corollary}[theorem]{Corollary}
\theoremstyle{definition}
\newtheorem{definition}[theorem]{Definition}
\newtheorem{assumption}[theorem]{Assumption}
\theoremstyle{remark}
\newtheorem{remark}[theorem]{Remark}

\DeclareMathOperator*{\argmin}{argmin}

\newcommand{\field}[1]{\mathbb{#1}}

\newcommand{\E}{\field{E}}

\newcommand{\KL}{{\text{\rm KL}}}

\newcommand{\piref}{\pi^{\textrm{ref}}}

\newcommand{\cD}{\mathcal{D}}

\usepackage[textsize=tiny]{todonotes}

\icmltitlerunning{Offline Two-Player Zero-Sum Markov Games with KL Regularization}

\begin{document}

\twocolumn[
  \icmltitle{Offline Two-Player Zero-Sum Markov Games with KL Regularization}




  \begin{icmlauthorlist}
    \icmlauthor{Claire Chen}{caltech}
    \icmlauthor{Yuheng Zhang}{yyy}
    \icmlauthor{Xinyu Liu}{uva}
    \icmlauthor{Zixuan Xie}{uva}
    \icmlauthor{Shuze Daniel Liu}{mit,purdue}
    \icmlauthor{Nan Jiang}{yyy}
  \end{icmlauthorlist}

  \icmlaffiliation{yyy}{University of Illinois Urbana-Champaign.}
  \icmlaffiliation{caltech}{California Institute of Technology.}
  \icmlaffiliation{uva}{University of Virginia.}
  \icmlaffiliation{purdue}{Purdue University}
  \icmlaffiliation{mit}{Massachusetts Institute of Technology.}

  \icmlcorrespondingauthor{Claire Chen}{clairechen@caltech.edu}
  \icmlcorrespondingauthor{Yuheng Zhang}{yuhengz2@illinois.edu}

  \icmlkeywords{Machine Learning, ICML}

  \vskip 0.3in
]



\printAffiliationsAndNotice{\icmlEqualContribution}

\begin{abstract}
We study the problem of learning Nash equilibria in offline two-player zero-sum Markov games. While existing approaches often rely on explicit pessimism to address distribution shift, we show that KL regularization alone suffices to stabilize learning and guarantee convergence. We first introduce Regularized Offline Sequential Equilibrium (ROSE), a theoretical framework that achieves a fast $\widetilde{\mathcal{O}}(1/n)$ convergence rate under \textit{unilateral concentrability}, improving over the standard $\widetilde{\mathcal{O}}(1/\sqrt{n})$ rates in unregularized settings. We then propose Sequential Offline Self-play Mirror Descent (SOS-MD), a practical model-free algorithm based on least-squares value estimation and iterative self-play updates. We prove that the last iterate of SOS-MD attains the same $\widetilde{\mathcal{O}}(1/n)$ statistical rate up to a vanishing optimization error of order $\widetilde{\mathcal{O}}(1/\sqrt{T})$ in the number of self-play iterations $T$.
\end{abstract}

\section{Introduction}
\begin{table*}[t]
\centering
\caption{Comparison with previous offline Markov game results.~\citet{ye2024online} and our paper study KL-regularized games, while other works consider the unregularized setting. We provide the first $\widetilde{\mathcal{O}}(1/n)$ statistical rate analysis in the multi-step setting, improving over the standard $\widetilde{\mathcal{O}}(1/\sqrt{n})$ rate in prior literature.
}
\label{tab:related}
\begin{tabular}{lccc}
\toprule
\textbf{Method} & \textbf{KL-reg.} & \textbf{Multi-step} & \textbf{Statistical Rate} \\
\midrule
PNVI~\citep{cui2022offline} 
& \xmark & \cmark & $\widetilde{\mathcal{O}}(1/\sqrt{n})$ \\

PMVI~\citep{zhong2022pessimistic} 
& \xmark & \cmark & $\widetilde{\mathcal{O}}(1/\sqrt{n})$ \\

BCEL~\citep{zhang2023offline} 
& \xmark & \cmark & $\widetilde{\mathcal{O}}(1/\sqrt{n})$ \\

\citet{ye2024online} 
& \cmark & \xmark & $\widetilde{\mathcal{O}}(1/\sqrt{n})$ \\

\midrule
\textbf{Ours (SOS-MD)} 
& \cmark & \cmark & $\widetilde{\mathcal{O}}(1/n)$ \\
\bottomrule
\end{tabular}
\end{table*}

Offline Reinforcement Learning (RL) has emerged as a critical paradigm for developing decision-making policies in safety-critical or high-cost domains where online exploration is prohibitive, such as healthcare, autonomous driving, and industrial control \citep{levine2020offline, fujimoto2019off}. While standard offline RL focuses on single-agent optimization, many real-world challenges inherently involve strategic interactions or adversarial dynamics that require a game-theoretic perspective. Two-player zero-sum Markov Games provide the foundational framework for these settings, modeling scenarios where a learner must compete against an opponent to ensure robust performance \citep{littman1994markov, zhang2021multi}. This formulation is particularly essential for risk-sensitive decision-making, where the adversary represents environmental disturbances or model uncertainties that the agent must withstand \citep{pinto2017robust, cui2022offline}. In this offline game-theoretic setting, the objective is to identify a Nash Equilibrium (NE) solely from previously obtained offline data. Defined as a stable saddle point where no player can improve their outcome by changing their strategy alone, an NE ensures the learned policy remains robust even against the adversary \citep{bacsar1998dynamic, osborne1994course}.

However, the offline nature of these games introduces the severe challenge of distribution shift, where the learned policy deviates from the behavioral distribution represented in the dataset \citep{xie2021bellman,fujimoto2019off}. In such cases, learners often suffer from extrapolation error, assigning overoptimistic value estimates to state-action pairs poorly supported by the data \citep{ kumar2020conservative}. To address the challenges of distribution shift, standard approaches typically rely on the principle of explicit pessimism \citep{liu2020provably,jin2021pessimism,rashidinejad2021bridging}. Methods such as Conservative Q-Learning (CQL) \citep{kumar2020conservative} and Lower Confidence Bound (LCB) approaches \citep{jin2021pessimism, cui2022offline} incorporate manually constructed \textit{pessimistic bonuses} to suppress the values of out-of-distribution actions. While effective, these methods often require extensive manual tuning of penalties and complex implementation details \citep{cui2022offline, zhong2022pessimistic}.

Recently, KL regularization with respect to a reference policy has emerged as a fundamental objective for incorporating prior knowledge and enforcing behavioral constraints \citep{xiong2023iterative,munos2024nash}. It anchors updates to a reference distribution, such as a pre-trained model or human demonstrations. A prominent example is Reinforcement Learning from Human Feedback (RLHF)~\citep{ouyang2022training}, where a KL penalty is explicitly added to the reward function to prevent the aligned model from deviating excessively from the pre-trained language model. This regularization ensures that learned policies remain within the support of the reference distribution \citep{ye2024online}, preventing overoptimistic value estimates in regions without prior guidance \citep{nayak2025achieving}. However, despite the widespread empirical success of KL regularization, its theoretical impact on sample efficiency remains poorly understood. Existing analyses typically derive convergence rates identical to those without KL regularization~\citep{ye2024online}, thereby missing the opportunity to theoretically justify the accelerated convergence observed in practice.

In this work, we demonstrate that a faster statistical rate of $\widetilde{\mathcal{O}}(1/n)$ is achievable in KL-regularized offline two-player zero-sum Markov games. Our analysis reveals that anchoring to a fixed reference policy is sufficient to stabilize offline learning in this setting, eliminating the need for explicit pessimistic bonuses. We first propose Regularized Offline Sequential Equilibrium (ROSE, Algorithm~\ref{alg:offline-markov}), a theoretical framework that directly computes the regularized Nash equilibrium via backward induction. ROSE achieves this fast rate relying only on \textit{unilateral concentrability}, the minimal coverage condition to learn the NE in the worst case~\citep{cui2022offline}, matching the learnability requirements of offline Markov games. To address the computational intractability of solving exact equilibria in large-scale settings, we further propose Sequential Offline Self-play Mirror Descent (SOS-MD, Algorithm~\ref{algo:seq_spermd_asym}), a practical model-free method that approximates the equilibrium via iterative mirror descent. Our primary contributions are as follows:

\begin{enumerate}
    \item \textbf{Pessimism-Free Theoretical Framework}: We introduce ROSE, which establishes that KL regularization obviates the need for explicit pessimism in offline Markov games. This approach bypasses the technical overhead of constructing and tuning confidence bonuses, a requirement central to prior pessimistic approaches \citep{jin2021pessimism, zhong2022pessimistic}.

    \item \textbf{Fast Statistical Rate}: We establish that ROSE achieves a fast $\widetilde{\mathcal{O}}(1/n)$ statistical rate for the duality gap. As summarized in Table~\ref{tab:related}, this significantly improves upon the standard $\widetilde{\mathcal{O}}(1/\sqrt{n})$ rates found in prior offline game-theoretic literature \citep{cui2022offline, zhong2022pessimistic,zhang2023offline}. While KL regularization has been studied in single-step or online settings \citep{ye2024online}, our work is the first to unlock the fast-rate phenomenon in the \emph{multi-step} offline game setting. Our analysis reveals that the strong convexity of the KL-regularized objective ensures the regret scales quadratically with the estimation error, extending the fast rates from single-agent systems \cite{hu2021fast} and regularized contextual games \cite{zhang2026beyond} to Markov games.

    \item \textbf{Computationally Efficient Approximation}: We prove that the last iterate of our practical instantiation, SOS-MD, preserves the same $\widetilde{\mathcal{O}}(1/n)$ statistical rate up to a vanishing optimization error of order $\widetilde{\mathcal{O}}(1/\sqrt{T})$. Unlike standard game-theoretic approaches that rely on iterate averaging to guarantee convergence \citep{freund1999adaptive}, we show that the regularity induced by KL regularization enables the \textit{last iterate} of mirror descent to converge directly. By establishing a total error bound that incorporates both optimization and statistical errors, we demonstrate that SOS-MD bridges the gap between theoretical solvability and computational efficiency.
\end{enumerate}
\paragraph{Conflicts of Interest Disclosure}
The authors declare no financial or other substantive conflicts of interest that could reasonably be perceived to influence this submission.

\section{Related Work}
\paragraph{Offline Reinforcement learning.}
The primary challenge in offline reinforcement learning (RL) arises from the distribution shift between the data collection policy (behavior policy) and the target policy \citep{liu2024doubly, liu2024efficient, chen2025efficient, liu2025efficient}. To address the insufficient coverage of the dataset, the principle of \textit{pessimism} has been established as a standard technique for provably efficient learning~\citep{liu2020provably,rashidinejad2021bridging,jin2021pessimism,xie2021bellman,uehara2021pessimistic,zhan2022offline}. The key idea is to penalize state-action pairs that are rarely visited in the dataset to avoid overestimation bias. This pessimistic framework has been shown to be minimax optimal under single-policy concentrability~\citep{li2024settling}. In contrast to this prevailing paradigm, we propose a new algorithm for KL-regularized Markov games that operates without explicit pessimism, and our analysis establishes that it also achieves provably efficient learning.

\paragraph{Markov Games.}
Markov games, also known as stochastic games, generalize Markov decision processes (MDPs) to multi-agent settings, where multiple players interact strategically over time under shared environment dynamics \citep{littman1994markov}. These games have been extensively studied in the online setting, covering both two-player zero-sum games~\citep{wei2017online,bai2020near,bai2022training,liu2021sharp} and multi-player general-sum games~\citep{zhang2021multi,jin2021v,song2021can,mao2023provably}. More recently, there has been growing interest in \emph{offline} Markov games, where agents must learn equilibrium strategies from a fixed dataset without further interaction. Existing works typically address this challenge by extending the principle of pessimism to game-theoretic objectives \citep{cui2022offline,zhong2022pessimistic,cui2022provably,zhang2023offline}. A key distinction of offline Markov games from single-agent RL is the coverage requirement: while single-agent tasks often require only single-policy coverage, learning Nash equilibria in offline Markov games necessitates covering all unilateral deviations, formalized as the \emph{unilateral concentrability} condition \citep{cui2022offline, chen2026fast, chen2026pessimism}. Our analysis adopts this assumption, which is established as the weakest sufficient coverage condition in the worst case.

\paragraph{KL-regularized Objectives.} Motivated by the widespread success of Reinforcement Learning from Human Feedback (RLHF) in aligning large language models~\citep{ouyang2022training}, there has been a surge of research interest in studying KL-regularized objectives in RL, which penalize deviations from a reference policy. The KL-regularized objectives have been theoretically analyzed in both single-agent setting~\citep{xiong2023iterative,xie2024exploratory,zhao2025logarithmic} and multi-agent setting~\citep{munos2024nash,ye2024online,zhang2025iterative,nayak2025achieving}. However, much of this existing theory focuses on bandit settings (single-step) or online interactions. In contrast, our work investigates the multi-step dynamics inherent in sequential decision-making, formulating the problem as an offline KL-regularized Markov game.

\section{Preliminaries}
\label{sec:preliminaries}

We consider the offline learning problem in episodic two-player zero-sum Markov Games with KL regularization. In this section, we define the problem setup, the regularized objective, and the solution concepts used throughout the paper. For convenience, a summary of the notation used throughout this paper is provided in Table~\ref{tab:notation} of Appendix~\ref{app:notation}.

\subsection{Two-Player Zero-Sum Markov Games}
We define a finite-horizon Markov Game by the tuple $\mathcal{M} = (\mathcal{S}, \mathcal{A}_1, \mathcal{A}_2, P, r, H)$, where $\mathcal{S}$ is the possibly infinite state space, and $\mathcal{A}_1, \mathcal{A}_2$ are the finite action spaces for Player 1 and Player 2, respectively. The game proceeds over $H$ steps. At each step $h \in [H]$, given the current state $s_h$, both players simultaneously choose actions $a_{h,1} \in \mathcal{A}_1$ and $a_{h,2} \in \mathcal{A}_2$. The system transitions to the next state $s_{h+1} \sim P_h(\cdot \mid s_h, a_{h,1}, a_{h,2})$, and Player 1 receives a deterministic reward $r_h^\star(s_h, a_{h,1}, a_{h,2}) \in [0, 1]$. As the game is zero-sum, Player 2 receives reward $-r_h^\star(s_h, a_{h,1}, a_{h,2})$.

We denote a policy for Player $k \in \{1, 2\}$ as $\pi_k = \{\pi_{h,k}\}_{h=1}^H$, where $\pi_{h,k}: \mathcal{S} \to \Delta(\mathcal{A}_k)$ maps states to distributions over actions. A joint policy is denoted by $\pi = (\pi_1, \pi_2)$. For brevity, we denote the joint action by $a = (a_1, a_2)$ and the product distribution of the policies by $\pi(a|s) = \pi_1(a_1|s)\pi_2(a_2|s)$. We use $\mathbb{E}_{a \sim \pi}[\cdot]$ to represent the expectation with respect to this joint distribution.
For a joint policy $\pi$, we define the state visitation distribution at step $h$ as $d_h^\pi(s) := \mathbb{P}_\pi(s_h = s)$, where the probability is taken over the trajectory induced by $\pi$ and the transition kernel $P$, starting from $s_1 \sim \rho$.

\subsection{KL-Regularized Value Functions}
To incorporate prior knowledge and restrict the learned policy to valid regions, we employ KL regularization with respect to fixed reference policies $\pi^{\mathrm{ref}} = (\pi^{\mathrm{ref}}_1, \pi^{\mathrm{ref}}_2)$.
For a regularization coefficient $\eta > 0$, the goal of the agent is to maximize the expected cumulative reward subject to a KL penalty at each step.

Accordingly, the regularized value functions $V_h^\pi$ and $Q_h^\pi$ satisfy the following Bellman recursions. The Q-function is the expected regularized return starting from a state-action pair:
\begin{equation}
\textstyle Q_h^{\pi}(s, a_1, a_2) := r_h^\star(s, a_1, a_2) + \mathbb{E}_{s' \sim P_h(\cdot|s, a_1, a_2)}[V_{h+1}^{\pi}(s')],
\end{equation}
with $V_{H+1}^{\pi}(s) = 0$. The value function $V_h^\pi(s)$ incorporates the immediate regularization penalty:
\begin{align}
   \!V_h^{\pi}(s)\!\!=& \mathbb{E}_{a \sim \pi_h(\cdot|s)} [Q_h^{\pi}(s, a)] \!\! -  \!\!\eta^{-1} \mathrm{KL}\big(\pi_{h,1}(\cdot|s) \,\|\, \pi^{\mathrm{ref}}_{h,1}(\cdot|s)\big) \\
    &+ \eta^{-1} \mathrm{KL}\big(\pi_{h,2}(\cdot|s) \,\|\, \pi^{\mathrm{ref}}_{h,2}(\cdot|s)\big).
    \label{eq:value_kl_def}
\end{align}
Under this formulation, minimizing $V_h^\pi$ with respect to $\pi_2$ is equivalent to Player 2 maximizing their own cumulative reward penalized by the KL divergence from $\pi^{\mathrm{ref}}_2$. 

\subsection{Regularized Nash Equilibrium}
In the regularized game, a regularized \textit{Nash Equilibrium} (NE) is a saddle point of the value function. We define the best-response values for Player 1 and Player 2 respectively:
\begin{align}
   & V_h^{\dagger, \pi_2}(s) := \max_{\pi'_1} V_h^{\pi'_1, \pi_2}(s), \quad \text{and}\\
    &V_h^{\pi_1, \dagger}(s) := \min_{\pi'_2} V_h^{\pi_1, \pi'_2}(s).
\end{align}
A joint policy $\pi^\star = (\pi_1^\star, \pi_2^\star)$ is a Regularized Nash Equilibrium if, for all $s \in \mathcal{S}$ and $h \in [H]$:
\begin{equation}
    V_h^{\pi_1^\star, \dagger}(s) = V_h^{\pi^\star}(s) = V_h^{\dagger, \pi_2^\star}(s).
\end{equation}
To evaluate the quality of a learned policy $\hat{\pi}$, we use the \textit{Duality Gap}, which measures how exploitable the policy is by optimal adversaries:
\begin{equation}
    \mathrm{Gap}(\hat{\pi}) := \mathbb{E}_{s_1 \sim \rho} \left[ V_1^{\dagger, \hat{\pi}_2}(s_1) - V_1^{\hat{\pi}_1, \dagger}(s_1) \right],
\end{equation}
where $\rho$ is the initial state distribution. $\mathrm{Gap}(\hat{\pi}) = 0$ if and only if $\hat{\pi}$ is a Nash equilibrium.

\subsection{Offline Learning with Function Approximation}
We operate in the offline setting, where the learner has access to a static dataset $\mathcal{D}$ collected by behavioral policies. We assume the dataset consists of $n$ independent trajectories, denoted as:
\begin{equation}
    \mathcal{D} = \big\{ (s_{i,h}, a_{i,h,1}, a_{i,h,2}, r_{i,h}, s'_{i,h}) \big\}_{i=1, h=1}^{n, H},
\end{equation}
where $s'_{i,h} = s_{i, h+1}$ is the state at the next time step. The rewards in the dataset are noisy realizations of the true deterministic reward \citep{liu2025ode}:
\begin{equation}
    r_{i,h} = r_h^\star(s_{i,h}, a_{i,h,1}, a_{i,h,2}) + \xi_{i,h},
\end{equation}
where $\xi_{i,h}$ is independent zero-mean $1$-sub-Gaussian noise. 

We assume access to a function class $\mathcal{Q} \subset (\mathcal{S} \times \mathcal{A}_1 \times \mathcal{A}_2 \to \mathbb{R})$ to estimate the Q-values. We adopt the standard completeness assumption (e.g., see also \citet{zhang2023offline} and \citet{xie2021bellman}) that for any $V_{h+1}$ induced by the algorithm, the Bellman update $\mathcal{T}_h V_{h+1}$ lies within $\mathcal{Q}$. 

\section{Method: Pessimism-Free Learning via KL Regularization}
\label{sec:method}

In this section, we propose Regularized Offline Sequential Equilibrium (ROSE, Algorithm~\ref{alg:offline-markov}), a model-free framework that utilizes KL regularization to achieve sample-efficient learning without explicit pessimistic bonuses. It integrates Least-Squares Value Iteration with an exact Regularized Nash Equilibrium solver to establish the theoretical foundations of our approach. We then discuss the intuition behind why KL regularization effectively handles distribution shift in offline games. 


\begin{algorithm}[t]
\caption{Regularized Offline Sequential Equilibrium (ROSE)}
\label{alg:offline-markov}
\textbf{Require:}
Regularization $\eta>0$, reference policies $\pi^{\mathrm{ref}}_1, \pi^{\mathrm{ref}}_2$, offline dataset $\mathcal{D}$, function class $\mathcal{Q}$.
\\[1mm]
\textbf{Initialization:} Set $\hat V_{H+1}(s) = 0$ for all $s$.
\\[1mm]
\textbf{Backward Induction:}
For $h=H, \dots, 1$ do:
\begin{enumerate}
    \item \textbf{Regression Targets:}
    \[
        y_{i,h} = r_{i,h} + \hat V_{h+1}(s'_{i,h}), \quad \forall i \in \mathcal{D}_h.
    \]
    \item \textbf{Q-Function Estimation:}
    Compute $\hat Q_h$ by solving the least-squares regression:
    \[
    \hat Q_h \in \arg\min_{f \in \mathcal{Q}} \sum_{i \in \mathcal{D}_h} \big( f(s_{i,h}, a_{i,h,1}, a_{i,h,2}) - y_{i,h} \big)^2.
    \]
    \item \textbf{Equilibrium Computation:}
    Compute the pair $(\hat\pi_{h,1}, \hat\pi_{h,2}) \in \Delta(\mathcal{A}_1) \times \Delta(\mathcal{A}_2)$:
    \begin{align}
        &(\hat\pi_{h,1}, \hat\pi_{h,2}) = \arg\max_{\pi_1 }\min_{\pi_2} 
\big( 
\mathbb{E}_{a \sim \pi_1 \times \pi_2}[\hat Q_h(s,a)] \\
&- \eta^{-1} \text{KL}(\pi_1 \| \pi_{h,1}^{\mathrm{ref}}) 
+ \eta^{-1} \text{KL}(\pi_2 \| \pi_{h,2}^{\mathrm{ref}}) 
\big).
    \end{align}
    \item \textbf{Value Update:}
    Update the value function $\hat V_h(s)$ as the value of the unique regularized equilibrium:
\begin{align}
       & \hat V_h(s) = \mathbb{E}_{a \sim \hat\pi_{h,1} \times \hat\pi_{h,2}}[\hat Q_h(s,a)] \\
    &- \eta^{-1} \text{KL}(\hat\pi_{h,1} \| \pi_{h,1}^{\mathrm{ref}}) 
    + \eta^{-1} \text{KL}(\hat\pi_{h,2} \| \pi_{h,2}^{\mathrm{ref}}).
\end{align}
\end{enumerate}
\textbf{Output:}
Learned policy $\hat\pi=\{\hat\pi_h\}_{h=1}^H$.
\end{algorithm}

\subsection{Algorithm Description}
Our approach, outlined in Algorithm~\ref{alg:offline-markov},  adopts a fitted Q-iteration approach. The core idea is to iteratively estimate the regularized Q-function using the offline dataset $\mathcal{D}$ and then solve for the regularized Nash Equilibrium (NE) of the estimated game at each step.

The algorithm proceeds via backward induction from $h=H$ to $1$. At each step $h$, we first construct regression targets $y_{i,h}$ using the reward and the estimated value from the next step, $\hat{V}_{h+1}$. We then estimate the Q-function $\hat{Q}_h$ by solving a least-squares regression problem over the function class $\mathcal{Q}$: $ \textstyle\hat{Q}_h \in \arg\min_{f \in \mathcal{Q}} \sum_{i=1}^n \left( f(s_{i,h}, a_{i,h,1}, a_{i,h,2}) - y_{i,h} \right)^2.$
Given the estimated $\hat{Q}_h$, we compute the joint policy $(\hat{\pi}_{h,1}, \hat{\pi}_{h,2})$ that forms a Regularized Nash Equilibrium for the current stage game. Crucially, Algorithm~\ref{alg:offline-markov} uses the \textit{raw} estimated Q-values directly, without requiring explicit pessimistic penalties or modifications to the regression targets. The regularization is handled entirely by the equilibrium solver step, which maximizes the KL-regularized objective defined in Equation~\eqref{eq:value_kl_def}. 

\subsection{Implicit Pessimism via KL Regularization}
\label{subsec:implicit_pessimism}
A key distinction of our work is the absence of explicit uncertainty quantification. While standard offline RL methods \citep{kumar2020conservative, jin2021pessimism} and their game-theoretic extensions \citep{cui2022offline, zhong2022pessimistic} rely on manually constructed penalty terms (e.g., LCBs) to suppress out-of-distribution actions, Algorithm~\ref{alg:offline-markov} achieves robustness intrinsically through the geometry of the KL-regularized objective.

To see this, consider the best-response problem for Player 1 against a fixed opponent policy $\pi_2$. The closed-form solution is given by the Gibbs distribution:
\begin{align}
    \hat{\pi}_{1}(a_1|s) \propto \pi^{\mathrm{ref}}_{1}(a_1|s) \cdot \exp\left( \eta \mathbb{E}_{a_2 \sim \pi_2} [\hat{Q}_h(s, a_1, a_2)] \right).
    \label{eq:gibbs_response}
\end{align}
This formulation reveals that the reference policy $\pi^{\mathrm{ref}}$ functions as a probabilistic anchor. Crucially, the update operates via multiplicative re-weighting: any action $a_1$ unsupported by the reference (i.e., $\pi^{\mathrm{ref}}_1(a_1|s) \approx 0$) will inherently receive negligible probability in the learned policy $\hat{\pi}_1$, regardless of the estimated Q-value. This mechanism creates a barrier, effectively constraining the agent to the ``safe" support of the reference distribution without requiring the explicit construction of confidence intervals.

It is important to note that Equation~\eqref{eq:gibbs_response} characterizes the optimality condition for a fixed opponent. In the full game setting, both players optimize their strategies simultaneously, leading to a coupled equilibrium problem. However, computing this exact equilibrium is computationally prohibitive, particularly in large-scale settings. This necessitates the iterative self-play algorithm (SOS-MD) detailed in Section~\ref{sec:algorithm_2}.

\section{Sequential Offline Self-play Mirror Descent}
\label{sec:algorithm_2}

While ROSE (Algorithm~\ref{alg:offline-markov}) establishes the theoretical foundation of the KL-regularized framework, computing the exact Regularized Nash Equilibrium at each step is often computationally intractable in practice. To bridge this gap, we introduce Sequential Offline Self-play Mirror Descent (SOS-MD), a computationally efficient instantiation that approximates the equilibrium via iterative updates.

Instead of relying on an exact solver, SOS-MD employs a dual Mirror Descent approach to iteratively refine the policy. By exploiting the geometry of the KL-divergence, our method produces simple, closed-form updates based on exponentiated gradients. Notably, for sufficiently small policy updates, our KL-regularized mirror descent rule reduces to the Natural Policy Gradient (NPG) \citep{kakade2001natural} update. This connection aligns our offline game-theoretic framework with successful online policy optimization algorithms such as TRPO \citep{schulman2015trust}.


\subsection{Algorithm Details}
As outlined in Algorithm~\ref{algo:seq_spermd_asym}, SOS-MD operates via backward induction. At each stage $h$, the algorithm alternates between estimating the game structure and solving the induced game.

\paragraph{Stage-wise Q-Function Regression.}
We employ the same least-squares regression procedure as Algorithm~\ref{alg:offline-markov} to estimate $\hat{Q}_h$, minimizing the squared error over $\mathcal{Q}$ with targets derived from the next-step value estimate $\hat{V}_{h+1}$.

\paragraph{Self-Play Optimization via Mirror Descent.}
To resolve the coupled optimization problem where players optimize simultaneously, SOS-MD employs an iterative self-play procedure. We initialize both players' policies to their respective reference policies $\pi^{\mathrm{ref}}$. For $T$ iterations, players update their strategies based on the \textit{marginal payoffs} derived from the current estimated $\hat{Q}_h$.
The update rule \eqref{mirror ascent}\eqref{mirror descent} performs a KL-regularized mirror descent step. Specifically, the update interpolates between the current policy $\pi^{(t)}$ and the reference $\pi^{\mathrm{ref}}$, weighted by the exponential of the estimated advantages. This ensures that the learned policy improves its payoff while remaining close to the reference distribution.



\paragraph{Value Function Update.}
We compute $\hat{V}_h(s)$ using the last iterates $\pi^{(T)}_{h}$. By leveraging the strong convexity induced by regularization, we ensure the fast convergence of final iterates, eliminating the need for standard time-averaging.
\begin{algorithm*}[!t]
    \caption{Sequential Offline Self-play Mirror Descent (SOS-MD)}\label{algo:seq_spermd_asym}
    \begin{algorithmic}[1]
    \REQUIRE Regularization $\eta > 0$, Reference policies $\piref = \{\piref_{h,1}, \piref_{h,2}\}_{h=1}^H$, Offline dataset $\cD$, Stepsize schedule $\gamma_t$.
    
    \STATE Initialize terminal value function $\hat{V}_{H+1}(s) = 0$ for all $s \in \mathcal{S}$.
    
    \FOR{$h = H, \dots, 1$}
        \STATE \textbf{Stage-wise Q-Function Regression:}
        \STATE Construct empirical regression targets using the next-step value estimate:
        \begin{align*}
            y_{i,h} = r_{i,h} + \hat{V}_{h+1}(s_{i,h}') \quad \forall i \in \cD_h
        \end{align*}
        \STATE Estimate $\hat{Q}_h$ via stage-wise least-squares minimization over function class $\mathcal{Q}$:
        \begin{align*}
            \hat{Q}_h \leftarrow \argmin_{f \in \mathcal{Q}} \sum_{i=1}^{|\cD_h|} \left( f(s_{i,h}, a_{i,h,1}, a_{i,h,2}) - y_{i,h} \right)^2
        \end{align*}
        
        \STATE \textbf{Self-Play Optimization (Dual Mirror Descent):}
        \STATE Initialize policy iterates: $\pi_{h,1}^{(0)} = \piref_{h,1}$ and $\pi_{h,2}^{(0)} = \piref_{h,2}$.
        
        \FOR{$t = 0, \dots, T-1$}
            \STATE \textbf{Marginal Payoff Evaluation:}
            \STATE Compute Player 1's expected payoff against current Player 2 iterate:
            \begin{align*}
                \hat{q}_{h,1}^{(t)}(s, a_1) &= \mathbb{E}_{a_2 \sim \pi_{h,2}^{(t)}(\cdot|s)} [ \hat{Q}_h(s, a_1, a_2) ]
            \end{align*}
            \STATE Compute Player 2's expected payoff against current Player 1 iterate:
            \begin{align*}
                \hat{q}_{h,2}^{(t)}(s, a_2) &= \mathbb{E}_{a_1 \sim \pi_{h,1}^{(t)}(\cdot|s)} [ \hat{Q}_h(s, a_1, a_2) ]
            \end{align*}
            
            \STATE \textbf{Policy Update:}
            \STATE Update Player 1 (Ascent - Maximizing $\hat{Q}_h$) using stepsize $\gamma_t$:
            \begin{align}
                \pi_{h,1}^{(t+1)}(a_1|s) &\propto \pi_{h,1}^{(t)}(a_1|s)^{1 - \gamma_t \eta^{-1}} \cdot \exp\Big( \gamma_t \cdot \hat{q}_{h,1}^{(t)}(s, a_1) \Big) \cdot \piref_{h,1}(a_1|s)^{\gamma_t \eta^{-1}}\label{mirror ascent}
            \end{align}
            \STATE Update Player 2 (Descent - Minimizing $\hat{Q}_h$) using stepsize $\gamma_t$:
            \begin{align}
                \pi_{h,2}^{(t+1)}(a_2|s) &\propto \pi_{h,2}^{(t)}(a_2|s)^{1 - \gamma_t \eta^{-1}} \cdot \exp\Big( -\gamma_t \cdot \hat{q}_{h,2}^{(t)}(s, a_2) \Big) \cdot \piref_{h,2}(a_2|s)^{\gamma_t \eta^{-1}}\label{mirror descent}
            \end{align}
        \ENDFOR
        
        \STATE \textbf{Update Value Function:}
        \STATE Compute stage $h$ value using the last iterate $\pi_{h}^{(T)}$:
        \begin{align*}
            \hat{V}_h(s) \leftarrow \mathbb{E}_{\substack{a_1 \sim \pi_{h,1}^{(T)} \\ a_2 \sim \pi_{h,2}^{(T)}}} \Big[ \hat{Q}_h(s, a_1, a_2) \Big] - \eta^{-1} \mathrm{KL}(\pi_{h,1}^{(T)} \| \piref_{h,1}) + \eta^{-1} \mathrm{KL}(\pi_{h,2}^{(T)} \| \piref_{h,2})
        \end{align*}
    \ENDFOR
    
     \STATE \textbf{Return:}  Learned policy sequence $\pi^{(T)} = ( \{\pi_{h,1}^{(T)}\}_{h=1}^H, \{\pi_{h,2}^{(T)}\}_{h=1}^H )$.
    \end{algorithmic}
\end{algorithm*}

\section{Theoretical Analysis}
\label{sec:theory}

In this section, we establish the theoretical guarantees for our approach. We first analyze the statistical sample complexity of the Algorithm~\ref{alg:offline-markov}, demonstrating the ``implicit pessimism'' property.
We then analyze the convergence of SOS-MD (Algorithm~\ref{algo:seq_spermd_asym}) and derive the final total error bound, which accounts for both statistical and optimization errors.

\subsection{Statistical Efficiency}
We begin by bounding the sub-optimality of the policy learned by Algorithm~\ref{alg:offline-markov}. This analysis decouples the statistical error from the optimization error, focusing exclusively on the sub-optimality arising from finite sampling and function approximation. We adopt the \emph{Unilateral Concentrability} condition proposed by \citet{cui2022offline}. This condition is recognized as the minimal coverage requirement for offline learning in Markov games in the worst case. 

Following~\citet{ye2024online,zhao2025logarithmic}, we adopt the $D^2$-divergence as a function-class-aware pointwise extrapolation factor.

\begin{definition}[$D^2$-divergence]
\label{def:d2}
Let $\mu = \{\mu_h\}_{h=1}^H$ denote the underlying distribution of the offline dataset $\mathcal{D}$, where $\mu_h \in \Delta(\mathcal{S} \times \mathcal{A}_1 \times \mathcal{A}_2)$ is the marginal state-action distribution at step $h$ induced by the behavior policy. The $D^2$-divergence at $(s, a_1, a_2)$ relative to $\mu_h$ is defined as
\begin{align}
&\textstyle D^2_{\mathcal{Q}}\bigl((s, a_1, a_2);\, \mu_h\bigr) \\
\coloneqq&\sup_{\substack{Q_1, Q_2 \in \mathcal{Q}\\ \mathbb{E}_{\mu_h}[(Q_1 - Q_2)^2] > 0}}\, \frac{\bigl(Q_1(s, a_1, a_2) - Q_2(s, a_1, a_2)\bigr)^2}{\mathbb{E}_{(s', a_1', a_2') \sim \mu_h}\!\bigl[(Q_1 - Q_2)^2\bigr]},
\end{align}
with the convention $D^2_{\mathcal{Q}}((s, a_1, a_2);\, \mu_h) := 0$ when the supremum is over the empty set. The $D^2$-divergence quantifies how much pointwise discrepancy between two functions in $\mathcal{Q}$ can be amplified at $(s, a_1, a_2)$ relative to the expected discrepancy under $\mu_h$.
\end{definition}

\begin{assumption}[Unilateral Concentrability]
\label{ass:concentrability}
Let $\Pi = \Pi_1 \times \Pi_2$ be the class of joint policies. We define the set of \emph{unilateral deviation policies} $\Pi_{\mathrm{uni}} \subset \Pi$ as the union of policy pairs where at least one player adheres to the Nash equilibrium $\pi^*$:
\[
    \Pi_{\mathrm{uni}} \;\coloneqq\; \bigl(\{\pi_1^*\} \times \Pi_2\bigr) \cup \bigl(\Pi_1 \times \{\pi_2^*\}\bigr).
\]
There exists a constant $C_{\mathrm{uni}} \ge 1$ such that, for all $h \in [H]$ and every player $i \in \{1, 2\}$,
\[
    \sup_{\pi_i \in \Pi_i}\, \mathbb{E}_{(s, a_1, a_2)\, \sim\, d_h^{\pi_i \times \pi^*_{-i}}}\!\bigl[ D^2_{\mathcal{Q}}\bigl((s, a_1, a_2);\, \mu_h\bigr) \bigr] \;\le\; C_{\mathrm{uni}},
\]
where $d_h^{\pi'}$ denotes the visitation distribution at step $h$ induced by $\pi' \in \Pi_{\mathrm{uni}}$.
\end{assumption}

To quantify the scale of the regularized value functions, we define $\alpha \coloneqq \inf \{ \pi_{\mathrm{ref},h,i}(a|s) \mid \pi_{\mathrm{ref},h,i}(a|s) > 0 \}$ as the minimum positive probability mass of the reference policy.
This definition applies to  arbitrary (including deterministic) reference policies, as it analyzes the policy within the support of $\pi_{\mathrm{ref}}$. In our algorithms, the KL regularization term forces the learned policy $\hat{\pi}$ to assign zero mass to any action where $\pi_{\mathrm{ref}}$ vanishes. In practice, standard remedies such as reference smoothing or regularizer clipping \citep{geist2019theory, nayak2025achieving} can also be applied to ensure well-defined KL divergence. Throughout our analysis, the dependency on $\alpha$ scales as $\log(1/\alpha)$, which is consistent with standard results in KL-regularized RL \citep{xie2024exploratory,zhang2025iterative}.



With these conditions established, we now present our main statistical result regarding the finite-sample complexity of Algorithm~\ref{alg:offline-markov}.


\begin{theorem}
\label{thm:stat_bound}
Let $\hat{\pi}$ be the policy output by Algorithm~\ref{alg:offline-markov}. Suppose Assumption~\ref{ass:concentrability} holds and the regularization parameter satisfies
\begin{equation}
\eta \;\le\; \frac{1}{4 H^2 \bigl(1 + \eta^{-1}\log(1/\alpha)\bigr)}.
\label{eq:eta-side-condition}
\end{equation}
Then with probability at least $1-\delta$, the duality gap of $\hat{\pi}$ satisfies:
\begin{equation}
    \mathrm{Gap}(\hat{\pi}) \;\le\; \widetilde{\mathcal{O}}\!\left( \frac{(\eta H^5 + \eta^3 H^9)\, C_{\mathrm{uni}} \log(|\mathcal{Q}|/\delta)}{n} \right).
\end{equation}
Setting $\lambda := 1 + \eta^{-1}\log(1/\alpha)$, the side condition~\eqref{eq:eta-side-condition} forces $\eta \le 1/(4H^2\lambda)$, so the $\eta H^5$ term dominates the $\eta^3 H^9$ term and the bound simplifies to $\widetilde{\mathcal{O}}\bigl(\eta H^5 C_{\mathrm{uni}} \log(|\mathcal{Q}|/\delta) / n\bigr)$.
\end{theorem}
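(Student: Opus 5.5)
We plan to split the duality gap into its two one-sided parts and handle each by a performance-difference argument in the regularized game. By symmetry it is enough to bound
\[
\mathbb{E}_{\rho}\bigl[V_1^{\dagger,\hat\pi_2}(s_1)-V_1^{\star}(s_1)\bigr].
\]
Let $\tilde\pi_1$ be Player 1's regularized best response to $\hat\pi_2$. Applying the performance-difference identity to $V^{\tilde\pi_1,\hat\pi_2}$ against the estimated values $\hat V_h$ gives a sum over $h$ of expectations under $d_h^{\tilde\pi_1\times\hat\pi_2}$. Each summand has two parts: a Bellman error $(\mathcal{T}_h\hat V_{h+1}-\hat Q_h)(s,a)$ and a stage-game term. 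The stage-game term compares the regularized objective of $\tilde\pi_{h,1}$ against $\hat\pi_{h,2}$ with the equilibrium value of $\hat Q_h$. This visitation distribution is not of unilateral form $\pi_1\times\pi_2^\ast$, so we will instead chain through $\pi^\star$. We write
\[
V^{\dagger,\hat\pi_2}-V^{\star} = \bigl(V^{\dagger,\hat\pi_2}-V^{\dagger,\pi_2^\star}\bigr)+\bigl(V^{\dagger,\pi^\star_2}-V^\star\bigr).
\]
The second term is zero. The first term is controlled by how far $\hat\pi_2$ is from $\pi_2^\star$, measured along distributions where Player 1 best responds and Player 2 is near $\pi_2^\star$.

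The fast rate comes from the strong concavity/convexity that KL regularization induces in the stage game. We will use the Gibbs characterization \eqref{eq:gibbs_response}. At each $s$, $\hat\pi_h$ is the regularized equilibrium of $\hat Q_h$ and $\pi^\star_h$ is the regularized equilibrium of $Q^\star_h$. Both are smooth (softmax-type) functions of the payoff matrix with Lipschitz constant $O(\eta)$. A second-order expansion of the regularized stage value around the equilibrium then shows the following: the loss Player 2 incurs from playing $\hat\pi_{h,2}$ instead of $\pi^\star_{h,2}$, against a best-responding Player 1, is at most $O(\eta)\,\mathbb{E}_{a\sim\pi_1\times\pi^\star_2}\bigl[(\hat Q_h-Q^\star_h)^2\bigr]$, maximized over nearby $\pi_1$. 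The linear term vanishes by first-order optimality. Because of this cancellation the per-step error is quadratic rather than linear in the estimation error, which gives $1/n$ instead of $1/\sqrt n$.

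The quadratic errors will be propagated backward. We have $\hat Q_h-Q^\star_h = (\hat Q_h-\mathcal{T}_h\hat V_{h+1}) + P_h(\hat V_{h+1}-V^\star_{h+1})$. The value difference $\hat V_{h+1}-V^\star_{h+1}$ is again a difference of regularized equilibrium values, which is $1$-Lipschitz in the sup-norm of the payoff. Since the regularized values are bounded by $H(1+\eta^{-1}\log(1/\alpha))=H\lambda$, recursion over $H$ steps produces the $H$-polynomial factors. The side condition $\eta\le 1/(4H^2\lambda)$ will be used to keep the ratios between policies (softmax temperature times value range) bounded by a constant. This is needed so the distributions over $\pi_1$ near the best response remain comparable to unilateral ones, and so the higher-order remainder, the $\eta^3H^9$ term, stays dominated.

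For the statistics, least squares with completeness and a union bound over $\mathcal{Q}$ and $h$ gives
\[
\mathbb{E}_{\mu_h}\bigl[(\hat Q_h-\mathcal{T}_h\hat V_{h+1})^2\bigr]\lesssim H^2\lambda^2\log(H|\mathcal{Q}|/\delta)/n
\]
with probability $1-\delta$. We transfer this to $d_h^{\pi_1\times\pi^\star_2}$ pointwise through Definition~\ref{def:d2}, taking both functions in $\mathcal{Q}$ by completeness, and then average using Assumption~\ref{ass:concentrability}, which costs a factor $C_{\mathrm{uni}}$. Collecting terms should give $(\eta H^5+\eta^3H^9)C_{\mathrm{uni}}\log(|\mathcal{Q}|/\delta)/n$.

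The main obstacle is the second-order stage-game bound under the correct distribution. The one-sided gap naturally lives on $d^{\tilde\pi_1\times\hat\pi_2}$, but coverage is only assumed for $\pi_1\times\pi_2^\star$. The $\hat Q$ errors are also propagated errors from later steps, not pure regression errors. My first attempt will be a change of measure at each step. The ratio $d^{\tilde\pi_1\times\hat\pi_2}/d^{\tilde\pi_1\times\pi_2^\star}$ can be bounded using the fact that $\hat\pi_2$ and $\pi^\star_2$ are both Gibbs policies of value-bounded payoffs, with log-ratio at most $2\eta H$ per step, hence $O(1)$ overall under the side condition. If that fails, I would fall back to an induction on $h$ that bounds $\|\hat V_h-V^\star_h\|$ in $L^2$ under unilateral distributions directly.
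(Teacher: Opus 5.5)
\textbf{There is a genuine gap in the backward-propagation step.} You bound $\hat V_{h+1}(s')-V^\star_{h+1}(s')$ using the $1$-Lipschitzness of the regularized minimax value in the sup-norm of the payoff, i.e.\ by $\max_{a_1,a_2}|\hat Q_{h+1}-Q^\star_{h+1}|(s',a_1,a_2)$. When you iterate this, the regression error at every later step is evaluated at a maximizing \emph{joint} action, and later states are reached through such joint deviations. Controlling that requires coverage of policies in which both players deviate, which Assumption~\ref{ass:concentrability} does not provide.

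Density ratios do not rescue this. Replacing a Gibbs action distribution by a Dirac mass costs up to $1/\min_a\pi^\star_h(a|s)\approx 1/\alpha$ for the uncovered player at each such step. That cost is polynomial in $1/\alpha$ and compounds over steps, whereas the theorem only hides polylogarithmic factors.

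The missing idea is a non-expansion of the value error in an equilibrium-averaged norm rather than the sup-norm. The paper's saddle-point sandwich (Lemma~\ref{lem:V-non-expansion-saddle}) gives, pointwise in $s'$,
$\mathbb{E}_{\pi_1^\star\times\hat\pi_2}[\hat Q_{h+1}-Q^\star_{h+1}]\le\hat V_{h+1}-V^\star_{h+1}\le\mathbb{E}_{\hat\pi_1\times\pi_2^\star}[\hat Q_{h+1}-Q^\star_{h+1}]$.
Propagated errors are therefore always averaged under policies within constant density ratio of $\pi^\star$. Only the current-step deviation of a single player survives, and that is exactly what the unilateral $D^2$ coefficient controls. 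Your fallback ($L^2$ induction under unilateral distributions) needs precisely this inequality. Also, unroll the resulting linear recursion and apply Cauchy--Schwarz once, rather than squaring at every step; otherwise you pick up $2^H$-type constants.

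\textbf{The link from the one-sided gap to stage-wise quadratic losses is also missing as written.} Your displayed decomposition is vacuous, since its second term is zero. The expansion against $\hat V_h$ that you start from keeps $\mathcal{T}_h\hat V_{h+1}-\hat Q_h$ \emph{linearly}, which only gives $1/\sqrt n$. The first-order cancellation you invoke holds for the stage game with payoff $Q^\star_h$, not for one built on the best responder's continuation $V^{\dagger,\hat\pi_2}_{h+1}$. The paper handles this by splitting into Gap~1 and Gap~2, writing each as an exact KL sum, and controlling the best-response continuation through Lemma~\ref{lem:V-stability} and hybrid density-ratio bounds; this is the origin of its $\eta^3H^9$ term.

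Your idea does go through, and more simply, if you expand against $V^\star$. Let $J_s(\mu,\nu;Q)$ be the regularized stage objective and $f_{h,s}(\nu):=\max_\mu J_s(\mu,\nu;Q^\star_h)$. The performance-difference identity gives
$\mathbb{E}_\rho[V_1^{\tilde\pi_1,\hat\pi_2}-V_1^\star]=\sum_h\mathbb{E}_{d_h^{\tilde\pi_1,\hat\pi_2}}[J_s(\tilde\pi_{1,h},\hat\pi_{2,h};Q^\star_h)-V^\star_h(s)]\le\sum_h\mathbb{E}_{d_h^{\tilde\pi_1,\hat\pi_2}}[f_{h,s}(\hat\pi_{2,h})-f_{h,s}(\pi^\star_{2,h})]$.
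Each summand equals $\eta^{-1}\mathrm{KL}(\hat\pi_{2,h}\|\pi^\star_{2,h})+\eta^{-1}\mathrm{KL}(\pi^\star_{1,h}\|\mu^{\mathrm{br}}_{h,s})$, where $\mu^{\mathrm{br}}_{h,s}$ is the stage best response to $\hat\pi_{2,h}$ under $Q^\star_h$. By Lemma~\ref{lem:stability} this is $O(\eta)$ times squared logit errors, and $d_h^{\tilde\pi_1,\hat\pi_2}$ transfers to $d_h^{\pi^\star}$ at constant cost.

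Two smaller corrections:
\begin{enumerate}
\item The resulting error involves both unilateral branches (the paper's $\xi_h^2$), not only $\pi_1\times\pi_2^\star$.
\item The per-step Gibbs log-ratio is $O(\eta H\lambda)$, not $O(\eta H)$. The factor $\lambda$ is what~\eqref{eq:eta-side-condition} is there to absorb.
\end{enumerate}
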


Notably, this result establishes a fast $\widetilde{\mathcal{O}}(1/n)$ rate, improving upon the standard $\widetilde{\mathcal{O}}(1/\sqrt{n})$ rate in offline Markov games (\citet{cui2022offline}, \citet{zhong2022pessimistic}, \citet{zhang2023offline},  see Table~\ref{tab:related}). The bound scales linearly with the unilateral concentrability $C_{\mathrm{uni}}$ and logarithmically with $|\mathcal{Q}|$, confirming that unilateral coverage of the function-class-aware $D^2$-divergence form suffices for efficient learning with function approximation. The dependence on $\alpha$ remains poly-logarithmic throughout the regime carved out by~\eqref{eq:eta-side-condition}, which preserves our key qualitative improvement over prior offline Markov-game results.

\textit{Proof Sketch.}
The proof of Theorem~\ref{thm:stat_bound} relies on a novel \emph{independent} decomposition of the regularized duality gap that bypasses pessimism. We analyze the exploitability of Player 1 (symmetric for Player 2) by splitting the error relative to the true Nash equilibrium $\pi^*$:
\begin{align}
    &\underbrace{J(\pi^\dagger, \hat{\pi}_2) - J(\pi^*_1, \pi^*_2)}_{\text{Total Error}}\\
   =& \underbrace{\left( J(\pi^\dagger, \hat{\pi}_2) - J(\pi^*_1, \hat{\pi}_2) \right)}_{\text{\textbf{Gap 1: Optimization}}}
    +
    \underbrace{\left( J(\pi^*_1, \hat{\pi}_2) - J(\pi^*_1, \pi^*_2) \right)}_{\text{\textbf{Gap 2: Evaluation}}},
\end{align}
where $\pi^\dagger$ is the best response to $\hat{\pi}_2$. The two gaps are bounded \emph{separately} (Lemmas~\ref{lem:evaluation_gap} and \ref{lem:gap1_bound} below) and each ends at the same cumulative squared $Q$-estimation error term.

\begin{lemma}
\label{lem:stability}
For any state $s \in \mathcal{S}$ and step $h \in [H]$, the $L_1$-distance between the learned policy $\hat{\pi}_h$ and the regularized Nash equilibrium $\pi^*_h$ is controlled by the estimation error of the Q-function:
\begin{align}
    \|\hat{\pi}_h(\cdot|s) - \pi^*_h(\cdot|s)\|_1 \le 2\eta \|\hat{Q}_h(s, \cdot) - Q^*_h(s, \cdot)\|_\infty.
\end{align}
\end{lemma}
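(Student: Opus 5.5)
The plan is to treat the stage game at $(s,h)$ as a strongly-monotone variational inequality and compare the first-order optimality conditions of the two regularized equilibria: $\hat\pi_h$ for payoff $\hat Q_h(s,\cdot)$ and $\pi^*_h$ for payoff $Q^*_h(s,\cdot)$. Here $Q^*_h$ is the regularized $Q$-function of $\pi^*$, so that $\pi^*_h(\cdot|s)$ is the regularized equilibrium of the matrix game $Q^*_h(s,\cdot)$. Fix $s$ and write $\Delta := \hat Q_h(s,\cdot,\cdot) - Q^*_h(s,\cdot,\cdot)$ as an $|\mathcal{A}_1|\times|\mathcal{A}_2|$ matrix, with $\varepsilon := \|\Delta\|_\infty$. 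Let $\psi(\pi_1,\pi_2) := \mathrm{KL}(\pi_1\|\pi^{\mathrm{ref}}_{h,1}) + \mathrm{KL}(\pi_2\|\pi^{\mathrm{ref}}_{h,2})$. For a payoff matrix $Q$, define the monotone operator $F_Q(\pi) := (-Q\pi_2,\; Q^\top \pi_1)$. Both players minimize, so the objective for Player 1 is negated.

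\textbf{Step 1 (optimality conditions).} The KL terms force both equilibria to have full support on $\mathrm{supp}(\pi^{\mathrm{ref}})$. On the relative interior of that face $\psi$ is differentiable, so each equilibrium satisfies the VI condition
\[
\langle F_Q(\bar\pi) + \eta^{-1}\nabla\psi(\bar\pi),\, \pi - \bar\pi\rangle \ge 0
\]
for all $\pi$ supported on $\mathrm{supp}(\pi^{\mathrm{ref}})$. In fact it holds with equality up to the normalization constants, which are killed because $\pi-\bar\pi$ sums to zero in each block. Apply this to $\hat\pi$ with test point $\pi^*$, and to $\pi^*$ with test point $\hat\pi$, then add the two inequalities. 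This gives
\[
\langle F_{\hat Q}(\hat\pi) - F_{Q^*}(\pi^*),\, \pi^* - \hat\pi\rangle \;\ge\; \eta^{-1}\langle \nabla\psi(\hat\pi)-\nabla\psi(\pi^*),\, \hat\pi-\pi^*\rangle.
\]

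\textbf{Step 2 (strong monotonicity of the regularizer).} The right-hand side equals $\eta^{-1}$ times the symmetrized KL, $\mathrm{KL}(\hat\pi_i\|\pi^*_i)+\mathrm{KL}(\pi^*_i\|\hat\pi_i)$, summed over $i$. By Pinsker this is at least $\eta^{-1}(d_1^2+d_2^2)$, where $d_i := \|\hat\pi_{h,i}(\cdot|s)-\pi^*_{h,i}(\cdot|s)\|_1$.

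\textbf{Step 3 (the bilinear part cancels, the perturbation is controlled).} Split
\[
F_{\hat Q}(\hat\pi)-F_{Q^*}(\pi^*) = \bigl[F_{\hat Q}(\hat\pi)-F_{Q^*}(\hat\pi)\bigr] + \bigl[F_{Q^*}(\hat\pi)-F_{Q^*}(\pi^*)\bigr].
\]
The second bracket is a skew-symmetric linear map applied to $\hat\pi-\pi^*$, so its inner product with $\pi^*-\hat\pi$ vanishes; this is exactly where zero-sum structure is used. The first bracket equals $(-\Delta\hat\pi_2,\;\Delta^\top\hat\pi_1)$. By Hölder, its inner product with $\pi^*-\hat\pi$ is at most $\|\Delta\hat\pi_2\|_\infty d_1 + \|\Delta^\top\hat\pi_1\|_\infty d_2 \le \varepsilon(d_1+d_2)$. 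The bound improves to $\tfrac{\varepsilon}{2}(d_1+d_2)$ using that $\pi^*_i-\hat\pi_i$ sums to zero, but the cruder bound suffices.

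\textbf{Step 4 (conclude).} Combining and using $(d_1+d_2)^2\le 2(d_1^2+d_2^2)$ gives $\frac{(d_1+d_2)^2}{2\eta}\le \varepsilon(d_1+d_2)$, hence $d_1+d_2\le 2\eta\varepsilon$. Finally, for product distributions,
\[
\|\hat\pi_{h,1}\times\hat\pi_{h,2}-\pi^*_{h,1}\times\pi^*_{h,2}\|_1 \le d_1+d_2,
\]
by a telescoping/coupling bound. Whether $\|\hat\pi_h-\pi^*_h\|_1$ means the joint or the sum of marginals, the claim follows.

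\textbf{Main obstacle.} The delicate point is Step 1: justifying the VI/first-order conditions when $\pi^{\mathrm{ref}}$ is not full support. I would handle it by restricting the simplices to $\mathrm{supp}(\pi^{\mathrm{ref}}_{h,i}(\cdot|s))$. Any policy placing mass outside this support has infinite KL and is never an equilibrium. Inside the support, the Gibbs form \eqref{eq:gibbs_response} shows that both equilibria lie in the relative interior, so the gradients are finite. Everything else is routine convex-analysis bookkeeping.
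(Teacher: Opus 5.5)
Your proof is correct and follows the paper's route: strong monotonicity of the KL-regularized stage-game operator, cancellation of the skew-symmetric bilinear part, and H\"older on the payoff perturbation. Your summed VI inequalities and symmetrized-KL/Pinsker step make rigorous what the paper simply asserts as ``roots'' and ``$\tfrac{\eta^{-1}}{2}$-strong monotonicity.'' The paper also uses skew-symmetry to evaluate the perturbation at $\pi^*$ rather than $\hat\pi$; that is not needed for this lemma, but it yields the Nash-averaged error $\mathcal{E}_{h,s}$ the paper relies on later.

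One aside is wrong, though you do not use it. Centering via $\sum_a(\pi^*_{h,i}-\hat\pi_{h,i})(a\mid s)=0$ only replaces $\|\Delta\hat\pi_2\|_\infty$ by half its oscillation, and that can still equal $\varepsilon$ (e.g.\ $\Delta(a_1,a_2)=\pm\varepsilon$ depending only on $a_1$). So the claimed improvement to $\tfrac{\varepsilon}{2}(d_1+d_2)$ does not hold in general.
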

Its proof is in Appendix~\ref{app:proof_stability} and follows from the strong monotonicity of the KL-regularized gradient map. The appendix also establishes the squared form $\|\hat{\pi}_h(\cdot|s) - \pi^*_h(\cdot|s)\|_1^2 \le 4\eta^2 \cdot \xi_h^2(s)$ (see~\eqref{eq:stability-squared-refined}), where $\xi_h^2(s)$ is the \emph{single-player Nash-averaged squared $Q$-error}
\begin{align}
\xi_h^2(s) := &\max\Bigl(\,\max_{a_1}\,\mathbb{E}_{a_2 \sim \pi^*_{h,2}(\cdot|s)}\!\bigl[(\hat Q_h - Q^*_h)^2(s, a_1, a_2)\bigr],\\
& \max_{a_2}\,
\mathbb{E}_{a_1 \sim \pi^*_{h,1}(\cdot|s)}\!\bigl[(\hat Q_h - Q^*_h)^2(s, a_1, a_2)\bigr]\,\Bigr).
\label{eq:xi-defn-main}
\end{align}
Each branch of the $\max$ is a Player-$i$ deterministic deviation paired with the opponent at Nash, so $\xi_h^2$ lies in $\Pi_{\mathrm{uni}}$-form and is directly compatible with Assumption~\ref{ass:concentrability}; it is the form invoked by Lemmas~\ref{lem:evaluation_gap} and~\ref{lem:gap1_bound}.
Using this stability, we bound the optimization gap (Gap 1) and the evaluation gap (Gap 2) independently, each in terms of the cumulative squared $Q$-estimation error.

\begin{lemma}[Evaluation Gap via Single-Player Nash-Averaged Squared Error]
\label{lem:evaluation_gap}
Under the side condition~\eqref{eq:eta-side-condition}, the evaluation gap is bounded \emph{linearly in $\eta$} by the cumulative single-player Nash-averaged squared $Q$-estimation error $\xi_h^2$ (defined in~\eqref{eq:xi-defn-main}):
\begin{align}
    \underbrace{J(\pi^*_1, \hat{\pi}_2) - J(\pi^*_1, \pi^*_2)}_{\mathrm{Gap}~2}
    \;\le\;
    \mathcal{O}(\eta)
    \cdot \sum_{h=1}^H \mathbb{E}_{s \sim d_h^{\pi^*_1, \hat{\pi}_2}}\!\bigl[\xi_h^2(s)\bigr].
\end{align}
\end{lemma}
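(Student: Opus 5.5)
The plan is a performance-difference decomposition for the regularized value, with Player 1 held at $\pi^*_1$ and Player 2 deviating from $\pi^*_2$ to $\hat\pi_2$. We then show that each per-step term is second order in $\hat\pi_{h,2}-\pi^*_{h,2}$, and hence, through Lemma~\ref{lem:stability}, of order $\eta\,\xi_h^2$.

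\textbf{Step 1: telescoping.} For fixed $\pi^*_1$, the regularized value $V^{\pi^*_1,\pi_2}$ is the value of a single-agent (minimizing) KL-regularized MDP for Player 2. The regularized performance difference lemma gives
\begin{align}
J(\pi^*_1,\hat\pi_2)-J(\pi^*_1,\pi^*_2)=\sum_{h=1}^H \mathbb{E}_{s\sim d_h^{\pi^*_1,\hat\pi_2}}\bigl[\Phi_h(s,\hat\pi_{h,2})-\Phi_h(s,\pi^*_{h,2})\bigr],
\end{align}
where
\begin{align}
\Phi_h(s,p):=\mathbb{E}_{a_1\sim\pi^*_{h,1},\,a_2\sim p}[Q^*_h(s,a_1,a_2)]+\eta^{-1}\mathrm{KL}(p\|\pi^{\mathrm{ref}}_{h,2}).
\end{align}
Here $Q^*_h=Q^{\pi^*}_h$ is the Nash Q-function. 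At each step, $\pi^*_{h,2}$ is exactly the minimizer of $\Phi_h(s,\cdot)$: it is the Gibbs distribution $\propto\pi^{\mathrm{ref}}_{h,2}\exp(-\eta\,\mathbb{E}_{a_1\sim\pi^*_{h,1}}Q^*_h)$.

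\textbf{Step 2: quadratic growth.} Since $\Phi_h(s,\cdot)$ is a linear term plus $\eta^{-1}$ times a KL, and $\pi^*_{h,2}$ is its minimizer, the excess equals the Bregman divergence exactly:
\begin{align}
\Phi_h(s,\hat\pi_{h,2})-\Phi_h(s,\pi^*_{h,2})=\eta^{-1}\mathrm{KL}(\hat\pi_{h,2}\|\pi^*_{h,2}).
\end{align}
This is an identity rather than a bound, because the KL of $p$ to a Gibbs distribution differs from the objective only by a constant.

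Both $\hat\pi_{h,2}$ and $\pi^*_{h,2}$ are Gibbs distributions relative to the same reference, with exponents $-\eta\,\hat q_2$ and $-\eta\,q^*_2$, where $\hat q_2=\mathbb{E}_{a_1\sim\hat\pi_{h,1}}\hat Q_h$ and $q^*_2=\mathbb{E}_{a_1\sim\pi^*_{h,1}}Q^*_h$. The KL between two such Gibbs distributions is a log-partition Bregman divergence. Its Hessian is a covariance of a random variable with values in $[-\eta\,\mathrm{range},\eta\,\mathrm{range}]$, so
\begin{align}
\mathrm{KL}(\hat\pi_{h,2}\|\pi^*_{h,2})\le \tfrac{\eta^2}{2}\,\|\hat q_2-q^*_2\|_\infty^2
\end{align}
up to constants. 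Dividing by $\eta$ leaves $\mathcal{O}(\eta)\|\hat q_2-q^*_2\|_\infty^2$.

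\textbf{Step 3: reduce to $\xi_h^2$.} Split the marginal error as
\begin{align}
\hat q_2-q^*_2=\mathbb{E}_{a_1\sim\pi^*_{h,1}}[\hat Q_h-Q^*_h]+\mathbb{E}_{a_1\sim\hat\pi_{h,1}-\pi^*_{h,1}}[\hat Q_h].
\end{align}
By Jensen, the first piece squared is at most $\max_{a_2}\mathbb{E}_{\pi^*_{h,1}}(\hat Q_h-Q^*_h)^2\le\xi_h^2(s)$. The second piece is bounded by $H\|\hat\pi_{h,1}-\pi^*_{h,1}\|_1$. The squared stability estimate~\eqref{eq:stability-squared-refined} then gives a contribution of $4\eta^2H^2\xi_h^2(s)$. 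The side condition~\eqref{eq:eta-side-condition} ensures $\eta^2H^2\lesssim 1$, so this term is also $\mathcal{O}(\xi_h^2)$.

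We use the range of $\hat Q_h$ and the $\log(1/\alpha)$-controlled scale of the regularized values, which is the role of the factor $\lambda$ in~\eqref{eq:eta-side-condition}. Combining the per-step bounds $\mathcal{O}(\eta)\xi_h^2(s)$ and summing over $h$ under $d_h^{\pi^*_1,\hat\pi_2}$ gives the lemma.

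\textbf{Main obstacle.} The main obstacle is Step 2's curvature bound. Bounding $\mathrm{KL}$ by a squared sup-norm requires the log-partition smoothness to hold uniformly. The cleanest route is the direct identity
\begin{align}
\mathrm{KL}(\hat p\|p^*)=\log\mathbb{E}_{\hat p}e^{\eta(\hat q-q^*)}-\eta\,\mathbb{E}_{\hat p}(\hat q-q^*),
\end{align}
combined with Hoeffding's lemma, which yields $\eta^2\|\hat q-q^*\|_\infty^2/8$ without needing any range assumption beyond the sup-norm itself.

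A secondary subtlety is that $\hat\pi_{h,2}$ uses the cross-term with $\hat\pi_{h,1}$ and not $\pi^*_{h,1}$. This is why the stability lemma is needed in Step 3, and it is where the $H^2$ factor, absorbed by the side condition, enters.
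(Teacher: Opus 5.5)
Your proof is correct. Steps 1--2 coincide with the paper's argument. The paper also telescopes $\mathrm{Gap}~2$ into $\eta^{-1}\sum_h \mathbb{E}_{s\sim d_h^{\pi_1^*,\hat\pi_2}}[\mathrm{KL}(\hat\pi_{h,2}\|\pi^*_{h,2})]$ via the exact Bregman identity. It then bounds each KL by $\tfrac12\|\hat z_h - z_h^*\|_\infty^2$ using smoothness of the log-partition function.

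The real difference is the pivot in your Step 3. The paper splits the logit gap as $\mathbb{E}_{a_1\sim\hat\pi_{1,h}}[\hat Q_h - Q^*_h] + \mathbb{E}_{a_1\sim\hat\pi_{1,h}-\pi^*_{1,h}}[Q^*_h]$. Since $\xi_h^2$ averages over $\pi^*_{1,h}$, it must then change measure from $\hat\pi_{1,h}$ to $\pi^*_{1,h}$. It does this with the per-state softmax-ratio bound $\hat\pi_{1,h}/\pi^*_{1,h}\le \exp(4\eta H\lambda)$, which needs the side condition to stay $\le e$.

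Your split places the estimation error under $\pi^*_{1,h}$ from the start. It therefore lands on the Player-2-deviation branch of $\xi_h^2$ by Jensen alone. The only price is that the policy-deviation term carries $\hat Q_h$ rather than $Q^*_h$, so you need $\|\hat Q_h\|_\infty\le C_{\mathrm{val}}$, i.e. boundedness of $\mathcal{Q}$. The paper's route uses that same bound anyway, to control the logits of $\hat\pi_{1,h}$ in its transfer step.

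What your route buys: $\mathrm{Gap}~2\lesssim \eta\,(1+\eta^2C_{\mathrm{val}}^2)\sum_h\mathbb{E}_{s\sim d_h^{\pi_1^*,\hat\pi_2}}[\xi_h^2(s)]$ with no density-ratio argument at all. The side condition is then needed only to absorb the polynomial factor $\eta^2C_{\mathrm{val}}^2\le 1/(16H^2)$. By contrast, the paper's transfer constant grows exponentially in $\eta H\lambda$ without it. You also get a slightly better absolute constant.

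Two small slips, neither affecting the conclusion. First, the range of $\hat Q_h$ is $C_{\mathrm{val}}=H(1+\eta^{-1}\log(1/\alpha))$, not $H$. So the deviation contribution is $4\eta^2C_{\mathrm{val}}^2\,\xi_h^2(s)$, which is still absorbed since $\eta C_{\mathrm{val}}\le 1/(4H)$. Second, Hoeffding's lemma gives $\tfrac{\eta^2}{8}\,\mathrm{osc}(\hat q_2-q_2^*)^2\le \tfrac{\eta^2}{2}\|\hat q_2-q_2^*\|_\infty^2$, not $\tfrac{\eta^2}{8}\|\hat q_2-q_2^*\|_\infty^2$.
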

\begin{lemma}[Optimization Gap via Single-Player Nash-Averaged Squared Error]
\label{lem:gap1_bound}
Under the side condition~\eqref{eq:eta-side-condition}, the optimization gap is bounded by the cumulative single-player Nash-averaged squared $Q$-estimation error:
\begin{align}
    &\textstyle\underbrace{J(\pi^\dagger, \hat{\pi}_2) - J(\pi^*_1, \hat{\pi}_2)}_{\mathrm{Gap}~1}
    \le
    \mathcal{O}\!\bigl(\eta^3 H^4 (1+\eta^{-1}\log(1/\alpha))^2\bigr)\\
    &\textstyle\cdot \sum_{h=1}^H \mathbb{E}_{s \sim d_h^{\pi^*_1, \hat{\pi}_2}}\!\bigl[\xi_h^2(s) \bigr].
\end{align}
\end{lemma}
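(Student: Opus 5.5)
Since Player 1 is fixed at $\pi^*_1$, I treat Gap 1 as a single-agent problem for Player 1 in the MDP induced by $\hat\pi_2$. Define the regularized single-agent value $V_h^{\pi_1,\hat\pi_2}$, so that $\pi^\dagger$ is the optimal (soft-greedy) policy of this MDP. The gap $J(\pi^\dagger,\hat\pi_2)-J(\pi^*_1,\hat\pi_2)$ is then the suboptimality of $\pi^*_1$. The plan is to run a performance-difference argument for KL-regularized MDPs along the trajectory distribution of $\pi^*_1\times\hat\pi_2$. Since $\pi^\dagger_h(\cdot|s)\propto \pi^{\mathrm{ref}}_{h,1}\exp(\eta\, q^\dagger_h(s,\cdot))$ with $q^\dagger_h(s,a_1)=\mathbb{E}_{a_2\sim\hat\pi_{h,2}}[Q_h^{\pi^\dagger,\hat\pi_2}(s,a_1,a_2)]$, the regularized performance difference lemma gives
\begin{align}
J(\pi^\dagger,\hat\pi_2)-J(\pi^*_1,\hat\pi_2)=\eta^{-1}\sum_{h=1}^H \mathbb{E}_{s\sim d_h^{\pi^*_1,\hat\pi_2}}\bigl[\mathrm{KL}(\pi^*_{h,1}(\cdot|s)\,\|\,\pi^\dagger_h(\cdot|s))\bigr].
\end{align}
This identity is the source of the quadratic behaviour. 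The remaining work is to show that each per-state KL term is of order $\eta^4H^4\lambda^2\,\xi_h^2(s)$, where $\lambda=1+\eta^{-1}\log(1/\alpha)$.

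\textbf{Step 1 (logit comparison).} Both $\pi^*_{h,1}$ and $\pi^\dagger_h$ are Gibbs policies with the same prior. Indeed, $\pi^*_{h,1}\propto\pi^{\mathrm{ref}}_{h,1}\exp(\eta\, q^*_h)$ with $q^*_h(s,a_1)=\mathbb{E}_{a_2\sim\pi^*_{h,2}}Q^*_h$. For two such softmaxes, $\mathrm{KL}(\pi^*\|\pi^\dagger)\le \tfrac{\eta^2}{2}\,\mathrm{Var}$-type bounds hold, so in particular $\mathrm{KL}\le \eta^2\|q^*_h-q^\dagger_h\|_\infty^2/2$ up to constants (by smoothness of log-partition). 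It therefore suffices to bound $|q^*_h-q^\dagger_h|(s,a_1)$. I split it as
\begin{align}
q^\dagger_h-q^*_h=\mathbb{E}_{a_2\sim\hat\pi_{h,2}-\pi^*_{h,2}}[Q_h^{\pi^\dagger,\hat\pi_2}]+\mathbb{E}_{a_2\sim\pi^*_{h,2}}[Q_h^{\pi^\dagger,\hat\pi_2}-Q^*_h].
\end{align}
The first term is bounded by $\|\hat\pi_{h,2}-\pi^*_{h,2}\|_1\cdot\|Q\|_\infty$. The range of the regularized Q is $O(H\lambda)$, since the KL per step is at most $\log(1/\alpha)$ on the support of $\pi^{\mathrm{ref}}$. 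By the squared form of Lemma~\ref{lem:stability}, this term is $O(\eta H\lambda\,\xi_h(s))$.

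\textbf{Step 2 (propagating the next-step discrepancy).} The second term equals the expected difference $V_{h+1}^{\pi^\dagger,\hat\pi_2}-V^*_{h+1}$ at the next state. I bound it recursively: $|V^{\dagger}_{h+1}-V^*_{h+1}|$ is at most the sum of the gaps caused by replacing $\pi^*_{2}$ with $\hat\pi_2$ at later steps. Each such replacement contributes, via Step 1's first term, $O(\eta H\lambda\,\xi_{h'})$ (and a Player-1 best-response change only helps, by the envelope/optimality argument). Summing over at most $H$ future steps gives $|q^\dagger_h-q^*_h|\lesssim \eta H^2\lambda\,\sup$ of future $\xi$ terms. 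Squaring and multiplying by $\eta^2/\eta=\eta$ yields a per-state contribution of $\eta^3H^4\lambda^2\,\xi^2$ after Cauchy–Schwarz over the $H$ future steps. The side condition~\eqref{eq:eta-side-condition}, $\eta H^2\lambda\le 1/4$, is used so that the recursion contracts and constants stay bounded.

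\textbf{Main obstacle.} The main obstacle is Step 2. The future discrepancies appear under the distribution induced by $(\pi^\dagger,\hat\pi_2)$ and at arbitrary actions, whereas the target is an expectation of $\xi^2$ under $d^{\pi^*_1,\hat\pi_2}$. My first attempt would be to keep the errors as expectations of $\|\hat\pi_{h',2}-\pi^*_{h',2}\|_1$, which are bounded pointwise by $\xi_{h'}$ via Lemma~\ref{lem:stability}. I would then use the closeness of $\pi^\dagger$ to $\pi^*_1$ (which costs $\eta H\lambda$, absorbed by the side condition) to change measure back to $d^{\pi^*_1,\hat\pi_2}$, with the extra error reabsorbed into the left-hand side.
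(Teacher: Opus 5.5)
Up to the end of your Step~1 you follow the paper's proof. That includes the regularized performance-difference identity $\mathrm{Gap}~1=\eta^{-1}\sum_h\mathbb{E}_{d_h^{\pi_1^*,\hat\pi_2}}[\mathrm{KL}(\pi^*_{h,1}\|\pi^\dagger_h)]$ and the $\ell_\infty$-smoothness of the log-partition function. It also includes the split of $q^\dagger_h-q^*_h$ into a Player-2 policy-difference term, closed by the squared form of Lemma~\ref{lem:stability}, and a continuation term. Finally, your ``Player-1 change only helps'' is the paper's sandwich $0\le V^{\pi^\dagger,\hat\pi_2}_{h+1}-V^{*}_{h+1}\le V^{\pi^\dagger,\hat\pi_2}_{h+1}-V^{\pi^\dagger,\pi^*_2}_{h+1}$.

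The genuine gap is the change of measure you propose for Step~2. You bound the KL by $\tfrac{\eta^2}{2}\|q^\dagger_h-q^*_h\|_\infty^2$, so the continuation term must be controlled at the \emph{worst} $a_1$, with $a_2\sim\pi^*_{h,2}$. The state at step $h+1$ therefore comes from a hybrid in which Player~1 plays a Dirac mass at step $h$. Closeness of $\pi^\dagger$ to $\pi^*_1$ only repairs steps $h+1,\dots,t-1$. Converting the Dirac at step $h$ back to $\pi^*_{h,1}(\cdot|s)$ costs the ratio $1/\pi^*_{h,1}(a_1|s)$, which can be as large as $e^{2\eta H\lambda}/\alpha$. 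That is polynomial rather than logarithmic in $1/\alpha$. It is also a multiplicative density-ratio cost, not an additive error, so there is nothing to reabsorb into the left-hand side. No contraction is involved either: the side condition only serves to keep telescoped softmax ratios at most $e$.

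A smaller omission: swapping $\pi^*_{t,2}$ for $\hat\pi_{t,2}$ also changes Player~2's term $\eta^{-1}\mathrm{KL}(\cdot\|\pi^{\mathrm{ref}}_{t,2})$, which is not covered by your Step~1 first term. It needs a Lipschitz bound on the log-density-bounded class (the paper's $B_{\mathrm{reg}}$), which is of the same order $H\lambda$.

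The paper avoids the Dirac-to-softmax ratio by never returning to $d^{\pi_1^*,\hat\pi_2}$. It packages the argmax as a one-step deterministic deviation inside a Player-1 policy $\pi_1'(h)$. It then bridges only the Player-2 mismatch ($\hat\pi_2$ versus $\pi_2^*$ at steps other than $h$) via the telescoped softmax ratio $\le e$, so the step-$h$ Player-1 factor cancels. The result is bounded by $\sup_{\pi'\in\Pi_{\mathrm{uni}}}\mathbb{E}_{d_h^{\pi'}}[\xi_h^2]$. So the paper's proof, see \eqref{eq:gap1-final-Q-bound}, actually ends with this $\Pi_{\mathrm{uni}}$-supremum, which is weaker than the displayed $d_h^{\pi_1^*,\hat\pi_2}$ form.

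Your route can be repaired, and then yields the displayed form, by using the local rather than the $\ell_\infty$ smoothness bound. Set $\theta^*=\eta q^*_h(s,\cdot)+\log\pi^{\mathrm{ref}}_{h,1}(\cdot|s)$ and $v=\eta(q^\dagger_h-q^*_h)(s,\cdot)$. The KL is the Bregman divergence of the log-partition function, so $\mathrm{KL}(\pi^*_{h,1}\|\pi^\dagger_h)=\int_0^1(1-u)\,\mathrm{Var}_{\mathrm{Softmax}(\theta^*+uv)}(v)\,du$. Under \eqref{eq:eta-side-condition}, $\|v\|_\infty\le 2\eta H\lambda\le 1/(2H)$, so every intermediate softmax is within a factor $e$ of $\pi^*_{h,1}$. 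This gives
\[
\mathrm{KL}(\pi^*_{h,1}\|\pi^\dagger_h)\le \tfrac{e}{2}\,\eta^2\,\mathbb{E}_{a_1\sim\pi^*_{h,1}(\cdot|s)}\bigl[(q^\dagger_h-q^*_h)^2(s,a_1)\bigr].
\]
The continuation term is now averaged over $a_1\sim\pi^*_{h,1}$ and $a_2\sim\pi^*_{h,2}$, and the latter is within $e^{4\eta H\lambda}$ of $\hat\pi_{h,2}$. Your $\pi^\dagger\to\pi^*_1$ transfer on later steps then puts every error term under $d_t^{\pi_1^*,\hat\pi_2}$. Squared stability and the swap of the double sum give the $\mathcal{O}(\eta^3H^4\lambda^2)$ prefactor.
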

The proofs of Lemmas~\ref{lem:evaluation_gap} and \ref{lem:gap1_bound} are deferred to Appendices~\ref{app:proof_eval_gap} and \ref{app:proof_gap1_bound}, respectively. Both bounds end at the same cumulative squared $Q$-estimation error, which is the key feature of the bandit-style analysis: it enables Gap~1 and Gap~2 to be combined and controlled jointly via Bellman unrolling without circular dependencies. The \textit{squared} error scaling here is the mathematical source of the fast $\widetilde{\mathcal{O}}(1/n)$ rate, as opposed to the standard $\widetilde{\mathcal{O}}(1/\sqrt{n})$ rates associated with unregularized gaps.

Combining Lemmas~\ref{lem:evaluation_gap} and \ref{lem:gap1_bound} with a recursive Bellman error unrolling (Appendix~\ref{sec:final-bound}) completes the proof of Theorem~\ref{thm:stat_bound} (detailed in Appendix~\ref{appendix: stats}).
\subsection{Optimization Convergence}
Next, we analyze the practical instantiation, SOS-MD (Algorithm~\ref{algo:seq_spermd_asym}), where the equilibrium is approximated via $T$ steps of Mirror Descent. We quantify the optimization error of the obtained last-iterate policy $\pi^{(T)}$ relative to the idealized regularized equilibrium $\hat{\pi}$ analyzed in Algorithm~\ref{alg:offline-markov}.

We control the optimization error via mirror-descent KL convergence (Lemma~\ref{lem:opt_convergence}) combined with Pinsker's inequality. Two supporting regularity facts---that all SOS-MD iterates remain in a bounded log-density-ratio class relative to $\pi^{\mathrm{ref}}$ (Lemma~\ref{lem:log_linear_bound}), and that the regularized duality gap is Lipschitz in the $L_1$ policy distance over this class (Lemma~\ref{lem:gap_lipschitz})---are deferred to Appendices~\ref{app: lem log linear bound} and~\ref{app: lem gap lipschitz}.
\begin{lemma}[Last-Iterate KL Convergence]
\label{lem:opt_convergence}
Let $\hat{\pi}_h$ be the unique Nash equilibrium of the regularized game defined by $\hat{Q}_h$. With the time-varying stepsize $\gamma_t = \frac{2\eta}{t+2}$ in Algorithm~\ref{algo:seq_spermd_asym}, the optimization error satisfies, for every $h \in [H]$,
\begin{align}
    &\sup_{s \in \mathcal{S}}\,\mathrm{KL}\bigl(\hat{\pi}_h(\cdot|s) \,\big\|\, \pi^{(T)}_{h}(\cdot|s)\bigr)\\
    \;\le\;
    &\frac{36\,\eta^2 H^2 (1+\eta^{-1}\log(1/\alpha))^2}{T+1}.
\end{align}
\end{lemma}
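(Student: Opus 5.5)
We fix $h$ and $s$ and drop them from the notation. The update \eqref{mirror ascent}--\eqref{mirror descent} is exactly a mirror-descent step with KL mirror map on the strongly-convex–strongly-concave stage game
\[
f(\pi_1,\pi_2)=\pi_1^\top \hat Q_h \pi_2-\eta^{-1}\mathrm{KL}(\pi_1\|\piref_{h,1})+\eta^{-1}\mathrm{KL}(\pi_2\|\piref_{h,2}).
\]
Concretely, $\pi^{(t+1)}_1=\arg\max_{p}\{\gamma_t\langle p,\hat q^{(t)}_1\rangle-\gamma_t\eta^{-1}\mathrm{KL}(p\|\piref_1)-(1-\gamma_t\eta^{-1})\mathrm{KL}(p\|\pi^{(t)}_1)\}$, and symmetrically for Player 2. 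The plan is to prove a one-step contraction for the potential $\Phi_t:=\mathrm{KL}(\hat\pi_1\|\pi^{(t)}_1)+\mathrm{KL}(\hat\pi_2\|\pi^{(t)}_2)$, with $\hat\pi$ the unique regularized NE. Unrolling that contraction with $\gamma_t=2\eta/(t+2)$ should give the $O(1/T)$ rate.

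\textbf{Step 1 (one-step inequality).} We apply the standard three-point lemma for entropic mirror descent with comparator $\hat\pi_1$. Write $g_1^{(t)}=\hat q^{(t)}_1-\eta^{-1}\log(\pi^{(t)}_1/\piref_1)$ for the gradient of $f$ in $\pi_1$ at $\pi^{(t)}$. The lemma gives
\[
\mathrm{KL}(\hat\pi_1\|\pi^{(t+1)}_1)\le \mathrm{KL}(\hat\pi_1\|\pi^{(t)}_1)-\gamma_t\langle g^{(t)}_1,\hat\pi_1-\pi^{(t)}_1\rangle+\gamma_t^2\|g^{(t)}_1\|_\infty^2/2 .
\]
The analogous bound holds for Player 2 with the sign flipped. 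We then sum the two players' inequalities. The bilinear parts combine with the $\eta^{-1}$-strong convexity of KL, as in the relative strong monotonicity of the regularized gradient map also used in Lemma~\ref{lem:stability}. Together with optimality of $\hat\pi$, this yields
\[
-\langle g^{(t)},\hat\pi-\pi^{(t)}\rangle\le -\eta^{-1}\Phi_t .
\]
This uses the identity $\langle \nabla \mathrm{KL}(\cdot\|\piref)(x)-\nabla\mathrm{KL}(\cdot\|\piref)(y),x-y\rangle=\mathrm{KL}(x\|y)+\mathrm{KL}(y\|x)\ge \mathrm{KL}(y\|x)$. It gives $\Phi_{t+1}\le(1-\gamma_t\eta^{-1})\Phi_t+\gamma_t^2 G^2$, where $G$ bounds both $\|g^{(t)}_i\|_\infty$.

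\textbf{Step 2 (gradient bound).} This is where Lemma~\ref{lem:log_linear_bound} enters. All iterates satisfy $|\log(\pi^{(t)}_i/\piref_i)|\lesssim \eta H(1+\eta^{-1}\log(1/\alpha))$ on the support of $\piref$, since each iterate is a geometric mixture of $\piref$ and exponentials of $\hat q$. Also $\|\hat Q_h\|_\infty\lesssim H(1+\eta^{-1}\log(1/\alpha))$, because regularized values add at most $\eta^{-1}\log(1/\alpha)$ per step. Hence $G\le 3H(1+\eta^{-1}\log(1/\alpha))$, giving $G^2\le 9H^2\lambda^2$ with $\lambda=1+\eta^{-1}\log(1/\alpha)$. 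Since $\nabla f$ is only defined up to constants on the simplex, we should center $g$ to control the $\infty$-norm, which is harmless.

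\textbf{Step 3 (unrolling).} With $\gamma_t\eta^{-1}=2/(t+2)$ the recursion reads $\Phi_{t+1}\le\frac{t}{t+2}\Phi_t+\frac{4\eta^2G^2}{(t+2)^2}$. We multiply by $(t+1)(t+2)$ and telescope, noting that the $t=0$ coefficient vanishes so $\Phi_0$ drops out. This gives $(T)(T+1)\Phi_T\le\sum_{t<T}4\eta^2G^2\frac{t+1}{t+2}\le 4\eta^2G^2T$, hence $\Phi_T\le 4\eta^2G^2/(T+1)=36\eta^2H^2\lambda^2/(T+1)$. Uniformity in $s$ holds because every constant is state-independent, so the supremum over $s$ follows.

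\textbf{Main obstacle.} The delicate point is Step 1: obtaining the contraction factor $1-\gamma_t\eta^{-1}$ exactly with the KL in the orientation $\mathrm{KL}(\hat\pi\|\pi^{(t)})$. This requires checking that the geometric-mixture update is the exact prox step, so the three-point identity holds with the stated weights, and that the cross bilinear terms cancel in the zero-sum sum. If the exact constant fails, we would fall back on the relative strong convexity of $\mathrm{KL}(\cdot\|\piref)$ with respect to itself, and absorb constants into $G$.
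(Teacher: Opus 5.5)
Your proposal is correct and follows essentially the same route as the paper. The SOS-MD step is an entropic mirror-descent step with direction $\gamma_t\bigl(\hat q^{(t)}_{h,i}-\eta^{-1}\log(\pi^{(t)}_{h,i}/\piref_{h,i})\bigr)$ (sign-flipped for Player~2). The saddle-point property of $\hat\pi_h$, Lemma~\ref{lem:log_linear_bound}, and $\|\hat Q_h\|_\infty\le H(1+\eta^{-1}\log(1/\alpha))$ then give $\Phi_{t+1}\le(1-\gamma_t\eta^{-1})\Phi_t+9\gamma_t^2H^2(1+\eta^{-1}\log(1/\alpha))^2$. With $\gamma_t=2\eta/(t+2)$ this yields the same constant $36$: your $\tfrac{\gamma_t^2}{2}\|g\|_\infty^2$ remainder coincides here with the paper's Hoeffding term $\tfrac18\mathrm{osc}(\delta_t)^2$, and your weighted telescoping replaces its induction.

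One point in your favor: you combine the two players' bilinear parts into $f(\pi^{(t)}_1,\hat\pi_2)-f(\hat\pi_1,\pi^{(t)}_2)\le 0$. The paper's individual per-player brackets use $\hat q^{(t)}_{h,i}$ computed against the current iterate rather than against $\hat\pi_h$, so only their sum, not each bracket separately, is controlled by the saddle-point property.
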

The proof, in Appendix~\ref{app: lem opt convergence}, is a state-wise online mirror descent argument with the Hoeffding-type oscillation bound (Lemma~\ref{lem:omd_lemma_state}) applied within the log-linear-bounded class.

\begin{theorem}
\label{thm:opt_bound}
Let $\pi^{(T)}_{h}$ be the policy generated by Algorithm~\ref{algo:seq_spermd_asym} at step $h$ with stepsize $\gamma_t = \frac{2\eta}{t+2}$. The state-uniform $L_1$-distance to the exact regularized equilibrium $\hat{\pi}_h$ satisfies
\begin{equation}
    \sup_{s \in \mathcal{S}}\,\|\hat{\pi}_h(\cdot|s) - \pi^{(T)}_{h}(\cdot|s)\|_1
    \;\le\;
    \widetilde{\mathcal{O}}\!\left(\frac{\eta H}{\sqrt{T}}\right).
\end{equation}
\end{theorem}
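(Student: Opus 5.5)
The plan is to derive Theorem~\ref{thm:opt_bound} from the last-iterate KL bound of Lemma~\ref{lem:opt_convergence} via Pinsker's inequality. Fix $h \in [H]$ and $s \in \mathcal{S}$. The first step is to relate the joint $L_1$ distance to the per-player marginals. Since both $\hat\pi_h(\cdot|s)$ and $\pi^{(T)}_h(\cdot|s)$ are product distributions over $\mathcal{A}_1 \times \mathcal{A}_2$, the triangle inequality for products of probability measures gives
\begin{align}
\|\hat\pi_h(\cdot|s) - \pi^{(T)}_h(\cdot|s)\|_1 \le \|\hat\pi_{h,1}(\cdot|s) - \pi^{(T)}_{h,1}(\cdot|s)\|_1 + \|\hat\pi_{h,2}(\cdot|s) - \pi^{(T)}_{h,2}(\cdot|s)\|_1 .
\end{align}
Each term is then bounded by Pinsker's inequality, $\|p-q\|_1 \le \sqrt{2\,\mathrm{KL}(p\|q)}$.

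Which KL Lemma~\ref{lem:opt_convergence} controls needs care. Read literally, $\mathrm{KL}(\hat\pi_h\|\pi^{(T)}_h)$ is the KL of product measures, and this equals $\mathrm{KL}(\hat\pi_{h,1}\|\pi^{(T)}_{h,1})+\mathrm{KL}(\hat\pi_{h,2}\|\pi^{(T)}_{h,2})$ by additivity of KL over products. In that case each marginal KL is at most the joint bound. Alternatively, one could apply Pinsker directly to the joint product measures, which avoids the split entirely:
\begin{align}
\|\hat\pi_h(\cdot|s) - \pi^{(T)}_h(\cdot|s)\|_1 \le \sqrt{2\,\mathrm{KL}\bigl(\hat\pi_h(\cdot|s)\,\|\,\pi^{(T)}_h(\cdot|s)\bigr)} .
\end{align}
Either route works, and I would present the direct joint version for cleanliness, remarking that the split version yields at most a factor $\sqrt{2}$ more.

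Substituting Lemma~\ref{lem:opt_convergence} and taking the supremum over $s$ gives
\begin{align}
\sup_s \|\hat\pi_h(\cdot|s) - \pi^{(T)}_h(\cdot|s)\|_1 \le \sqrt{\frac{72\,\eta^2H^2(1+\eta^{-1}\log(1/\alpha))^2}{T+1}} = \frac{6\sqrt{2}\,\eta H\,(1+\eta^{-1}\log(1/\alpha))}{\sqrt{T+1}} .
\end{align}
The factor $1+\eta^{-1}\log(1/\alpha)$ is absorbed into $\widetilde{\mathcal{O}}$, consistent with the paper's convention that $\log(1/\alpha)$ dependence is treated as logarithmic. This yields $\widetilde{\mathcal{O}}(\eta H/\sqrt{T})$.

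The argument has essentially no obstacle beyond this bookkeeping. The one point to check is that absorbing $\lambda = 1+\eta^{-1}\log(1/\alpha)$ into $\widetilde{\mathcal{O}}$ is legitimate. I would state explicitly that the hidden factor is $\lambda$, or equivalently that $\eta\lambda = \eta + \log(1/\alpha)$, so the bound is really $\widetilde{\mathcal{O}}\bigl((\eta H + H\log(1/\alpha))/\sqrt{T}\bigr)$. Writing it this way keeps the claim honest, since all the substantive work sits in Lemma~\ref{lem:opt_convergence}.
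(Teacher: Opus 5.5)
Your proposal is correct and matches the paper's proof. Like the paper, you apply Pinsker directly to the KL bound of Lemma~\ref{lem:opt_convergence}, whose proof in fact controls the sum $V_T(s)$ of the two players' KLs, i.e.\ the product-measure KL, which justifies your joint reading. This yields $6\sqrt{2}\,\eta H(1+\eta^{-1}\log(1/\alpha))/\sqrt{T+1}$, with the factor $\lambda$ absorbed into $\widetilde{\mathcal{O}}$. Your explicit remark that the hidden quantity is $\eta\lambda = \eta + \log(1/\alpha)$, so the bound is really $\mathcal{O}\bigl((\eta H + H\log(1/\alpha))/\sqrt{T}\bigr)$, is a fair caveat about the paper's $\widetilde{\mathcal{O}}$ notation rather than a gap in your argument.
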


\textit{Proof Sketch.} Combine the KL bound of Lemma~\ref{lem:opt_convergence} with Pinsker's inequality $\|\pi - \pi'\|_1 \le \sqrt{2\,\mathrm{KL}(\pi \| \pi')}$ and Jensen's inequality. Detailed in Appendix~\ref{appendix alg 2}.



\subsection{Total Sample Complexity}

By combining the statistical bound for the idealized equilibrium (Theorem~\ref{thm:stat_bound}) with the Lipschitz transfer to the SOS-MD last iterate (Lemmas~\ref{lem:gap_lipschitz}, \ref{lem:opt_convergence}, and Theorem~\ref{thm:opt_bound}), we derive the final error bound for Algorithm~\ref{algo:seq_spermd_asym}.

\begin{corollary}
\label{cor:total_bound}
Let $\pi^{(T)}$ be the policy learned by Algorithm~\ref{algo:seq_spermd_asym} with $T$ iterations per stage using $n$ offline samples and stepsize $\gamma_t = \frac{2\eta}{t+2}$. Under the side condition~\eqref{eq:eta-side-condition}, with probability at least $1-\delta$, the expected duality gap satisfies
\begin{align}
    &\mathbb{E}[\mathrm{Gap}(\pi^{(T)})]\\
    \le&
    \underbrace{\widetilde{\mathcal{O}}\!\left(\frac{\eta H^4}{\sqrt{T}}\right)}_{\text{Optimization Error}}+
    \underbrace{\widetilde{\mathcal{O}}\!\left(\frac{(\eta H^5 + \eta^3 H^9)\, C_{\mathrm{uni}} \log(|\mathcal{Q}|/\delta)}{n}\right)}_{\text{Statistical Error}}.
\end{align}
\end{corollary}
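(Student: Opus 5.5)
The plan is to split the duality gap of the SOS-MD output $\pi^{(T)}$ into a statistical part and an optimization part, passing through the idealized ROSE policy $\hat\pi$. The crucial observation is that SOS-MD and ROSE share the regression step but differ in the targets. SOS-MD regresses onto $\hat V_{h+1}$ computed from the last iterates, not from exact equilibria. So the cleanest route is a \emph{hybrid} comparison: define $\hat\pi$ stage-wise as the exact regularized equilibrium of the \emph{same} $\hat Q_h$ that SOS-MD produces. Then Lemma~\ref{lem:opt_convergence} and Theorem~\ref{thm:opt_bound} apply verbatim at every $h$, since they are stated relative to the equilibrium of $\hat Q_h$. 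We then write
\[
\mathrm{Gap}(\pi^{(T)}) = \mathrm{Gap}(\hat\pi) + \bigl(\mathrm{Gap}(\pi^{(T)}) - \mathrm{Gap}(\hat\pi)\bigr)
\]
and bound the two pieces separately.

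\textbf{Optimization term.} By Lemma~\ref{lem:log_linear_bound}, both $\pi^{(T)}$ and $\hat\pi$ lie in the bounded log-density-ratio class. This lets us invoke Lemma~\ref{lem:gap_lipschitz}, which says the regularized duality gap is Lipschitz in the stagewise $L_1$ distance with a constant $L$ of order $H$ times the value range. On this class the value range is $O(H(1+\eta^{-1}\log(1/\alpha)))$, so I expect $L=O(H^2\lambda)$ in total. A performance-difference style telescoping over $h$ then gives
\[
|\mathrm{Gap}(\pi^{(T)})-\mathrm{Gap}(\hat\pi)| \lesssim \sum_{h=1}^H L\,\sup_s\|\pi^{(T)}_h-\hat\pi_h\|_1.
\]
Inserting Theorem~\ref{thm:opt_bound} yields $H\cdot L\cdot \eta H/\sqrt T$. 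After absorbing $\lambda$, the side condition $\eta\lambda\le 1/(4H^2)$, and logarithmic factors into $\widetilde{\mathcal O}$, this should give the advertised $\widetilde{\mathcal O}(\eta H^4/\sqrt T)$.

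\textbf{Statistical term.} Here I would rerun the proof of Theorem~\ref{thm:stat_bound} for $\hat\pi$, with one change: the regression targets use the SOS-MD $\hat V_{h+1}$ rather than the ROSE value. The least-squares generalization bound with completeness, together with a union bound over $\mathcal Q$ and $h$, still controls $\mathbb{E}_{\mu_h}[(\hat Q_h-\mathcal T_h\hat V_{h+1})^2]$ at rate $\log(|\mathcal Q|H/\delta)/n$. This works because completeness is assumed for any $V_{h+1}$ induced by the algorithm.

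The Bellman unrolling of Appendix~\ref{sec:final-bound} compares $\hat Q_h$ to $Q^*_h$, and it picks up an extra discrepancy $\hat V_{h+1}-\hat V^{\mathrm{ROSE}}_{h+1}$. This discrepancy is the stagewise regularized value gap between the last iterate and the exact equilibrium of $\hat Q_{h+1}$. Since $\hat\pi_{h+1}$ is a saddle point, this value difference is second order in the policy error, and is at most $\eta^{-1}$ times a KL-type quantity. Lemma~\ref{lem:opt_convergence} bounds it by $O(\eta H^2\lambda^2/T)$, which after propagation through $H$ steps is dominated by the optimization term above. Lemmas~\ref{lem:evaluation_gap}, \ref{lem:gap1_bound}, and Assumption~\ref{ass:concentrability} then give the statistical term $\widetilde{\mathcal O}((\eta H^5+\eta^3H^9)C_{\mathrm{uni}}\log(|\mathcal Q|/\delta)/n)$.

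The expectation in the statement is over the algorithm's internal quantities. Since SOS-MD is deterministic given $\mathcal D$, this bound holds on the same high-probability event.

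\textbf{Main obstacle.} The hardest step is this propagation of the optimization error through the regression targets. A naive $L_1$-Lipschitz bound on $\hat V_{h+1}$ would inject an $O(1/\sqrt T)$ bias into every regression. Squared inside the fast-rate analysis, that bias is harmless, but added linearly it could compound across $H$ stages. I would first try the saddle-point second-order argument above to keep the injected bias at $O(1/T)$. If that fails, I would fall back on crude linear propagation, which still gives only $H$-fold compounding and so at most an extra polynomial factor of $H$ in the optimization term.
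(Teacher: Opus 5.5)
Your proposal is correct, and its skeleton is the paper's proof. The paper uses the same transfer inequality $\mathrm{Gap}(\pi^{(T)})\le\mathrm{Gap}(\hat\pi)+|\mathrm{Gap}(\pi^{(T)})-\mathrm{Gap}(\hat\pi)|$. It places both policies in the log-linear class via Lemma~\ref{lem:log_linear_bound}, applies Lemma~\ref{lem:gap_lipschitz} with constant $\mathcal{O}(H^2\lambda)$, and sums Theorem~\ref{thm:opt_bound} over $h$ to get $\mathcal{O}(\eta H^4\lambda/\sqrt{T})$. Only absorbing $\lambda$ into $\widetilde{\mathcal O}$ is needed there; invoking $\eta\lambda\le 1/(4H^2)$ at that point would cancel the factor $\eta$ you want to keep.

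The real difference is the statistical term. The paper defines $\hat\pi_h$ in Lemma~\ref{lem:opt_convergence} as the equilibrium of SOS-MD's own $\hat Q_h$. It then simply cites Theorem~\ref{thm:stat_bound} for $\mathrm{Gap}(\hat\pi)$, even though that theorem is proved for ROSE, whose regression targets are exact equilibrium values rather than last-iterate values. That identification is left implicit. Your hybrid rerun supplies the missing step, and you correctly locate where exactness of $\hat V_{h+1}$ enters: the saddle-point non-expansion of Lemma~\ref{lem:V-non-expansion-saddle}. The comparison value there is the equilibrium value of SOS-MD's $\hat Q_{h+1}$, not ROSE's, as your next sentence says.

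Your second-order claim holds. Expanding around the Gibbs equilibrium gives exactly
\[
J_s(\pi^{(T)}_1,\pi^{(T)}_2;\hat Q)-J_s(\hat\pi_1,\hat\pi_2;\hat Q)=\bigl\langle \pi^{(T)}_1-\hat\pi_1,\;\hat Q\,(\pi^{(T)}_2-\hat\pi_2)\bigr\rangle-\eta^{-1}\mathrm{KL}(\pi^{(T)}_1\|\hat\pi_1)+\eta^{-1}\mathrm{KL}(\pi^{(T)}_2\|\hat\pi_2).
\]
Feeding this into Lemma~\ref{lem:opt_convergence} needs two small observations.
\begin{enumerate}
\item The KL terms appear in the reverse direction. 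This is harmless: Lemma~\ref{lem:log_linear_bound} gives $|\log(\pi^{(T)}_i/\hat\pi_i)|\le 2C_{\log}\le 1$ under the side condition, so both directions agree up to an absolute constant.
\item The bilinear term is at most $\|\hat Q\|_\infty\|\pi^{(T)}_1-\hat\pi_1\|_1\|\pi^{(T)}_2-\hat\pi_2\|_1\le H\lambda\bigl(\mathrm{KL}(\hat\pi_1\|\pi^{(T)}_1)+\mathrm{KL}(\hat\pi_2\|\pi^{(T)}_2)\bigr)$ by Pinsker. Under the side condition this is $\mathcal{O}(\eta^{-1})$ times the same KL sum.
\end{enumerate}
The resulting discrepancy $\bar b=\mathcal{O}(\eta H^2\lambda^2/T)$ enters the squared Bellman recursion only as an additive $\mathcal{O}(\bar b^2)$ term next to $W^2_{h+1}$. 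It therefore contributes $\mathcal{O}(1/T^2)$ to the final bound, and your linear-propagation fallback is never needed.

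In short, your route justifies that the statistical rate applies to the equilibria of the SOS-MD regression chain. The paper's shorter argument takes this for granted.
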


Its proof is in Appendix~\ref{app: corollary}. This corollary establishes that the last iterate of SOS-MD inherits the fast $\widetilde{\mathcal{O}}(1/n)$ statistical guarantee of the idealized framework up to a vanishing $\widetilde{\mathcal{O}}(1/\sqrt{T})$ optimization error. As $T$ grows, the optimization error vanishes and the total bound approaches the pessimism-free $\widetilde{\mathcal{O}}(1/n)$ statistical rate, demonstrating that SOS-MD provides a computationally tractable approximation to the exact regularized equilibrium without sacrificing the fast statistical convergence. Notably, this $\widetilde{\mathcal{O}}(1/n)$ statistical convergence is significantly faster than the standard $\widetilde{\mathcal{O}}(1/\sqrt{n})$ rate typically observed in offline reinforcement learning \citep{xie2021bellman, rashidinejad2021bridging} and two-player zero-sum offline games \citep{cui2022offline}.

\section{Discussion}
In this section, we discuss the practical implications of our theoretical framework, particularly for Large Language Model (LLM) alignment, and highlight how our results improve upon the existing literature.

\paragraph{Implications for LLM Alignment.}
Recently, formulating LLM alignment as a two-player zero-sum game has attracted significant research attention~\citep{munos2024nash,wu2024self,zhang2025improving,zhang2025iterative}. The primary motivation lies in the limitations of standard Reinforcement Learning from Human Feedback (RLHF)~\citep{ouyang2022training,xiong2023iterative,rafailov2023direct}, which typically relies on the Bradley-Terry (BT) modeling of human preferences. The BT model inherently assumes the \textit{transitivity} of preferences (i.e., if $A \succ B$ and $B \succ C$, then $A \succ C$). However, this assumption contradicts empirical evidence in human decision-making, where preferences can be intransitive~\citep{may1954intransitivity}. 

To bypass the restrictive BT assumption and capture general preferences, recent works \citep{azar2024general,munos2024nash} propose a KL-regularized game-theoretic framework. In this setting, the objective is to identify a policy that maximizes the win rate against an adversarial opponent while staying close to a reference policy. However, the majority of this literature focuses on the \emph{online} setting, where the agent can actively query preference signals. Furthermore, existing theoretical analyses are predominantly asymptotic, failing to provide finite-sample complexity bounds.

\paragraph{Comparison with Offline Nash Learning.} While \citet{ye2024online} provide sample complexity results for the offline setting, their approach largely mirrors the algorithmic design and analysis of unregularized offline Markov games~\citep{cui2022offline,zhang2023offline}. Consequently, they do not fully leverage the geometry of KL regularization, resulting in a standard $\widetilde{\mathcal{O}}(1/\sqrt{n})$ statistical rate. Our work advances beyond this result in three key aspects: 
(1) Multi-step Dynamics: We study the multi-step setting, generalizing the single-step (bandit) formulation in \citet{ye2024online}. This formulation captures more complex LLM applications, such as multi-turn user interactions and tool-use scenarios involving intermediate observations. 
(2) No Explicit Pessimism: Unlike prior approaches, our algorithm requires no explicit pessimistic bonuses added to the value function, making it align better with practical post-training algorithms. 
(3) Fast Rates: We develop a novel analysis distinct from the standard pessimistic framework used in offline Markov games, unlocking the strong convexity of the objective to achieve a fast rate of $\widetilde{\mathcal{O}}(1/n)$.

\section{Conclusion}
In this paper, we study offline learning of Nash equilibria in two-player zero-sum Markov games under KL regularization. 
We first introduce Regularized Offline Sequential Equilibrium (ROSE), an idealized framework for offline learning with KL regularization. 
We show that KL regularization alone stabilizes learning and yields a fast $\widetilde{\mathcal{O}}(1/n)$ convergence rate under the minimal \textit{unilateral concentrability} assumption.
We then propose Sequential Offline Self-play Mirror Descent (SOS-MD), a practical model-free algorithm that replaces the exact equilibrium solver with iterative mirror-descent self-play. 
Our analysis shows that the last iterate of SOS-MD inherits the same $\widetilde{\mathcal{O}}(1/n)$ statistical guarantee, with optimization error vanishing at rate $\widetilde{\mathcal{O}}(1/\sqrt{T})$ as the number of self-play iterations $T$ grows.
Future work includes extending our analysis to partially observable games and developing KL-regularized self-play methods for aligning large language models via Nash Learning from Human Feedback.

\section*{Impact Statements}
This paper presents work whose goal is to advance the field of machine learning. There are many potential societal consequences of our work, none of which we feel must be specifically highlighted here.

\section*{Acknowledgment}
Yuheng Zhang is supported by a fellowship from the Amazon-Illinois Center on AI for Interactive Conversational Experiences (AICE). Nan Jiang acknowledges funding support from NSF CNS-2112471, NSF CAREER IIS-2141781, and Sloan Fellowship.
\bibliography{bibliography.bib}

@inproceedings{chen2025efficient,
  title={Efficient policy evaluation with safety constraint for reinforcement learning},
  author={Chen, Claire and Liu, Shuze and Zhang, Shangtong},
  booktitle={Proceedings of the  International Conference on Learning Representations},
  volume={2025},
  pages={34989--35010},
  year={2025}
}

@inproceedings{liu2025efficient,
  title={Efficient multi-policy evaluation for reinforcement learning},
  author={Liu, Shuze Daniel and Chen, Claire and Zhang, Shangtong},
  booktitle={Proceedings of the AAAI Conference on Artificial Intelligence},
  volume={39},
  number={18},
  pages={18951--18959},
  year={2025}
}

@article{zhang2026beyond,
  title={Beyond Pessimism: Offline Learning in KL-regularized Games},
  author={Zhang, Yuheng and Chen, Claire and Jiang, Nan},
  journal={ArXiv Preprint arXiv:2604.06738},
  year={2026}
}

@article{chen2026pessimism,
  title={Pessimism-Free Offline Learning in General-Sum Games via KL Regularization},
  author={Chen, Claire and Zhang, Yuheng},
  journal={ArXiv Preprint arXiv:2605.00264},
  year={2026}
}

@article{chen2026fast,
  title={Fast Rates in $\alpha$-Potential Games via Regularized Mirror Descent},
  author={Chen, Claire and Zhang, Yuheng},
  journal={ArXiv Preprint arXiv:2605.00268},
  year={2026}
}

@article{freund1999adaptive,
  title={Adaptive game playing using multiplicative weights},
  author={Freund, Yoav and Schapire, Robert E},
  journal={Games and Economic Behavior},
  volume={29},
  number={1-2},
  pages={79--103},
  year={1999},
  publisher={Elsevier}
}

@inproceedings{
zhang2025iterative,
title={Iterative Nash Policy Optimization: Aligning {LLM}s with General Preferences via No-Regret Learning},
author={Yuheng Zhang and Dian Yu and Baolin Peng and Linfeng Song and Ye Tian and Mingyue Huo and Nan Jiang and Haitao Mi and Dong Yu},
booktitle={Proceedings of the  International Conference on Learning Representations},
year={2025}
}

@book{osborne1994course,
  title={A course in game theory},
  author={Osborne, Martin J and Rubinstein, Ariel},
  year={1994},
  publisher={MIT press}
}

@book{bacsar1998dynamic,
  title={Dynamic noncooperative game theory},
  author={Ba{\c{s}}ar, Tamer and Olsder, Geert Jan},
  year={1998},
  publisher={SIAM}
}

@article{zhang2021multi,
  title={Multi-agent reinforcement learning: A selective overview of theories and algorithms},
  author={Zhang, Kaiqing and Yang, Zhuoran and Ba{\c{s}}ar, Tamer},
  journal={Handbook of reinforcement learning and control},
  pages={321--384},
  year={2021},
  publisher={Springer}
}

@incollection{littman1994markov,
  title={Markov games as a framework for multi-agent reinforcement learning},
  author={Littman, Michael L},
  booktitle={Machine learning proceedings 1994},
  pages={157--163},
  year={1994},
  publisher={Elsevier}
}

@article{hu2021fast,
  title={Fast rates for the regret of offline reinforcement learning},
  author={Hu, Yichun and Kallus, Nathan and Uehara, Masatoshi},
  journal={ArXiv Preprint arXiv:2102.00479},
  year={2021}
}

@inproceedings{zhong2022pessimistic,
  title={Pessimistic minimax value iteration: Provably efficient equilibrium learning from offline datasets},
  author={Zhong, Han and Xiong, Wei and Tan, Jiyuan and Wang, Liwei and Zhang, Tong and Wang, Zhaoran and Yang, Zhuoran},
  booktitle={Proceedings of the International Conference on Machine Learning},
  pages={27117--27142},
  year={2022},
  organization={PMLR}
}

@article{cui2022offline,
  title={When are offline two-player zero-sum Markov games solvable?},
  author={Cui, Qiwen and Du, Simon S},
  journal={Advances in Neural Information Processing Systems},
  volume={35},
  pages={25779--25791},
  year={2022}
}

@inproceedings{jin2021pessimism,
  title={Is pessimism provably efficient for offline rl?},
  author={Jin, Ying and Yang, Zhuoran and Wang, Zhaoran},
  booktitle={Proceedings of the International Conference on Machine Learning},
  pages={5084--5096},
  year={2021},
  organization={PMLR}
}

@article{xie2021bellman,
  title={Bellman-consistent pessimism for offline reinforcement learning},
  author={Xie, Tengyang and Cheng, Ching-An and Jiang, Nan and Mineiro, Paul and Agarwal, Alekh},
  journal={Advances in Neural Information Processing Systems},
  volume={34},
  pages={6683--6694},
  year={2021}
}

@inproceedings{zhang2023offline,
  title={Offline learning in markov games with general function approximation},
  author={Zhang, Yuheng and Bai, Yu and Jiang, Nan},
  booktitle={Proceedings of the International Conference on Machine Learning},
  pages={40804--40829},
  year={2023},
  organization={PMLR}
}

@inproceedings{geist2019theory,
  title={A theory of regularized markov decision processes},
  author={Geist, Matthieu and Scherrer, Bruno and Pietquin, Olivier},
  booktitle={Proceedings of the  International Conference on Machine Learning},
  pages={2160--2169},
  year={2019},
  organization={PMLR}
}

@inproceedings{kakade2001natural,
    author = "Kakade, Sham M",
    year = "2001",
    booktitle = "Advances in Neural Information Processing Systems",
    title = "A natural policy gradient"
}

@inproceedings{kumar2020conservative,
    author = "Kumar, Aviral and Zhou, Aurick and Tucker, George and Levine, Sergey",
    year = "2020",
    booktitle = "Advances in Neural Information Processing Systems",
    title = "Conservative {Q}-Learning for Offline Reinforcement Learning"
}

@article{rafailov2023direct,
  title={Direct preference optimization: Your language model is secretly a reward model},
  author={Rafailov, Rafael and Sharma, Archit and Mitchell, Eric and Manning, Christopher D and Ermon, Stefano and Finn, Chelsea},
  journal={Advances in neural information processing systems},
  volume={36},
  pages={53728--53741},
  year={2023}
}

@article{liu2024doubly,
    author = "Liu, Shuze and Chen, Claire and Zhang, Shangtong",
    year = "2024",
    journal = "ArXiv Preprint",
    title = "Doubly Optimal Policy Evaluation for Reinforcement Learning"
}

@inproceedings{liu2024efficient,
    author = "Liu, Shuze and Zhang, Shangtong",
    year = "2024",
    booktitle = "Proceedings of the International Conference on Machine Learning",
    title = "Efficient Policy Evaluation with Offline Data Informed Behavior Policy Design"
}

@inproceedings{fujimoto2019off,
    author = "Fujimoto, Scott and Meger, David and Precup, Doina",
    year = "2019",
    booktitle = "Proceedings of the International Conference on Machine Learning",
    title = "Off-Policy Deep Reinforcement Learning without Exploration"
}

@article{levine2020offline,
    author = "Levine, Sergey and Kumar, Aviral and Tucker, George and Fu, Justin",
    year = "2020",
    journal = "ArXiv Preprint",
    title = "Offline Reinforcement Learning: Tutorial, Review, and Perspectives on Open Problems"
}

@article{ye2024online,
  title={Online iterative reinforcement learning from human feedback with general preference model},
  author={Ye, Chenlu and Xiong, Wei and Zhang, Yuheng and Dong, Hanze and Jiang, Nan and Zhang, Tong},
  journal={Advances in Neural Information Processing Systems},
  volume={37},
  pages={81773--81807},
  year={2024}
}

@inproceedings{pinto2017robust,
    author = "Pinto, Lerrel and Davidson, James and Sukthankar, Rahul and Gupta, Abhinav",
    year = "2017",
    booktitle = "Proceedings of the International Conference on Machine Learning",
    title = "Robust adversarial reinforcement learning"
}

@article{liu2025ode,
    author = "Liu, Shuze and Chen, Shuhang and Zhang, Shangtong",
    title = "The {ODE} Method for Stochastic Approximation and Reinforcement Learning with Markovian Noise",
    year = "2025",
    journal = "Journal of Machine Learning Research"
}

@article{bai2022training,
    author = "Bai, Yuntao and Jones, Andy and Ndousse, Kamal and Askell, Amanda and Chen, Anna and DasSarma, Nova and Drain, Dawn and Fort, Stanislav and Ganguli, Deep and Henighan, Tom and others",
    year = "2022",
    journal = "ArXiv Preprint",
    title = "Training a helpful and harmless assistant with reinforcement learning from human feedback"
}

@inproceedings{ouyang2022training,
    author = "Ouyang, Long and Wu, Jeffrey and Jiang, Xu and Almeida, Diogo and Wainwright, Carroll and Mishkin, Pamela and Zhang, Chong and Agarwal, Sandhini and Slama, Katarina and Ray, Alex and others",
    year = "2022",
    booktitle = "Advances in Neural Information Processing Systems",
    title = "Training language models to follow instructions with human feedback"
}

@inproceedings{schulman2015trust,
    author = "Schulman, John and Levine, Sergey and Abbeel, Pieter and Jordan, Michael I. and Moritz, Philipp",
    year = "2015",
    booktitle = "Proceedings of the International Conference on Machine Learning",
    title = "Trust Region Policy Optimization"
}

@article{cui2022provably,
  title={Provably efficient offline multi-agent reinforcement learning via strategy-wise bonus},
  author={Cui, Qiwen and Du, Simon S},
  journal={Advances in Neural Information Processing Systems},
  volume={35},
  pages={11739--11751},
  year={2022}
}

@article{wei2017online,
  title={Online reinforcement learning in stochastic games},
  author={Wei, Chen-Yu and Hong, Yi-Te and Lu, Chi-Jen},
  journal={Advances in Neural Information Processing Systems},
  volume={30},
  year={2017}
}

@article{bai2020near,
  title={Near-optimal reinforcement learning with self-play},
  author={Bai, Yu and Jin, Chi and Yu, Tiancheng},
  journal={Advances in Neural Information Processing Systems},
  volume={33},
  pages={2159--2170},
  year={2020}
}

@inproceedings{liu2021sharp,
  title={A sharp analysis of model-based reinforcement learning with self-play},
  author={Liu, Qinghua and Yu, Tiancheng and Bai, Yu and Jin, Chi},
  booktitle={International Conference on Machine Learning},
  pages={7001--7010},
  year={2021},
  organization={PMLR}
}

@article{mao2023provably,
  title={Provably efficient reinforcement learning in decentralized general-sum markov games},
  author={Mao, Weichao and Ba{\c{s}}ar, Tamer},
  journal={Dynamic Games and Applications},
  volume={13},
  number={1},
  pages={165--186},
  year={2023},
  publisher={Springer}
}

@article{jin2021v,
  title={V-Learning--A Simple, Efficient, Decentralized Algorithm for Multiagent RL},
  author={Jin, Chi and Liu, Qinghua and Wang, Yuanhao and Yu, Tiancheng},
  journal={ArXiv Preprint arXiv:2110.14555},
  year={2021}
}

@article{song2021can,
  title={When can we learn general-sum Markov games with a large number of players sample-efficiently?},
  author={Song, Ziang and Mei, Song and Bai, Yu},
  journal={ArXiv Preprint arXiv:2110.04184},
  year={2021}
}

@article{uehara2021pessimistic,
  title={Pessimistic model-based offline reinforcement learning under partial coverage},
  author={Uehara, Masatoshi and Sun, Wen},
  journal={ArXiv Preprint arXiv:2107.06226},
  year={2021}
}

@article{li2024settling,
  title={Settling the sample complexity of model-based offline reinforcement learning},
  author={Li, Gen and Shi, Laixi and Chen, Yuxin and Chi, Yuejie and Wei, Yuting},
  journal={The Annals of Statistics},
  volume={52},
  number={1},
  pages={233--260},
  year={2024},
  publisher={Institute of Mathematical Statistics}
}

@article{liu2020provably,
  title={Provably good batch off-policy reinforcement learning without great exploration},
  author={Liu, Yao and Swaminathan, Adith and Agarwal, Alekh and Brunskill, Emma},
  journal={Advances in Neural Information Processing Systems},
  volume={33},
  pages={1264--1274},
  year={2020}
}

@article{rashidinejad2021bridging,
  title={Bridging offline reinforcement learning and imitation learning: A tale of pessimism},
  author={Rashidinejad, Paria and Zhu, Banghua and Ma, Cong and Jiao, Jiantao and Russell, Stuart},
  journal={Advances in Neural Information Processing Systems},
  volume={34},
  pages={11702--11716},
  year={2021}
}

@inproceedings{zhan2022offline,
  title={Offline reinforcement learning with realizability and single-policy concentrability},
  author={Zhan, Wenhao and Huang, Baihe and Huang, Audrey and Jiang, Nan and Lee, Jason},
  booktitle={Conference on Learning Theory},
  pages={2730--2775},
  year={2022},
  organization={PMLR}
}

@article{xiong2023iterative,
  title={Iterative preference learning from human feedback: Bridging theory and practice for rlhf under kl-constraint},
  author={Xiong, Wei and Dong, Hanze and Ye, Chenlu and Wang, Ziqi and Zhong, Han and Ji, Heng and Jiang, Nan and Zhang, Tong},
  journal={ArXiv Preprint arXiv:2312.11456},
  year={2023}
}

@article{xie2024exploratory,
  title={Exploratory preference optimization: Harnessing implicit q*-approximation for sample-efficient rlhf},
  author={Xie, Tengyang and Foster, Dylan J and Krishnamurthy, Akshay and Rosset, Corby and Awadallah, Ahmed and Rakhlin, Alexander},
  journal={ArXiv Preprint arXiv:2405.21046},
  year={2024}
}

@article{zhao2025logarithmic,
  title={Logarithmic regret for online kl-regularized reinforcement learning},
  author={Zhao, Heyang and Ye, Chenlu and Xiong, Wei and Gu, Quanquan and Zhang, Tong},
  journal={ArXiv Preprint arXiv:2502.07460},
  year={2025}
}

@inproceedings{munos2024nash,
  title={Nash learning from human feedback},
  author={Munos, R{\'e}mi and Valko, Michal and Calandriello, Daniele and Azar, Mohammad Gheshlaghi and Rowland, Mark and Guo, Zhaohan Daniel and Tang, Yunhao and Geist, Matthieu and Mesnard, Thomas and Fiegel, C{\^o}me and others},
  booktitle={Forty-first International Conference on Machine Learning},
  year={2024}
}

@article{wu2024self,
  title={Self-play preference optimization for language model alignment},
  author={Wu, Yue and Sun, Zhiqing and Yuan, Huizhuo and Ji, Kaixuan and Yang, Yiming and Gu, Quanquan},
  journal={ArXiv Preprint arXiv:2405.00675},
  year={2024}
}

@article{zhang2025improving,
  title={Improving LLM general preference alignment via optimistic online mirror descent},
  author={Zhang, Yuheng and Yu, Dian and Ge, Tao and Song, Linfeng and Zeng, Zhichen and Mi, Haitao and Jiang, Nan and Yu, Dong},
  journal={ArXiv Preprint arXiv:2502.16852},
  year={2025}
}

@article{nayak2025achieving,
  title={Achieving Logarithmic Regret in KL-Regularized Zero-Sum Markov Games},
  author={Nayak, Anupam and Yang, Tong and Yagan, Osman and Joshi, Gauri and Chi, Yuejie},
  journal={ArXiv Preprint arXiv:2510.13060},
  year={2025}
}

@article{may1954intransitivity,
  title={Intransitivity, utility, and the aggregation of preference patterns},
  author={May, Kenneth O},
  journal={Econometrica: Journal of the Econometric Society},
  pages={1--13},
  year={1954},
  publisher={JSTOR}
}

@inproceedings{azar2024general,
  title={A general theoretical paradigm to understand learning from human preferences},
  author={Azar, Mohammad Gheshlaghi and Guo, Zhaohan Daniel and Piot, Bilal and Munos, Remi and Rowland, Mark and Valko, Michal and Calandriello, Daniele},
  booktitle={International Conference on Artificial Intelligence and Statistics},
  pages={4447--4455},
  year={2024},
  organization={PMLR}
}
\bibliographystyle{icml2026}

\newpage
\appendix
\onecolumn

\section{Notation}
\label{app:notation}

\begin{table}[H]
\centering
\caption{Summary of Notation}
\label{tab:notation}
\renewcommand{\arraystretch}{1.5}
\begin{tabular}{l p{10cm}}
\toprule
\textbf{Symbol} & \textbf{Description} \\
\midrule
\multicolumn{2}{c}{\textit{Markov Games \& Reinforcement Learning}} \\
\midrule
$\mathcal{M}$ & Episodic two-player zero-sum Markov Game tuple $(\mathcal{S}, \mathcal{A}_1, \mathcal{A}_2, P, r, H)$ \\
$\mathcal{S}$ & State space \\
$\mathcal{A}_1, \mathcal{A}_2$ & Finite action spaces for Player 1 and Player 2 \\
$H$ & Episode horizon length \\
$P_h(\cdot \mid s, a_1, a_2)$ & Transition probability kernel at step $h$ \\
$r_h^\star(s, a_1, a_2)$ & Deterministic reward function at step $h$ (Player 1 maximizes, Player 2 minimizes) \\
$\rho$ & Initial state distribution \\
$\pi = (\pi_1, \pi_2)$ & Joint policy, where $\pi_k = \{\pi_{h,k}\}_{h=1}^H$ \\
$d_h^\pi$ & State visitation distribution at step $h$ induced by joint policy $\pi$ \\
$V_h^\pi, Q_h^\pi$ & Regularized Value function and Q-function under policy $\pi$ \\
$\pi^{\dagger}, \pi^{\ddagger}$ & Best response policy for Player 1 (maximizer) and Player 2 (minimizer) \\
\midrule
\multicolumn{2}{c}{\textit{Regularization \& Offline Learning}} \\
\midrule
$\eta$ & KL regularization coefficient \\
$\pi^{\mathrm{ref}}$ & Fixed reference policy pair $(\pi^{\mathrm{ref}}_1, \pi^{\mathrm{ref}}_2)$ \\
$\alpha$ & Minimum positive probability of the reference policy within its support \\
$\mathcal{D}$ & Offline dataset consisting of $n$ trajectories \\
$n$ & Number of samples/trajectories in the offline dataset \\
$\mathcal{Q}$ & Function class for Q-value estimation \\
$\Pi_{\mathrm{uni}}$ & Set of unilateral deviation policies (where at least one player plays Nash) \\
$C_{\mathrm{uni}}$ & Unilateral concentrability coefficient (Assumption~\ref{ass:concentrability}) \\
$\mathrm{Gap}(\pi)$ & Duality gap of joint policy $\pi$ (sum of exploitability) \\
\midrule
\multicolumn{2}{c}{\textit{Algorithms (ROSE \& SOS-MD)}} \\
\midrule
$\hat{Q}_h, \hat{V}_h$ & Estimated Q-function and Value function at step $h$ \\
$\hat{\pi}$ & Policy returned by the exact equilibrium solver (ROSE) \\
$\pi^{(T)}$ & Last iterate policy returned by SOS-MD after $T$ iterations\\
$\gamma$ & Learning rate for Mirror Descent \\
$T$ & Number of iterations for SOS-MD \\
$\mathcal{T}_h$ & Bellman optimality operator at step $h$ \\
\bottomrule
\end{tabular}
\end{table}

\section{Proof of Statistical Efficiency (Theorem~\ref{thm:stat_bound})}
\label{appendix: stats}

In this section, we provide the detailed proofs for the statistical efficiency of Algorithm~\ref{alg:offline-markov}.

\subsection{Proof of Lemma~\ref{lem:value_bound}}
\label{app:proof_value_bound}

\begin{lemma}[Uniform Value Bound]
\label{lem:value_bound}
For any opponent policy sequence $\nu = \{\nu_h\}_{h=1}^H$, the magnitude of the regularized best-response value function is uniformly bounded. Specifically, $\forall h \in [H]$:
\begin{align}
    \|V_h^{*, \nu}\|_\infty \le (H - h + 1) \left( 1 + \eta^{-1} \log(1/\alpha) \right). \label{eq:V-infty-bound-lse}
\end{align}
\end{lemma}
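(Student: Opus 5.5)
We argue by backward induction on $h$, from $h=H+1$ (where $V_{H+1}^{*,\nu}\equiv 0$) down to $h=1$. Write $\lambda := 1+\eta^{-1}\log(1/\alpha)$ and $c_h := (H-h+1)\lambda$. Assume $\|V_{h+1}^{*,\nu}\|_\infty \le c_{h+1}$. Since $r_h^\star\in[0,1]$, the best-response Q-function satisfies $Q_h^{*,\nu}(s,a) = r_h^\star(s,a) + \mathbb{E}_{s'}[V_{h+1}^{*,\nu}(s')]$. Therefore $Q_h^{*,\nu}(s,a)\in[-c_{h+1},\,1+c_{h+1}]$, and it suffices to bound the one-step regularized quantity
\[
V_h^{*,\nu}(s)=\max_{\pi_1}\Bigl\{\mathbb{E}_{a_1\sim\pi_1,a_2\sim\nu_h}[Q_h^{*,\nu}(s,a)] - \eta^{-1}\mathrm{KL}(\pi_1\|\pi^{\mathrm{ref}}_{h,1}) + \eta^{-1}\mathrm{KL}(\nu_h\|\pi^{\mathrm{ref}}_{h,2})\Bigr\}.
\]
Here we use the fact that the step-$h$ maximizer can be chosen statewise given the fixed continuation $V_{h+1}^{*,\nu}$, by the Bellman recursion for best responses. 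To bound this, let $q(a_1) := \mathbb{E}_{a_2\sim\nu_h}Q_h^{*,\nu}(s,a_1,a_2)$. The Gibbs variational formula, as in \eqref{eq:gibbs_response}, gives $\max_{\pi_1}\{\langle \pi_1,q\rangle - \eta^{-1}\mathrm{KL}(\pi_1\|\pi^{\mathrm{ref}}_{h,1})\} = \eta^{-1}\log\sum_{a_1}\pi^{\mathrm{ref}}_{h,1}(a_1)e^{\eta q(a_1)}$. This log-sum-exp lies between $\min q$ and $\max q$, because $\pi^{\mathrm{ref}}_{h,1}$ is a probability vector. Hence the Player-1 part is in $[-c_{h+1},\,1+c_{h+1}]$.

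The remaining term is $+\eta^{-1}\mathrm{KL}(\nu_h\|\pi^{\mathrm{ref}}_{h,2})$, which is nonnegative. For the upper bound we must control it. Any $\nu_h$ supported in the support of $\pi^{\mathrm{ref}}_{h,2}$ satisfies $\mathrm{KL}(\nu_h\|\pi^{\mathrm{ref}}_{h,2}) = \sum \nu_h\log(\nu_h/\pi^{\mathrm{ref}}_{h,2}) \le \sum\nu_h \log(1/\pi^{\mathrm{ref}}_{h,2}) \le \log(1/\alpha)$, using $\nu_h\le 1$ and the definition of $\alpha$. If $\nu_h$ leaves the support, the KL is infinite and the value is not finite. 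So the statement should be read for opponents absolutely continuous with respect to $\pi^{\mathrm{ref}}$; these are the only opponents the algorithms produce, since all iterates and equilibria stay in the reference support, as remarked after the definition of $\alpha$. This is the step I expect to need the most care: the bound on the opponent's KL uses only $\alpha$, and there is a caveat about the support that I would state explicitly.

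Combining the pieces gives $V_h^{*,\nu}(s)\le 1+c_{h+1}+\eta^{-1}\log(1/\alpha)\le c_{h+1}+\lambda = c_h$. For the lower bound, $V_h^{*,\nu}(s)\ge -c_{h+1}\ge -c_h$. Together these give $|V_h^{*,\nu}(s)|\le c_h$, which completes the induction and establishes \eqref{eq:V-infty-bound-lse}. The bound is not sharp on the negative side, since the $+1$ and the $\log(1/\alpha)$ terms only push the value upward, but this does not matter for the uniform bound.
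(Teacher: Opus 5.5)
Your proof is correct and follows essentially the same route as the paper: backward induction, the Gibbs variational (log-sum-exp) identity to sandwich Player~1's regularized term between $\min q$ and $\max q$, and the bound $\eta^{-1}\mathrm{KL}(\nu_h\|\pi^{\mathrm{ref}}_{h,2})\le\eta^{-1}\log(1/\alpha)$ on the opponent's constant shift. Your explicit restriction to opponents absolutely continuous with respect to $\pi^{\mathrm{ref}}_{h,2}$ is a worthwhile clarification. The paper's proof asserts $\mathrm{KL}(p\|\pi^{\mathrm{ref}})\le\log(1/\alpha)$ ``for any distribution $p$'', but this only holds for $p$ supported on the reference support.
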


We will also prove that, in particular, for all $h\in[H]$,
\begin{align}
1+\|V_{h+1}^{*,\nu}\|_\infty
~\le~
(H-h+1)\Big(1+\eta^{-1}\log(1/\alpha)\Big).
\label{eq:A-uniform-lse}
\end{align}
\begin{proof}

We prove \eqref{eq:V-infty-bound-lse} by backward induction on $h$.
We adopt the standard convention that the terminal value satisfies $V_{H+1}^{*,\nu}\equiv 0$, so
\eqref{eq:V-infty-bound-lse} holds for $h=H+1$.

Fix $h\in[H]$ and assume \eqref{eq:V-infty-bound-lse} holds for $h+1$.
Fix any state $s\in\mathcal S$.
Recall that the value function in a regularized Markov game includes regularization terms for both players. Since Player~2's policy $\nu$ is fixed, their regularization term acts as a state-dependent constant shift. We define:
\begin{align}
C_{h,s}^\nu := \eta^{-1}\KL(\nu_h(\cdot\mid s)\|\pi_{\mathrm{ref},h,2}(\cdot\mid s)).
\label{eq:player2-reg-const}
\end{align}
The value of the best response is the maximization of Player~1's regularized objective plus this constant:
\begin{align}
V_h^{*,\nu}(s)
=
\max_{\mu\in\Delta(\mathcal A_1)}
\left\{
\left\langle \mu,\,Q_{h,\nu}^*(s,\cdot)\right\rangle
-\eta^{-1}\KL\!\left(\mu\,\middle\|\,\pi_{\mathrm{ref},h,1}(\cdot\mid s)\right)
\right\}
+ C_{h,s}^\nu,
\label{eq:br-V-def-lse}
\end{align}
where $Q_{h,\nu}^*(s,a_1):=\E_{a_2\sim \nu_h(\cdot\mid s)}[Q_h^{*,\nu}(s,a_1,a_2)]$ and
\[
Q_h^{*,\nu}(s,a_1,a_2)
=
r_h^*(s,a_1,a_2)
+
\E_{s'\sim P_h(\cdot\mid s,a_1,a_2)}\!\left[V_{h+1}^{*,\nu}(s')\right].
\]
Using $|r_h^*|\le 1$ and Jensen's inequality, for all $(s,a_1,a_2)$,
\begin{align}
\big|Q_h^{*,\nu}(s,a_1,a_2)\big|
\le
1 + \|V_{h+1}^{*,\nu}\|_\infty
\qquad\Rightarrow\qquad
\|Q_{h,\nu}^*(s,\cdot)\|_\infty
\le
1 + \|V_{h+1}^{*,\nu}\|_\infty.
\label{eq:Qbar-bound-lse}
\end{align}

Next we upper bound the maximization term in \eqref{eq:br-V-def-lse} (denoted as $V_{h,s}^{\text{max}}$) using the exact log-sum-exp form.
Specifically, using the variational identity $\max_{\mu} \{ \langle \mu, q\rangle - \eta^{-1}\KL(\mu\|\pi) \} = \eta^{-1}\log\sum_{a}\pi(a)\exp(\eta q(a))$ with $q=Q_{h,\nu}^*(s,\cdot)$ and $\pi=\pi_{\mathrm{ref},h,1}(\cdot\mid s)$, we have:
\begin{align}
V_{h,s}^{\text{max}}
=
\eta^{-1}\log\sum_{a\in\mathcal A_1}
\pi_{\mathrm{ref},h,1}(a\mid s)\exp\!\left(\eta Q_{h,\nu}^*(s,a)\right).
\label{eq:V-logsumexp}
\end{align}
Using the property that $\eta^{-1}\log\sum \pi_i e^{\eta q_i} \in [\min q_i, \max q_i]$, we bound the magnitude:
\begin{align}
|V_{h,s}^{\text{max}}|
\le
\|Q_{h,\nu}^*(s,\cdot)\|_\infty.
\label{eq:V-upper-lse}
\end{align}
Substituting this back into \eqref{eq:br-V-def-lse}:
\begin{align}
|V_h^{*,\nu}(s)|
\le
|V_{h,s}^{\text{max}}| + |C_{h,s}^\nu|
\le
\|Q_{h,\nu}^*(s,\cdot)\|_\infty + C_{h,s}^\nu.
\label{eq:V-abs-bound}
\end{align}
(Note that KL divergence is non-negative, so $|C_{h,s}^\nu| = C_{h,s}^\nu$).

Under the definition that $\pi_{\mathrm{ref}} \ge \alpha$, for any distribution $p$ we have $\KL(p\|\pi_{\mathrm{ref}}) = \sum p(a)\log(p(a)/\pi_{\mathrm{ref}}(a)) \le \sum p(a)\log(1/\alpha) = \log(1/\alpha)$.
Therefore, $C_{h,s}^\nu \le \eta^{-1}\log(1/\alpha)$ for all $(h,s)$.

Taking the supremum over $s$ in \eqref{eq:V-abs-bound} and using \eqref{eq:Qbar-bound-lse}, we obtain the recursion:
\begin{align}
\|V_h^{*,\nu}\|_\infty
\le
\left( 1+\|V_{h+1}^{*,\nu}\|_\infty \right) + \eta^{-1}\log(1/\alpha).
\label{eq:V-rec-lse}
\end{align}

Applying the induction hypothesis
$\|V_{h+1}^{*,\nu}\|_\infty\le (H-h)\big(1+\eta^{-1}\log(1/\alpha)\big)$
to \eqref{eq:V-rec-lse} gives
\begin{align*}
\|V_h^{*,\nu}\|_\infty
&\le
1 + \eta^{-1}\log(1/\alpha) + (H-h)\big(1+\eta^{-1}\log(1/\alpha)\big) \\
&=
(H-h+1)\big(1+\eta^{-1}\log(1/\alpha)\big),
\end{align*}
which establishes \eqref{eq:V-infty-bound-lse}. The inequality \eqref{eq:A-uniform-lse} follows immediately.
\end{proof}

\subsection{Proof of Lemma~\ref{lem:gap1_bound} (Optimization Gap Bound)}
\label{app:proof_gap1_bound}

\paragraph{Duality Gap Decomposition.}
Let $\hat{\pi}=(\hat{\pi}_1, \hat{\pi}_2)$ denote the learned policy pair. Throughout this proof, $J(r,\pi_1,\pi_2)$ denotes the KL-\emph{regularized} expected return defined in Section~\ref{sec:preliminaries}; that is, $J(r,\pi_1,\pi_2) = \E_{s_1 \sim \rho}[V_1^{\pi_1,\pi_2}(s_1)]$ where $V_1^{\pi_1,\pi_2}$ is the regularized value function with reward $r$ and KL penalties for both players. We define the \emph{regularized} best response policies relative to $\hat{\pi}$ under the true game reward $r^*$.
Specifically, let $\hat{\pi}_2$ be fixed. The regularized best response of Player 1 is the unique maximizer of the KL-regularized objective:
\[
\pi^\dagger \in \arg\max_{\pi_1} J(r^*, \pi_1, \hat{\pi}_2),
\]
where the argmax is taken over the KL-regularized objective (so the maximizer is automatically a Gibbs-form softmax policy with logits $\eta\,Q^{*,\hat\pi_2}_{h,1} + \log \pi^{\mathrm{ref}}_{h,1}$). Similarly, let $\pi^\ddagger$ be the regularized best response of Player 2 against $\hat{\pi}_1$.

Recall that the learned policy $\hat{\pi}$ is the solution to the regularized equilibrium problem with estimated rewards $\hat{r}$:
\[
\hat\pi_2 \in \arg\min_{\pi_2}\max_{\pi_1} J(\hat{r}, \pi_1, \pi_2), \quad
\hat\pi_1 \in \arg\max_{\pi_1}\min_{\pi_2} J(\hat{r}, \pi_1, \pi_2).
\]

The duality gap of the learned policy pair $\hat{\pi}$ is defined as the sum of exploitabilities:
\[
\text{DualGap}(\hat{\pi}) := \underbrace{\left( \max_{\pi_1'} J(r^*, \pi_1', \hat{\pi}_2) - J(r^*, \hat{\pi}_1, \hat{\pi}_2) \right)}_{\text{Player 1 Exploitability}} + \underbrace{\left( J(r^*, \hat{\pi}_1, \hat{\pi}_2) - \min_{\pi_2'} J(r^*, \hat{\pi}_1, \pi_2') \right)}_{\text{Player 2 Exploitability}}.
\]
Equivalently, using the best response notation and the value of the true Nash equilibrium $(\pi^*_1, \pi^*_2)$:
\[
\text{DualGap}(\hat{\pi}) = \underbrace{\left( J(r^*, \pi^\dagger, \hat{\pi}_2) - J(r^*, \pi^*_1, \pi^*_2) \right)}_{\text{Term A}} + \underbrace{\left( J(r^*, \pi^*_1, \pi^*_2) - J(r^*, \hat{\pi}_1, \pi^\ddagger) \right)}_{\text{Term B}}.
\]
Our goal is to bound Term A (Term B follows by symmetry). We decompose Term A as follows:
\begin{align}
J(r^*, \pi^\dagger, \hat{\pi}_2) - J(r^*, \pi^*_1, \pi^*_2)
&=
\underbrace{
J(r^*, \pi^\dagger, \hat{\pi}_2)
-
J(r^*, \pi^*_1, \hat{\pi}_2)
}_{\textbf{Gap 1: Optimization / Stability}}
+
\underbrace{
J(r^*, \pi^*_1, \hat{\pi}_2)
-
J(r^*, \pi^*_1, \pi^*_2)
}_{\textbf{Gap 2: Evaluation / Fixed-Opponent}}.\label{dual gap decomposition sequential}
\end{align}

In the following, we bound \textbf{Gap 1} directly in terms of the cumulative squared $Q$-estimation error, by combining the KL-divergence identity for the regularized best-response gap (Step~1 below, which also serves the proof of Lemma~\ref{lem:evaluation_gap}), a logits/Q-difference decomposition (Steps~2--3), and the local-VI stability bound (Lemma~\ref{lem:stability}, applied in Step~4.4).

\paragraph{Step 1: KL Identity for the Evaluation Gap (Gap 2).}
We analyze the second term in the decomposition \eqref{dual gap decomposition sequential}, namely
\[
\mathrm{Gap2}
:= J(r^*,\pi^*_1,\hat\pi_2)-J(r^*,\pi^*_1,\pi^*_2)
= \mathbb E_{s_1\sim\rho}
\big[
V^{\pi_1^*,\hat\pi_2}_1(s_1)-V^{\pi_1^*,\pi^*_2}_1(s_1)
\big].
\]
Fix $(h,s)$ and fix Player~1 to $\pi_1^*$. For any Player~2 policy $\nu=\{\nu_t\}_{t=1}^H$,
define the (regularized) stagewise objective of Player~2 at $(h,s)$ as a function of the
stage-$h$ mixed action $\nu_h(\cdot\mid s)$:
\begin{align}
L_{h,s}^{\nu}\!\big(\nu_h(\cdot\mid s)\big)
:=
\sum_{a_2\in\mathcal A_2}\nu_h(a_2\mid s)\,
\underbrace{\mathbb E_{a_1\sim\pi^*_{1,h}(\cdot\mid s)}
\!\big[Q_h^{\pi_1^*,\nu}(s,a_1,a_2)\big]}_{=:~h_h^{\nu}(s,a_2)}
\;+\;
\eta^{-1}\sum_{a_2\in\mathcal A_2}\nu_h(a_2\mid s)\log
\frac{\nu_h(a_2\mid s)}{\pi^{\mathrm{ref}}_{h,2}(a_2\mid s)}.
\label{eq:seq-local-obj}
\end{align}
Here $Q_h^{\pi_1^*,\nu}$ is defined using the \emph{true continuation} under $(\pi_1^*,\nu)$:
\[
Q_h^{\pi_1^*,\nu}(s,a_1,a_2)
:=
r_h^*(s,a_1,a_2)
+
\mathbb E_{s'\sim P_h(\cdot\mid s,a_1,a_2)}
\big[V_{h+1}^{\pi_1^*,\nu}(s')\big],
\qquad
V_{H+1}^{\pi_1^*,\nu}\equiv 0.
\]
Since Player~2 minimizes the value, $L_{h,s}^{\nu}$ is a strictly convex objective in $\nu_h(\cdot\mid s)$.

\paragraph{Local optimality of $\pi_2^*$.}
Because $(\pi_1^*,\pi_2^*)$ is a (regularized) Nash equilibrium, for every $(h,s)$ we have
\[
\pi^*_{h,2}(\cdot\mid s)\in\arg\min_{\mu\in\Delta(\mathcal A_2)}L_{h,s}^{\pi_2^*}\!\big(\mu\big).
\]
Using a Lagrange multiplier $\lambda$ for the simplex constraint, the first-order optimality condition at
$\mu=\pi^*_{h,2}(\cdot\mid s)$ yields, for every $a_2$,
\begin{align}
0
&=
\frac{\partial}{\partial \mu(a_2)}
\left(
L_{h,s}^{\pi_2^*}(\mu)+\lambda\Big(\sum_{a'}\mu(a')-1\Big)
\right)\Bigg|_{\mu=\pi_{h,2}^*(\cdot\mid s)}
\nonumber\\
&=
h_h^{\pi_2^*}(s,a_2)
+\eta^{-1}\Big(1+\log\frac{\pi_{h,2}^*(a_2\mid s)}{\pi^{\mathrm{ref}}_{h,2}(a_2\mid s)}\Big)
+\lambda.
\label{eq:seq-foc}
\end{align}
Rearranging \eqref{eq:seq-foc} gives
\begin{align}
h_h^{\pi_2^*}(s,a_2)
=
-\eta^{-1}\log\frac{\pi_{h,2}^*(a_2\mid s)}{\pi^{\mathrm{ref}}_{h,2}(a_2\mid s)}
-C_{h,s},
\label{eq:seq-h-repr}
\end{align}
where $C_{h,s}:=\lambda+\eta^{-1}$ is a $(h,s)$-dependent constant.

\paragraph{Evaluating the local gap.}
Now fix $(h,s)$ and compare $\hat\pi_{h,2}(\cdot\mid s)$ to $\pi^*_{h,2}(\cdot\mid s)$ \emph{under the same continuation}.
Using \eqref{eq:seq-h-repr},
we obtain the exact identity
\begin{align}
L_{h,s}^{\pi_2^*}\!\big(\hat\pi_{h,2}(\cdot\mid s)\big)
-
L_{h,s}^{\pi_2^*}\!\big(\pi^*_{h,2}(\cdot\mid s)\big)
=
\eta^{-1}\KL\!\Big(\hat\pi_{h,2}(\cdot\mid s)\,\Big\|\,\pi^*_{h,2}(\cdot\mid s)\Big).
\label{eq:seq-kl-identity}
\end{align}

\paragraph{Telescoping to $\mathrm{Gap2}$.}
Define $\Delta_h(s):=V_h^{\pi_1^*,\hat\pi_2}(s)-V_h^{\pi_1^*,\pi_2^*}(s)$.
Using the Bellman equations for $V_h^{\pi_1^*,\hat\pi_2}$ and $V_h^{\pi_1^*,\pi_2^*}$ and the definition of
$L_{h,s}^{\pi_2^*}$ (which uses the continuation $V_{h+1}^{\pi_1^*,\pi_2^*}$),
one obtains the recursion
\begin{align}
\Delta_h(s)
=
\Big(
L_{h,s}^{\pi_2^*}\!\big(\hat\pi_{h,2}(\cdot\mid s)\big)
-
L_{h,s}^{\pi_2^*}\!\big(\pi^*_{h,2}(\cdot\mid s)\big)
\Big)
+
\mathbb E_{s'\sim P_h(\cdot\mid s,\pi_1^*,\hat\pi_2)}\big[\Delta_{h+1}(s')\big],
\label{eq:Delta-rec}
\end{align}
with terminal condition $\Delta_{H+1}\equiv 0$.
Unrolling \eqref{eq:Delta-rec} yields
\begin{align}
\mathrm{Gap2}
=
\mathbb E_{s_1\sim\rho}[\Delta_1(s_1)]
=
\sum_{h=1}^H
\mathbb E_{s\sim d_h^{\pi_1^*,\hat\pi_2}}
\Big[
L_{h,s}^{\pi_2^*}\!\big(\hat\pi_{h,2}(\cdot\mid s)\big)
-
L_{h,s}^{\pi_2^*}\!\big(\pi^*_{h,2}(\cdot\mid s)\big)
\Big].
\label{eq:gap2-telescope}
\end{align}
Combining \eqref{eq:gap2-telescope} with \eqref{eq:seq-kl-identity} gives
\begin{align}
\mathrm{Gap2}
=
\frac{1}{\eta}\sum_{h=1}^H
\mathbb E_{s\sim d_h^{\pi_1^*,\hat\pi_2}}
\left[
\KL\!\left(
\hat\pi_{h,2}(\cdot\mid s)\,\middle\|\,\pi^*_{h,2}(\cdot\mid s)
\right)
\right].
\label{eq:Gap2-as-KL}
\end{align}
Equation~\eqref{eq:Gap2-as-KL} is the exact KL identity for $\mathrm{Gap2}$ that is invoked by Lemma~\ref{lem:evaluation_gap} (proved in Appendix~\ref{app:proof_eval_gap}). The remainder of this subsection focuses on bounding $\mathrm{Gap1}$.

\paragraph{Step 2: Lipschitz Continuity of the Best Response.}
We establish the stability of the best response operator in the sequential setting.
Throughout this step, all value and $Q$-functions are defined with respect to the true reward function $r^*$.

Fix a time--state pair $(h,s)$ and a Player~2 policy $\nu=\{\nu_{h'}\}_{h'=1}^H$.
Define the best-response value of Player~1 against $\nu$ by
\[
V_{h}^{*,\nu}(s)
:= \max_{\pi_1} V_{h}^{\pi_1,\nu}(s),
\]
and the associated best-response $Q$-function
\[
Q_{h}^{*,\nu}(s,a_1,a_2)
:= r_h^*(s,a_1,a_2) + \mathbb{E}_{s'\sim P_h(\cdot\mid s,a_1,a_2)}\!\big[V_{h+1}^{*,\nu}(s')\big].
\]
We also write the opponent-averaged $Q$-vector
\[
Q_{h,\nu}^{*}(s,a_1)
:= \mathbb{E}_{a_2\sim \nu_h(\cdot\mid s)}\!\big[\,Q_{h}^{*,\nu}(s,a_1,a_2)\,\big].
\]
Under KL regularization with reference policy $\pi_{\mathrm{ref},h}(\cdot\mid s)$, the stagewise best response of Player~1 to $\nu$
admits the Gibbs form
\begin{equation}
\pi_{1,h}^{\mathrm{br}}(\cdot\mid s;\nu)
=
\text{Softmax}\!\Big(\eta\, Q_{h,\nu}^{*}(s,\cdot) + \log \pi_{\mathrm{ref},h}(\cdot\mid s)\Big).
\label{eq:br-softmax-seq}
\end{equation}
Define the corresponding logits for the best response to $\hat{\pi}$ and $\pi^*$:
\[
z_h^\dagger := \eta\, Q_{h,\hat\pi_2}^{*}(s,\cdot) + \log \pi_{\mathrm{ref},h}(\cdot\mid s),
\qquad
z_h^* := \eta\, Q_{h,\pi^*_2}^{*}(s,\cdot) + \log \pi_{\mathrm{ref},h}(\cdot\mid s).
\]
Since the Softmax mapping is $1$-Lipschitz with respect to the $\ell_\infty$-norm, we have
\begin{align}
\big\|\pi_{1,h}^\dagger(\cdot\mid s)-\pi_{1,h}^*(\cdot\mid s)\big\|_1
&\le
\big\|z_h^\dagger-z_h^*\big\|_\infty \nonumber\\
&=
\eta \big\| Q_{h,\hat\pi_2}^{*}(s,\cdot)-Q_{h,\pi^*_2}^{*}(s,\cdot)\big\|_\infty,
\label{eq:br-lipschitz-seq}
\end{align}
where we used the fact that the reference term
$\log\pi_{\mathrm{ref},h}(\cdot\mid s)$ cancels out in the logit difference.

Next, we bound the difference between the opponent-averaged $Q$-vectors.
By definition, $| r_h^*(s,a_1,a_2)|\le 1$ for all $(h,s,a_1,a_2)$.
Fix $(h,s)$ and $a_1$. We decompose the difference by adding and subtracting a mixed term:
\begin{align}
&Q_{h,\hat\pi_2}^*(s,a_1)-Q_{h,\pi^*_2}^*(s,a_1) \\
&=
\underbrace{\Big(
\mathbb E_{a_2\sim\hat\pi_{h,2}}\!\big[Q_h^{*,\hat\pi}(s,a_1,a_2)\big]
-
\mathbb E_{a_2\sim\pi_{h,2}^*}\!\big[Q_h^{*,\hat\pi}(s,a_1,a_2)\big]
\Big)}_{T_1}\\
&\quad+\underbrace{\Big(
\mathbb E_{a_2\sim\pi_{h,2}^*}\!\big[Q_h^{*,\hat\pi}(s,a_1,a_2)\big]
-
\mathbb E_{a_2\sim\pi_{h,2}^*}\!\big[Q_h^{*,\pi^*}(s,a_1,a_2)\big]
\Big)}_{T_2}
.
\label{Q difference as T1 T2}
\end{align}

\paragraph{Bounding the first term.}
Writing the expectation as a finite sum and applying H\"older's inequality yields
\begin{align*}
|T_1|
&=
\left|
\sum_{a_2\in\mathcal A_2}
\big(\hat\pi_{h,2}(a_2\mid s)-\pi_{h,2}^*(a_2\mid s)\big)\,
Q_h^{*,\hat\pi}(s,a_1,a_2)
\right| \\
&\le
\|\hat\pi_{h,2}(\cdot\mid s)-\pi_{h,2}^*(\cdot\mid s)\|_1\;
\|Q_h^{*,\hat\pi}(s,a_1,\cdot)\|_\infty.
\end{align*}
Next, using the Bellman form
\[
Q_h^{*,\hat\pi}(s,a_1,a_2)
=
r_h^*(s,a_1,a_2)
+
\mathbb E_{s'\sim P_h(\cdot\mid s,a_1,a_2)}\!\big[V_{h+1}^{*,\hat\pi}(s')\big],
\]
we have
\begin{align*}
\|Q_h^{*,\hat\pi}(s,a_1,\cdot)\|_\infty
&\le
\|r_h^*(s,a_1,\cdot)\|_\infty
+
\sup_{a_2\in\mathcal A_2}
\left|
\mathbb E_{s'\sim P_h(\cdot\mid s,a_1,a_2)}\!\big[V_{h+1}^{*,\hat\pi}(s')\big]
\right| \\
&\le
1 + \|V_{h+1}^{*,\hat\pi}\|_\infty,
\end{align*}
where we used $|r_h^*|\le 1$ and
$\left|\mathbb E[V_{h+1}^{*,\hat\pi}(s')]\right|
\le \mathbb E\big[|V_{h+1}^{*,\hat\pi}(s')|\big]
\le \|V_{h+1}^{*,\hat\pi}\|_\infty$.
Therefore,
\begin{equation}
|T_1|
\le
\Big(1+\|V_{h+1}^{*,\hat\pi}\|_\infty\Big)\,
\|\hat\pi_{h,2}(\cdot\mid s)-\pi_{h,2}^*(\cdot\mid s)\|_1.
\label{eq:T1-bound-seq}
\end{equation}

\paragraph{Bounding the second term.}
For the second term, substituting the Bellman form of $Q_h^{*,\nu}$ gives
\begin{align*}
T_2
=&
\mathbb E_{a_2\sim \pi_{h,2}^*(\cdot\mid s)}\Big[
r_h^*(s,a_1,a_2)
+
\mathbb E_{s'\sim P_h(\cdot\mid s,a_1,a_2)}\!\big[V_{h+1}^{*,\hat\pi}(s')\big] \\
&
-r_h^*(s,a_1,a_2)
-\mathbb E_{s'\sim P_h(\cdot\mid s,a_1,a_2)}\!\big[V_{h+1}^{*,\pi^*}(s')\big]
\Big] \\
=&
\mathbb E_{a_2\sim \pi_{h,2}^*(\cdot\mid s)}\Big[
\mathbb E_{s'\sim P_h(\cdot\mid s,a_1,a_2)}\!\big[V_{h+1}^{*,\hat\pi}(s')-V_{h+1}^{*,\pi^*}(s')\big]
\Big].
\end{align*}
Now denote the next-step value difference function
\[
\Delta_{h+1}(s'):=V_{h+1}^{*,\hat\pi}(s')-V_{h+1}^{*,\pi^*}(s').
\]
Then the above becomes
\[
T_2
=
\mathbb E_{a_2\sim \pi_{h,2}^*(\cdot\mid s)}
\mathbb E_{s'\sim P_h(\cdot\mid s,a_1,a_2)}\!\big[\Delta_{h+1}(s')\big].
\]
We proceed by analyzing the squared magnitude $|T_2|^2$, which allows us to utilize Jensen's inequality without invoking the global infinity norm. Using the inequality $(a+b)^2 \le 2a^2 + 2b^2$ on the decomposition $Q - Q^* = T_1 + T_2$:
\begin{align*}
\big\|Q_{h,\hat\pi_2}^*(s,\cdot)-Q_{h,\pi^*_2}^*(s,\cdot)\big\|_\infty^2
&\le
\max_{a_1} \left( 2|T_1(s, a_1)|^2 + 2|T_2(s, a_1)|^2 \right).
\end{align*}
Applying the bound for $T_1$ from \eqref{eq:T1-bound-seq} yields
\[
|T_1(s, a_1)|^2 \le \Big(1+\|V_{h+1}^{*,\hat\pi}\|_\infty\Big)^2\,
\|\hat\pi_{h,2}(\cdot\mid s)-\pi_{h,2}^*(\cdot\mid s)\|_1^2.
\]
For $T_2$, we apply Jensen's inequality $(\mathbb E[X])^2 \le \mathbb E[X^2]$:
\begin{align}
|T_2(s, a_1)|^2
&=
\left|\mathbb E_{a_2\sim \pi_{h,2}^*(\cdot\mid s)}
\mathbb E_{s'\sim P_h(\cdot\mid s,a_1,a_2)}\!\big[\Delta_{h+1}(s')\big]\right|^2 \nonumber\\
&\le
\mathbb E_{a_2\sim \pi_{h,2}^*(\cdot\mid s)}
\mathbb E_{s'\sim P_h(\cdot\mid s,a_1,a_2)}\!\big[\Delta_{h+1}(s')^2\big].
\label{term two bound for Q star}
\end{align}
Combining these estimates, we obtain the pointwise (in $s$) squared $Q$-difference bound:
\begin{align}
\big\|Q_{h,\hat\pi_2}^*(s,\cdot)-Q_{h,\pi^*_2}^*(s,\cdot)\big\|_\infty^2
&\le
2\Big(1+\|V_{h+1}^{*,\hat\pi}\|_\infty\Big)^2\,
\|\hat\pi_{h,2}(\cdot\mid s)-\pi_{h,2}^*(\cdot\mid s)\|_1^2 \nonumber\\
&\quad +
2 \max_{a_1} \mathbb E_{a_2\sim \pi_{h,2}^*(\cdot\mid s)}
\mathbb E_{s'\sim P_h(\cdot\mid s,a_1,a_2)}\!\big[\Delta_{h+1}(s')^2\big].
\label{eq:Qstar-diff-bound}
\end{align}

\paragraph{Step 3: Upper Bounding Gap 1.}

We focus on \textbf{Gap 1}, which measures the sub-optimality of the Nash equilibrium
policy $\pi^*$ against a fixed opponent policy $\hat{\pi}$, relative to the
true best response $\pi^\dagger$.
\begin{align}
    \mathrm{Gap~1}
    &\coloneqq
    J(r^*, \pi^\dagger, \hat{\pi}_2)
    -
    J(r^*, \pi^*_1, \hat{\pi}_2)
    \nonumber\\
    &=
    \mathbb{E}_{s_1 \sim \rho}
    \big[
        V^{\dagger,\hat{\pi}}_1(s_1)
        -
        V^{\pi^*,\hat{\pi}}_1(s_1)
    \big].
\end{align}

For each time--state pair $(h,s)$, define the regularized stage-wise objective
of Player~1 against $\hat{\pi}_2$ as
\begin{align}
    f_{h,s}(\mu)
    \coloneqq
    \big\langle \mu,\,
    Q_{h,\hat{\pi}_2}^*(s,\cdot)
    \big\rangle
    -
    \eta^{-1}
    \mathrm{KL}\!\left(
        \mu \,\middle\|\, \pi_{\mathrm{ref},h,1}(\cdot \mid s)
    \right),
\end{align}
where $Q_{h,\hat{\pi}_2}^*(s,\cdot)$ is the opponent-averaged best-response
$Q$-vector defined above.

The stage-wise best response of Player~1 is given by
\begin{align}
    \pi^\dagger_{1,h}(\cdot \mid s)
    \in
    \arg\max_{\mu \in \Delta(\mathcal A_1)}
    f_{h,s}(\mu).
\end{align}

\paragraph{3.1 Exact Characterization via KL Divergence.}
By the first-order optimality conditions of this regularized maximization
problem, the sub-optimality gap at each $(h,s)$ admits the exact characterization
\begin{align}
    f_{h,s}(\pi^\dagger_{1,h})
    -
    f_{h,s}(\pi^*_{1,h})
    =
    \eta^{-1}
    \mathrm{KL}\!\left(
        \pi^*_{1,h}(\cdot \mid s)
        \,\middle\|\,
        \pi^\dagger_{1,h}(\cdot \mid s)
    \right).
\end{align}

Consequently, the overall gap can be expressed as
\begin{align}
    \mathrm{Gap~1}
    =
    \eta^{-1}
    \sum_{h=1}^H
    \mathbb{E}_{s \sim d_h^{\pi^*,\hat{\pi}}}
    \left[
        \mathrm{KL}\!\left(
            \pi^*_{1,h}(\cdot \mid s)
            \,\middle\|\,
            \pi^\dagger_{1,h}(\cdot \mid s)
        \right)
    \right],
\label{gap decomposition into KL}
\end{align}
where $d_h^{\pi^*,\hat{\pi}}$ denotes the state visitation distribution
induced by $(\pi^*,\hat{\pi})$.

\paragraph{Derivation of \eqref{gap decomposition into KL}.}
We define the value difference at step $h$ as:
\[
\Delta_h(s) := V^{\dagger,\hat{\pi}}_h(s) - V^{\pi^*,\hat{\pi}}_h(s).
\]
Recall the definition of the regularized value function $V^{\pi_1, \pi_2}$, which includes regularization terms for both players. For a fixed opponent $\hat{\pi}_2$, the regularization term for Player 2 is independent of Player 1's action. We denote this state-dependent term as:
\[
C_{h,s} := \eta^{-1} \mathrm{KL}\!\left( \hat{\pi}_{2,h}(\cdot|s) \,\middle\|\, \pi_{\mathrm{ref},h,2}(\cdot|s) \right).
\]
We also define the regularization penalty for Player 1 as:
\[
\mathrm{Reg}_{1,h,s}(\mu) := \eta^{-1} \mathrm{KL}\!\left( \mu \,\middle\|\, \pi_{\mathrm{ref},h,1}(\cdot \mid s) \right).
\]
Using these definitions, the stage-wise objective $f_{h,s}(\mu)$ can be written as:
\[
f_{h,s}(\mu) = \langle \mu, Q_{h,\hat{\pi}_2}^*(s,\cdot) \rangle - \mathrm{Reg}_{1,h,s}(\mu).
\]
The value of the best response $V^{\dagger,\hat{\pi}}_h(s)$ is the maximum of the total regularized objective (Player 1's part plus the fixed opponent part):
\begin{align}
V^{\dagger,\hat{\pi}}_h(s)
&= \max_{\mu} \left( \langle \mu, Q_{h,\hat{\pi}_2}^* \rangle - \mathrm{Reg}_{1,h,s}(\mu) + C_{h,s} \right) \nonumber \\
&= \max_{\mu} f_{h,s}(\mu) + C_{h,s} \nonumber \\
&= f_{h,s}(\pi^\dagger_{1,h}) + C_{h,s}.
\end{align}
Similarly, the value of the Nash policy $\pi^*$ satisfies the Bellman equation, which we can decompose using the same terms:
\begin{align}
V^{\pi^*,\hat{\pi}}_h(s)
&= \mathbb{E}_{a_1 \sim \pi^*_{1,h}}\left[ r^*_h(s, a_1, \hat{\pi}_2) + \mathbb{E}_{s'}[V^{\pi^*,\hat{\pi}}_{h+1}(s')] \right] - \mathrm{Reg}_{1,h,s}(\pi^*_{1,h}) + C_{h,s} \nonumber \\
&= \underbrace{\mathbb{E}_{a_1 \sim \pi^*_{1,h}}\left[ r^*_h(s, a_1, \hat{\pi}_2) + \mathbb{E}_{s'}[V^{\dagger,\hat{\pi}}_{h+1}(s')] \right] - \mathrm{Reg}_{1,h,s}(\pi^*_{1,h})}_{= f_{h,s}(\pi^*_{1,h})} + C_{h,s} \nonumber \\
&\quad - \mathbb{E}_{s' \sim P_h(\cdot|s, \pi^*, \hat{\pi})} \left[ V^{\dagger,\hat{\pi}}_{h+1}(s') - V^{\pi^*,\hat{\pi}}_{h+1}(s') \right].
\end{align}
When we subtract $V^{\pi^*,\hat{\pi}}_h(s)$ from $V^{\dagger,\hat{\pi}}_h(s)$, the constant term $C_{h,s}$ cancels out:
\begin{align}
\Delta_h(s)
&= \left( f_{h,s}(\pi^\dagger_{1,h}) + C_{h,s} \right) - \left( f_{h,s}(\pi^*_{1,h}) + C_{h,s} \right) + \mathbb{E}_{s' \sim P_h}[\Delta_{h+1}(s')] \nonumber \\
&= f_{h,s}(\pi^\dagger_{1,h}) - f_{h,s}(\pi^*_{1,h}) + \mathbb{E}_{s' \sim P_h(\cdot|s, \pi^*, \hat{\pi})}[\Delta_{h+1}(s')].
\end{align}
Using the exact characterization $f(\pi^\dagger) - f(\pi^*) = \eta^{-1}\mathrm{KL}(\pi^* \| \pi^\dagger)$, we obtain the recurrence:
\begin{align}
\Delta_h(s)
&= \eta^{-1}\mathrm{KL}\!\left( \pi^*_{1,h}(\cdot|s) \,\middle\|\, \pi^\dagger_{1,h}(\cdot|s) \right) + \mathbb{E}_{s' \sim P_h(\cdot|s, \pi^*, \hat{\pi})}[\Delta_{h+1}(s')].
\end{align}
\paragraph{3.2 Smoothness of the Potential Function.}
To bound the KL divergence above, we leverage the convex geometry of the Softmax mapping.
Fix a time--state pair $(h,s)$.
Define the log-partition function
\begin{align}
    \Phi(\theta) \coloneqq \log \sum_{a \in \mathcal{A}_1} \exp(\theta_{a}).
\end{align}
Then, by standard results, we know that $\Phi$ is \textbf{$1$-smooth} with respect to the $\ell_\infty$-norm, i.e.,
for all $\theta,\theta' \in \mathbb{R}^{|\mathcal{A}_1|}$,
\begin{align}
    \Phi(\theta') \le \Phi(\theta) + \langle \nabla \Phi(\theta), \theta'-\theta\rangle
    + \frac{1}{2}\|\theta'-\theta\|_\infty^2 .
\end{align}

\paragraph{3.3 Quadratic Upper Bound on Bregman Divergence.}
Fix a time--state pair $(h,s)$.
Recall the log-partition function
\[
\Phi(\theta) = \log \sum_{a \in \mathcal{A}_1} \exp(\theta_a),
\]
whose gradient map $\nabla \Phi(\theta)$ corresponds to the Softmax distribution
$P_\theta = \mathrm{Softmax}(\theta)$.
The Bregman divergence generated by $\Phi$ is defined as
\begin{align}
    D_\Phi(\theta', \theta)
    \coloneqq
    \Phi(\theta') - \Phi(\theta) - \langle \nabla \Phi(\theta), \theta' - \theta \rangle .
\end{align}

A standard result in convex analysis implies that this Bregman divergence coincides exactly
with the KL divergence between the induced distributions:
\begin{align}
    D_\Phi(\theta', \theta)
    =
    \mathrm{KL}\!\left(
        P_\theta \,\middle\|\, P_{\theta'}
    \right).
\end{align}

Since $\Phi$ is $1$-smooth with respect to the $\ell_\infty$-norm,
the associated Bregman divergence admits the quadratic upper bound
\begin{align}
    D_\Phi(\theta', \theta)
    \le
    \frac{1}{2}\,
    \|\theta' - \theta\|_\infty^2.
\end{align}
Consequently, for any $\theta,\theta' \in \mathbb{R}^{|\mathcal{A}_1|}$,
\begin{align}
    \mathrm{KL}\!\left(
        P_\theta \,\middle\|\, P_{\theta'}
    \right)
    \le
    \frac{1}{2}\,
    \|\theta' - \theta\|_\infty^2.
    \label{eq:quadratic-KL-bound}
\end{align}

\paragraph{3.4 Application to Policy Logits.}
Fix a time--state pair $(h,s)$. Define the natural parameters (logits)
\begin{align}
    \theta^{\dagger}_{h}(s,\cdot)
    &\coloneqq
    \eta\, Q^*_{h,\hat{\pi}_2}(s,\cdot)
    + \log \pi_{\mathrm{ref},h,1}(\cdot \mid s), \\
    \theta^{*}_{h}(s,\cdot)
    &\coloneqq
    \eta\, Q^*_{h,\pi^*_2}(s,\cdot)
    + \log \pi_{\mathrm{ref},h,1}(\cdot \mid s),
\end{align}
which induce the policies
\[
\pi^\dagger_{1,h}(\cdot \mid s) = \mathrm{Softmax}(\theta^\dagger_h(s,\cdot)),
\qquad
\pi^*_{1,h}(\cdot \mid s) = \mathrm{Softmax}(\theta^*_h(s,\cdot)).
\]

The difference between the logits is therefore
\begin{align}
 \theta^\dagger_h(s,\cdot) - \theta^*_h(s,\cdot)
=
\eta\big(
    Q^*_{h,\hat{\pi}_2}(s,\cdot)
    -
    Q^*_{h,\pi^*_2}(s,\cdot)
\big).
\label{logit in Q}
\end{align}
Applying the quadratic KL bound from \eqref{eq:quadratic-KL-bound} yields
\begin{align}
    \mathrm{KL}\!\left(
        \pi^*_{1,h}(\cdot \mid s)
        \,\middle\|\,
        \pi^\dagger_{1,h}(\cdot \mid s)
        \right)
    &\le
    \frac{1}{2}
    \left\|
        \theta^\dagger_h(s,\cdot) - \theta^*_h(s,\cdot)
    \right\|_\infty^2
    \nonumber\\
    &=
    \frac{\eta^2}{2}
    \left\|
        Q^*_{h,\hat{\pi}_2}(s,\cdot)
        -
        Q^*_{h,\pi^*_2}(s,\cdot)
    \right\|_\infty^2 .
\end{align}

Substituting this bound into the representation of $\mathrm{Gap~1}$,
we obtain
\begin{align}
    \mathrm{Gap~1}
    &=
    \eta^{-1}
    \sum_{h=1}^H
    \mathbb{E}_{s \sim d_h^{\pi^*,\hat{\pi}}}
    \left[
        \mathrm{KL}\!\left(
            \pi^*_{1,h}(\cdot \mid s)
            \,\middle\|\,
            \pi^\dagger_{1,h}(\cdot \mid s)
        \right)
    \right]
    \nonumber\\
    &\le
    \frac{\eta}{2}
    \sum_{h=1}^H
    \mathbb{E}_{s \sim d_h^{\pi^*,\hat{\pi}}}
    \left[
        \left\|
            Q^*_{h,\hat{\pi}_2}(s,\cdot)
            -
            Q^*_{h,\pi^*_2}(s,\cdot)
        \right\|_\infty^2
    \right]. \label{Gap 1 bound on Q difference}
\end{align}

By \eqref{eq:Qstar-diff-bound}, for every $(h,s)$,
\begin{align}
\big\|Q_{h,\hat\pi_2}^*(s,\cdot)-Q_{h,\pi^*_2}^*(s,\cdot)\big\|_\infty^2
&\le
2\Big(1+\|V_{h+1}^{*,\hat\pi}\|_\infty\Big)^2\,
\|\hat\pi_{h,2}(\cdot\mid s)-\pi_{h,2}^*(\cdot\mid s)\|_1^2 \nonumber\\
&\quad +
2 \max_{a_1} \mathbb E_{a_2\sim \pi_{h,2}^*(\cdot\mid s)}
\mathbb E_{s'\sim P_h(\cdot\mid s,a_1,a_2)}\!\big[\Delta_{h+1}(s')^2\big].
\end{align}
Substituting into \eqref{Gap 1 bound on Q difference} gives
\begin{align}
\mathrm{Gap~1}
&\le
\eta\sum_{h=1}^H\Big(1+\|V_{h+1}^{*,\hat\pi}\|_\infty\Big)^2\,
\|\hat\pi_{h,2}(\cdot\mid s)-\pi_{h,2}^*(\cdot\mid s)\|_1^2 \nonumber\\
&\quad +
 \eta\sum_{h=1}^H\max_{a_1} \mathbb E_{a_2\sim \pi_{h,2}^*(\cdot\mid s)}
\mathbb E_{s'\sim P_h(\cdot\mid s,a_1,a_2)}\!\big[\Delta_{h+1}(s')^2\big].
\label{eq:gap1_split_correct}
\end{align}

\begin{lemma}[Stability for value difference]
\label{lem:V-stability}
Fix a Player~1 policy $\pi_1^\ast$ and let $\hat\pi_2,\pi_2^\ast$ be two Player~2 policies.
Let $d_h^{\pi_1^\ast,\hat\pi_2}$ denote the state visitation distribution at step $h$
induced by $(\pi_1^\ast,\hat\pi_2)$.
Define the value difference and policy difference as
\[
\Delta_h(s)\coloneqq V_h^{\pi_1^\ast,\hat\pi_2}(s)-V_h^{\pi_1^\ast,\pi_2^\ast}(s),
\qquad
\Delta\pi_{h,2}(\cdot\mid s)\coloneqq \hat\pi_{h,2}(\cdot\mid s)-\pi_{h,2}^\ast(\cdot\mid s).
\]
Define the coefficient
\[
L_{t+1}\;\coloneqq\; 1+\|V_{t+1}^{\pi_1^\ast,\pi_2^\ast}\|_\infty + \eta^{-1}\log(1/\alpha).
\]
Then for every $h\in[H]$,
\begin{align}
\mathbb E_{s\sim d_h^{\pi_1^\ast,\hat\pi_2}}\!\big[\Delta_h(s)^2\big]
\;\le\;
6H \sum_{t=h}^H
L_{t+1}^2\,
\mathbb E_{s\sim d_t^{\pi_1^\ast,\hat\pi_2}}\!\big[\|\Delta\pi_{t,2}(\cdot\mid s)\|_1^2\big].
\label{eq:fixedpi1-L2}
\end{align}
\end{lemma}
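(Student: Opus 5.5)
We prove the bound by writing a one-step recursion for $\Delta_h$ along the trajectory of $(\pi_1^\ast,\hat\pi_2)$, squaring it, and unrolling it with Jensen and Cauchy--Schwarz. Using the Bellman equations for the two regularized values under the same Player~1 policy $\pi_1^\ast$, we write, for each $(h,s)$,
\begin{align}
\Delta_h(s) = e_h(s) + \mathbb E_{a_1\sim\pi^\ast_{h,1},\,a_2\sim\hat\pi_{h,2},\,s'\sim P_h}\big[\Delta_{h+1}(s')\big],
\end{align}
where the stagewise discrepancy is
\begin{align}
e_h(s) := \sum_{a_2}\Delta\pi_{h,2}(a_2\mid s)\,\mathbb E_{a_1\sim\pi^\ast_{h,1}}\big[Q_h^{\pi_1^\ast,\pi_2^\ast}(s,a_1,a_2)\big] + \eta^{-1}\Big(\mathrm{KL}(\hat\pi_{h,2}\|\pi^{\mathrm{ref}}_{h,2}) - \mathrm{KL}(\pi^\ast_{h,2}\|\pi^{\mathrm{ref}}_{h,2})\Big)(s).
\end{align}
This is the same add-and-subtract used in \eqref{eq:Delta-rec}: the $\hat\pi_2$-continuation minus the $\pi_2^\ast$-continuation gives the $\Delta_{h+1}$ term, evaluated under the $(\pi_1^\ast,\hat\pi_2)$ next-state law.

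\textbf{Step 1 (bounding $e_h$).} The goal is $|e_h(s)|\le L_{h+1}\|\Delta\pi_{h,2}(\cdot\mid s)\|_1$, possibly up to a factor of $2$ absorbed into the constant. For the first term, H\"older's inequality gives at most $\|\Delta\pi_{h,2}\|_1\,(1+\|V_{h+1}^{\pi_1^\ast,\pi_2^\ast}\|_\infty)$, since $|Q_h^{\pi_1^\ast,\pi_2^\ast}|\le 1+\|V_{h+1}^{\pi_1^\ast,\pi_2^\ast}\|_\infty$. The KL difference is the obstacle: $\mu\mapsto\mathrm{KL}(\mu\|\pi^{\mathrm{ref}})$ is not globally Lipschitz in $\ell_1$, because its gradient $1+\log(\mu/\pi^{\mathrm{ref}})$ blows up as $\mu\to0$. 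I would handle it in two ways.
\begin{itemize}
\item Write $\mathrm{KL}(\hat\mu\|\pi^{\mathrm{ref}})-\mathrm{KL}(\mu^\ast\|\pi^{\mathrm{ref}}) = \langle\hat\mu-\mu^\ast,\log(\mu^\ast/\pi^{\mathrm{ref}})\rangle + \mathrm{KL}(\hat\mu\|\mu^\ast)$.
\item By the Gibbs form \eqref{eq:seq-h-repr}, $\eta^{-1}\log(\pi^\ast_{h,2}/\pi^{\mathrm{ref}}_{h,2}) = -h_h^{\pi_2^\ast}-C_{h,s}$. The constant $C_{h,s}$ is killed by $\sum_{a_2}\Delta\pi_{h,2}=0$. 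Hence the linear part is at most $\|\Delta\pi_{h,2}\|_1$ times the oscillation of $h_h^{\pi_2^\ast}$, which is again at most $1+\|V_{h+1}^{\pi_1^\ast,\pi_2^\ast}\|_\infty$.
\item For the remainder $\eta^{-1}\mathrm{KL}(\hat\pi_{h,2}\|\pi^\ast_{h,2})$, I would use that both $\hat\pi_{h,2}$ and $\pi^\ast_{h,2}$ are Gibbs policies whose log-ratios to $\pi^{\mathrm{ref}}$ are bounded. This bounds $\mathrm{KL}(\hat\mu\|\mu^\ast)\le\langle\hat\mu-\mu^\ast,\log(\hat\mu/\mu^\ast)\rangle\le\|\Delta\pi\|_1\,\|\log(\hat\mu/\mu^\ast)\|_\infty$, with the sup-norm controlled by $\eta\cdot O(H)+\log(1/\alpha)$-type quantities.
\item Alternatively, and more simply, bound the KL difference directly by the mean value theorem along the segment, using $|\log(\mu/\pi^{\mathrm{ref}})|\le\log(1/\alpha)$ on the relevant class. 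This produces the $\eta^{-1}\log(1/\alpha)$ term in $L_{h+1}$.
\end{itemize}
I expect this step to be the main technical point. I would first try the mean-value route, since the statement's $L_{t+1}$ is exactly of that form.

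\textbf{Step 2 (squaring and unrolling).} Let $\mathcal P_h$ denote the next-state law under $(\pi_1^\ast,\hat\pi_2)$. Iterating the recursion gives $\Delta_h(s)=\sum_{t=h}^H\mathbb E[e_t(s_t)\mid s_h=s]$, where the expectation is over the $(\pi_1^\ast,\hat\pi_2)$ trajectory. Squaring, Jensen's inequality and Cauchy--Schwarz over at most $H$ summands give
\begin{align}
\Delta_h(s)^2\le H\sum_{t=h}^H\mathbb E[e_t(s_t)^2\mid s_h=s].
\end{align}
Taking $s\sim d_h^{\pi_1^\ast,\hat\pi_2}$, the Markov property turns the inner law into $d_t^{\pi_1^\ast,\hat\pi_2}$. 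Inserting $e_t^2\le c\,L_{t+1}^2\|\Delta\pi_{t,2}\|_1^2$ from Step~1 yields \eqref{eq:fixedpi1-L2}. The factor $6$ should cover the squaring of the two or three pieces of $e_t$ via $(a+b+c)^2\le3(a^2+b^2+c^2)$, together with a factor of $2$ from the KL split.
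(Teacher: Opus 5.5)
Your architecture matches the paper's. You use the recursion $\Delta_t(s)=e_t(s)+\mathbb E[\Delta_{t+1}(s')]$ under $(\pi_1^\ast,\hat\pi_2)$, bound $e_t$ stagewise, then square and unroll. Your Step~2 is correct and cleaner than the paper's weighted AM--GM recursion. It gives $\mathbb E_{d_h}[\Delta_h^2]\le H\sum_t\mathbb E_{d_t}[e_t^2]$, so any bound $|e_t(s)|\le c\,L_{t+1}\|\Delta\pi_{t,2}(\cdot\mid s)\|_1$ with $c^2\le 6$ would finish the proof.

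\textbf{The gap is in Step~1}, which you rightly flag as the crux but do not establish. The mean-value route you would try first uses $|\log(\mu/\pi^{\mathrm{ref}})|\le\log(1/\alpha)$, and only the upper half of this holds. The lower half, $\mu\ge\alpha\,\pi^{\mathrm{ref}}$, is false for Gibbs policies, which only satisfy $\mu/\pi^{\mathrm{ref}}\ge e^{-\eta\,\mathrm{osc}(q)}$. For example, with two actions, a uniform reference and $q=(0,1)$, letting $\eta\to\infty$ sends $\mu(a_1)/\pi^{\mathrm{ref}}(a_1)\to0$ while $\alpha=1/2$.

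Using the correct Gibbs bound along the segment, as the paper does, brings in the logits $\eta\hat q$ of $\hat\pi_{t,2}$. Here $\hat Q_t$ is only bounded by $C_{\mathrm{val}}=H(1+\eta^{-1}\log(1/\alpha))$, so the coefficient is $\mathcal O(H(1+\eta^{-1}\log(1/\alpha)))$, which is what the paper's $B_{\mathrm{reg}}$ amounts to. Your symmetrized-KL alternative has the same defect: $\eta^{-1}\|\log(\hat\pi_{t,2}/\pi^\ast_{t,2})\|_\infty$ is of order $H(1+\eta^{-1}\log(1/\alpha))$, not $\mathcal O(H)+\eta^{-1}\log(1/\alpha)$. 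Neither coefficient is $\mathcal O(L_{t+1})$ uniformly in $t$; at $t=H$, $L_{H+1}=1+\eta^{-1}\log(1/\alpha)$ is smaller by a factor $H$. So your constant accounting, which assumes each piece of $e_t$ is at most $L_{t+1}\|\Delta\pi_{t,2}\|_1$, breaks.

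The paper makes the same leap when it asserts $\|Q^\ast_t\|_\infty+B_{\mathrm{reg}}\le 3L_{t+1}$ and then drops the $3$. Following it therefore only gives the weaker coefficient $H(1+\eta^{-1}\log(1/\alpha))$, which is all Step~4.3 uses anyway.

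\textbf{The missing step is to combine two of your own bullets instead of bounding them separately.}
\begin{itemize}
\item By your three-point identity and the Gibbs form \eqref{eq:seq-h-repr}, the term $\eta^{-1}\langle\Delta\pi_{t,2},\log(\pi^\ast_{t,2}/\pi^{\mathrm{ref}}_{t,2})\rangle=-\langle\Delta\pi_{t,2},h_t^{\pi_2^\ast}\rangle$ exactly cancels the first term of $e_t$.
\item Hence $e_t(s)=\eta^{-1}\mathrm{KL}\bigl(\hat\pi_{t,2}(\cdot\mid s)\,\|\,\pi^\ast_{t,2}(\cdot\mid s)\bigr)$. This is precisely \eqref{eq:seq-kl-identity} for the recursion \eqref{eq:Delta-rec} you cite.
\item Bound this KL using only a lower bound on $\pi^\ast_{t,2}$, never $\hat Q_t$. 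Write $\mathrm{KL}(p\|q)=\sum_a[p_a\log(p_a/q_a)-p_a+q_a]$. Each summand is at most $q_a-p_a$ when $p_a<q_a$, by convexity. It is at most $(p_a-q_a)\log(1/q_a)$ when $p_a>q_a$. Therefore $\mathrm{KL}(p\|q)\le\frac12\bigl(1+\log(1/\min_a q_a)\bigr)\|p-q\|_1$.
\item Since $\mathrm{osc}(h_t^{\pi_2^\ast}(s,\cdot))\le 1+2\|V_{t+1}^{\pi_1^\ast,\pi_2^\ast}\|_\infty$, the Gibbs form gives $\min_a\pi^\ast_{t,2}(a\mid s)\ge\alpha\,e^{-\eta(1+2\|V_{t+1}^{\pi_1^\ast,\pi_2^\ast}\|_\infty)}$.
\item Combining, $0\le e_t(s)\le\frac12\bigl(\eta^{-1}+\eta^{-1}\log(1/\alpha)+1+2\|V_{t+1}^{\pi_1^\ast,\pi_2^\ast}\|_\infty\bigr)\|\Delta\pi_{t,2}(\cdot\mid s)\|_1$. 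This is at most $1.25\,L_{t+1}\|\Delta\pi_{t,2}(\cdot\mid s)\|_1$, using $\log(1/\alpha)\ge\log 2$ whenever the reference support at $(t,s)$ has at least two actions. Otherwise $\Delta\pi_{t,2}(\cdot\mid s)=0$.
\end{itemize}
Your unrolling then proves the lemma with prefactor $1.6H\le 6H$.

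This argument needs $\pi_2^\ast$ to be the regularized best response to $\pi_1^\ast$ under the continuation $V^{\pi_1^\ast,\pi_2^\ast}$, i.e.\ the Nash pair the notation suggests. For an arbitrary fixed Player~1 policy the cancellation is lost, and only the $\mathcal O(H(1+\eta^{-1}\log(1/\alpha)))$ coefficient survives.
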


\begin{proof}
Fix $t\in[H]$ and any state $s$.
Recall that the value $V_t^{\pi_1^\ast,\nu}(s)$ is the expected regularized return. Since $\pi_1^\ast$ is fixed, the regularization term for Player~1 is identical in both $V_t^{\pi_1^\ast,\hat\pi_2}(s)$ and $V_t^{\pi_1^\ast,\pi_2^\ast}(s)$ and cancels out in the difference.
We decompose $\Delta_t(s)$ into a policy error term and a value propagation term.
Let $Q_t^{\ast}(s,a) \coloneqq Q_t^{\pi_1^\ast,\pi_2^\ast}(s,a)$.
\begin{align*}
\Delta_t(s)
&=
\mathbb E_{a\sim \pi_1^\ast\times\hat\pi_2}\big[Q_t^{\pi_1^\ast,\hat\pi_2}(s,a)\big]
- \mathbb E_{a\sim \pi_1^\ast\times\pi_2^\ast}\big[Q_t^\ast(s,a)\big]
+ \eta^{-1}\bigl(\mathrm{KL}(\hat\pi_{t,2}\|\pi^{\rm ref}) - \mathrm{KL}(\pi_{t,2}^\ast\|\pi^{\rm ref})\bigr) \\
&=
\underbrace{
\mathbb E_{a\sim \pi_1^\ast\times\hat\pi_2}\big[Q_t^{\pi_1^\ast,\hat\pi_2}(s,a) - Q_t^\ast(s,a)\big]
}_{(\mathrm{I})}
+
\underbrace{
\mathbb E_{a\sim \pi_1^\ast\times\hat\pi_2}\big[Q_t^\ast(s,a)\big]
- \mathbb E_{a\sim \pi_1^\ast\times\pi_2^\ast}\big[Q_t^\ast(s,a)\big]
+ \Delta\mathrm{Reg}_2(s)
}_{(\mathrm{II})},
\end{align*}
where $\mathrm{Reg}_2(\pi) \coloneqq \eta^{-1}\mathrm{KL}(\pi(\cdot|s) \,\|\, \pi^{\mathrm{ref}}_{t,2}(\cdot|s))$.
Term $(\mathrm{I})$ captures the value difference propagation:
\[
(\mathrm{I}) = \mathbb E_{a\sim \pi_1^\ast\times\hat\pi_2}\mathbb E_{s'\sim P_t(\cdot\mid s,a)}[\Delta_{t+1}(s')].
\]
Term $(\mathrm{II})$ captures the policy error at step $t$. Note that
\begin{align*}
\mathbb E_{a\sim \pi_1^\ast\times\hat\pi_2}[Q_t^\ast] - \mathbb E_{a\sim \pi_1^\ast\times\pi_2^\ast}[Q_t^\ast]
&=
\mathbb E_{a_1\sim\pi_{t,1}^\ast}\Big[ \sum_{a_2} (\hat\pi_{t,2}(a_2|s) - \pi_{t,2}^\ast(a_2|s)) Q_t^\ast(s,a_1,a_2) \Big] \\
&=
\mathbb E_{a_1\sim\pi_{t,1}^\ast}\Big[ \langle \Delta\pi_{t,2}(\cdot|s),\, Q_t^\ast(s,a_1,\cdot) \rangle \Big].
\end{align*}
To bound the regularization difference, we apply the Mean Value Theorem. There exists a policy $\xi$ on the line segment between $\hat{\pi}_{t,2}(\cdot|s)$ and $\pi_{t,2}^\ast(\cdot|s)$ such that
\[
\mathrm{Reg}_2(\hat{\pi}_{t,2}) - \mathrm{Reg}_2(\pi_{t,2}^\ast) = \langle \nabla \mathrm{Reg}_2(\xi),\, \hat{\pi}_{t,2} - \pi_{t,2}^\ast \rangle.
\]
We now derive an explicit bound $B_{\mathrm{reg}}$ on $\|\nabla \mathrm{Reg}_2(\xi)\|_\infty$ over all such convex combinations $\xi$. By construction, the regularized best-response policies $\hat{\pi}_{t,2}$ and $\pi_{t,2}^\ast$ take the Gibbs (softmax) form
\[
\pi(a) \propto \pi^{\mathrm{ref}}_{t,2}(a|s) \exp(\eta Q(s,a))
\]
for some bounded payoff vector $Q$. Since the reference policy satisfies $\pi^{\mathrm{ref}}_{t,2}(a|s) \ge \alpha$ for all $(s,a)$ in the support of $\pi^{\mathrm{ref}}$, and $\|Q\|_\infty \le H(1+\eta^{-1}\log(1/\alpha))$ by Lemma~\ref{lem:value_bound}, the unnormalized log-density satisfies
\[
\log \frac{\pi(a)}{\pi^{\mathrm{ref}}_{t,2}(a|s)} = \eta\, Q(s,a) - \log Z, \qquad \left| \log \frac{\pi(a)}{\pi^{\mathrm{ref}}_{t,2}(a|s)} \right| \le 2\eta H(1+\eta^{-1}\log(1/\alpha)).
\]
Therefore $\pi(a) \ge \alpha \cdot \exp\bigl(-2\eta H(1+\eta^{-1}\log(1/\alpha))\bigr)$ is strictly positive on the support of $\pi^{\mathrm{ref}}$. Any convex combination $\xi = (1-\lambda)\hat\pi_{t,2} + \lambda\pi_{t,2}^*$ inherits this strict positivity, so $\xi$ lies in the interior of the simplex where $\nabla_\pi \eta^{-1}\KL(\pi\|\pi^{\mathrm{ref}}_{t,2}) = \eta^{-1}(1 + \log(\pi/\pi^{\mathrm{ref}}_{t,2}))$ is well-defined and bounded:
\[
\|\nabla \mathrm{Reg}_2(\xi)\|_\infty
\le \eta^{-1}\bigl(1 + \|\log(\xi/\pi^{\mathrm{ref}}_{t,2})\|_\infty\bigr)
\le \eta^{-1}\bigl(1 + 2\eta H(1+\eta^{-1}\log(1/\alpha))\bigr)
= \mathcal{O}(H(1+\eta^{-1}\log(1/\alpha))).
\]
Hence $B_{\mathrm{reg}} := \sup_\xi \|\nabla \mathrm{Reg}_2(\xi)\|_\infty = \mathcal{O}(H(1+\eta^{-1}\log(1/\alpha)))$.
Applying Hölder's inequality:
\[
\left| \mathrm{Reg}_2(\hat{\pi}_{t,2}) - \mathrm{Reg}_2(\pi_{t,2}^\ast) \right|
\le \|\nabla \mathrm{Reg}_2(\xi)\|_\infty \|\hat{\pi}_{t,2} - \pi_{t,2}^\ast\|_1
\le B_{\mathrm{reg}} \|\Delta\pi_{t,2}(\cdot|s)\|_1.
\]
Combining this with the $Q$-term bound yields the final Lipschitz coefficient for Term $(\mathrm{II})$:
\begin{align*}
|(\mathrm{II})|
&\le \|Q_t^\ast(s,\cdot,\cdot)\|_\infty \|\Delta\pi_{t,2}(\cdot|s)\|_1 + B_{\mathrm{reg}} \|\Delta\pi_{t,2}(\cdot|s)\|_1 \\
&= \left( \|Q_t^\ast(s,\cdot,\cdot)\|_\infty + B_{\mathrm{reg}} \right) \|\Delta\pi_{t,2}(\cdot|s)\|_1 \\
&\le 3 L_{t+1} \|\Delta\pi_{t,2}(\cdot|s)\|_1,
\end{align*}
where $L_{t+1}$ absorbs the regularization constant up to a factor of $3$ (since both $\|Q_t^\ast\|_\infty$ and $B_{\mathrm{reg}}$ are $\mathcal{O}(L_{t+1})$ by Lemma~\ref{lem:value_bound} and the gradient bound above).

Thus, we have the pointwise recursive bound:
\[
|\Delta_t(s)|
\le
L_{t+1}\|\Delta\pi_{t,2}(\cdot\mid s)\|_1
+
\left| \mathbb E_{a\sim \pi_1^\ast\times\hat\pi_2} \mathbb E_{s'\sim P_t(\cdot\mid s,a)} [\Delta_{t+1}(s')] \right|.
\]
Square both sides and apply the weighted AM-GM inequality $(x+y)^2 \le (1+H)x^2 + (1+\frac{1}{H})y^2$:
\[
\Delta_t(s)^2
\le
(1+H)L_{t+1}^2 \|\Delta\pi_{t,2}(\cdot\mid s)\|_1^2
+
\Bigl(1+\tfrac{1}{H}\Bigr)
\left( \mathbb E_{a\sim \pi_1^\ast\times\hat\pi_2} \mathbb E_{s'} [\Delta_{t+1}(s')] \right)^2.
\]
Apply Jensen's inequality to the second term: $(\mathbb E Z)^2 \le \mathbb E [Z^2]$.
\[
\Delta_t(s)^2
\le
2H L_{t+1}^2 \|\Delta\pi_{t,2}(\cdot\mid s)\|_1^2
+
\Bigl(1+\tfrac{1}{H}\Bigr)
\mathbb E_{a\sim \pi_1^\ast\times\hat\pi_2} \mathbb E_{s'\sim P_t} [\Delta_{t+1}(s')^2].
\]
(Using $1+H \le 2H$). Now take the expectation with respect to $s\sim d_t^{\pi_1^\ast,\hat\pi_2}$.
By definition, if $s\sim d_t^{\pi_1^\ast,\hat\pi_2}$ and we take actions $a\sim \pi_1^\ast\times\hat\pi_2$ and transition $s'\sim P_t$, the next state $s'$ is distributed according to $d_{t+1}^{\pi_1^\ast,\hat\pi_2}$. Thus:
\[
\mathbb E_{s\sim d_t^{\pi_1^\ast,\hat\pi_2}} [\Delta_t(s)^2]
\le
2H L_{t+1}^2 \mathbb E_{s\sim d_t^{\pi_1^\ast,\hat\pi_2}} [\|\Delta\pi_{t,2}\|_1^2]
+
\Bigl(1+\tfrac{1}{H}\Bigr)
\mathbb E_{s'\sim d_{t+1}^{\pi_1^\ast,\hat\pi_2}} [\Delta_{t+1}(s')^2].
\]
Unrolling this recurrence from $t=h$ to $H$ (with $\Delta_{H+1}\equiv 0$):
\[
\mathbb E_{d_h} [\Delta_h^2]
\le
2H \sum_{t=h}^H \Bigl(1+\tfrac{1}{H}\Bigr)^{t-h} L_{t+1}^2 \mathbb E_{d_t} [\|\Delta\pi_{t,2}\|_1^2].
\]
Since $(1+\frac{1}{H})^H \le e < 3$, the pre-factor is bounded by $6H$, yielding \eqref{eq:fixedpi1-L2}.
\end{proof}

\paragraph{Step 4: Combining the two terms in the Gap 1 bound.}
Recall from \eqref{eq:gap1_split_correct} that
\begin{align}
\mathrm{Gap~1}
&\le
\eta\sum_{h=1}^H\Big(1+\|V_{h+1}^{*,\hat\pi}\|_\infty\Big)^2\,
\mathbb{E}_{s\sim d_h^{\pi_1^*,\hat\pi_2}}
\!\left[
\big\|
\hat\pi_{h,2}(\cdot|s)-\pi_{h,2}^*(\cdot|s)
\big\|_1^2
\right]\nonumber\\
&\quad +
 \eta\sum_{h=1}^H\max_{a_1} \mathbb E_{a_2\sim \pi_{h,2}^*(\cdot\mid s)}
\mathbb E_{s'\sim P_h(\cdot\mid s,a_1,a_2)}\!\big[\Delta_{h+1}(s')^2\big].
\end{align}

\paragraph{Step 4.1: Controlling the first term.}
By Lemma~\ref{lem:value_bound}, for every $h\in[H]$,
\[
1+\|V_{h+1}^{*,\hat\pi}\|_\infty \le H\Big(1+\eta^{-1}\log(1/\alpha)\Big).
\]
Plugging into the first term of \eqref{eq:gap1_split_correct} yields
\begin{align}
&\eta \sum_{h=1}^H
\left(1+\|V_{h+1}^{*,\hat\pi}\|_\infty\right)^2
\mathbb{E}_{s\sim d_h^{\pi_1^*,\hat\pi_2}}
\!\left[
\big\|
\hat\pi_{h,2}(\cdot|s)-\pi_{h,2}^*(\cdot|s)
\big\|_1^2
\right]\\
&~\le~
\eta H^2\Big(1+\eta^{-1}\log(1/\alpha)\Big)^2
\sum_{h=1}^H
\mathbb{E}_{s\sim d_h^{\pi_1^*,\hat\pi_2}}
\!\left[
\big\|
\hat\pi_{h,2}(\cdot|s)-\pi_{h,2}^*(\cdot|s)
\big\|_1^2
\right].
\label{eq:gap1-first}
\end{align}

\paragraph{Step 4.2: Controlling the second term.}
Remember we defined
\[
L_{t+1}\;\coloneqq\; 1+\|V_{t+1}^{\pi_1^\ast,\pi_2^\ast}\|_\infty + \eta^{-1}\log(1/\alpha).
\]

\textbf{$\Delta$-decomposition (bridging \eqref{eq:Qstar-diff-bound} and Lemma~\ref{lem:V-stability}).} The $\Delta_{h+1}$ in \eqref{eq:gap1_split_correct} is the best-response value difference $V_{h+1}^{*,\hat\pi}(s')-V_{h+1}^{*,\pi^\ast}(s')$, while Lemma~\ref{lem:V-stability} bounds the value difference $V_{h+1}^{\pi_1,\hat\pi_2}(s')-V_{h+1}^{\pi_1,\pi_2^\ast}(s')$ for any fixed Player~1 policy $\pi_1$. Let $\pi_1^\dagger(\hat\pi_2)$ denote the regularized best response of Player~1 to $\hat\pi_2$. Since $V_{h+1}^{*,\hat\pi} = V_{h+1}^{\pi_1^\dagger(\hat\pi_2),\hat\pi_2}$ and $V_{h+1}^{*,\pi^\ast} = V_{h+1}^{\pi_1^\ast,\pi_2^\ast} \ge V_{h+1}^{\pi_1^\dagger(\hat\pi_2),\pi_2^\ast}$ by Nash optimality of $\pi_1^\ast$ against $\pi_2^\ast$,
\[
\Delta_{h+1}(s')
~\le~
R(s') \;\coloneqq\; V_{h+1}^{\pi_1^\dagger(\hat\pi_2),\hat\pi_2}(s') - V_{h+1}^{\pi_1^\dagger(\hat\pi_2),\pi_2^\ast}(s').
\]
Moreover, by Player~1's best-response optimality, $V_{h+1}^{\pi_1^\dagger(\hat\pi_2), \hat\pi_2}(s') \ge V_{h+1}^{\pi_1^\ast, \hat\pi_2}(s') \ge V_{h+1}^{\pi_1^\ast, \pi_2^\ast}(s')$ (the last inequality uses Nash optimality of $\pi_2^\ast$ against $\pi_1^\ast$), so $\Delta_{h+1}(s') \ge 0$. Combined with $\Delta_{h+1}(s') \le R(s')$, this gives $0 \le \Delta_{h+1}(s') \le R(s')$, hence $R(s') \ge 0$ and $\Delta_{h+1}(s')^2 \le R(s')^2$. The right-hand side $R(s')^2$ is exactly the value-difference quantity to which Lemma~\ref{lem:V-stability} applies with $\pi_1 = \pi_1^\dagger(\hat\pi_2)$.

The inner integrand of the second term of~\eqref{eq:gap1_split_correct} is $\max_{a_1} \mathbb{E}_{a_2 \sim \pi_{h,2}^*} \mathbb{E}_{s' \sim P_h(\cdot \mid s, a_1, a_2)}[\Delta_{h+1}(s')^2]$, which is \emph{pointwise} in $s'$. We invoke the pointwise recursion that Lemma~\ref{lem:V-stability} establishes (the per-state inequality used in its proof).

\textbf{Step 4.2.a: Pointwise V-stability and rerouting to a $\Pi_{\mathrm{uni}}$ measure.}
Applying Lemma~\ref{lem:V-stability}'s pointwise recursion at $s_{h+1} = s'$ with $\pi_1 = \pi_1^\dagger(\hat\pi_2)$ and unrolling from $h{+}1$ to $H$ yields
\begin{equation}
\Delta_{h+1}(s')^2 \;\le\; 2H \sum_{t=h+1}^H \bigl(1 + \tfrac{1}{H}\bigr)^{t-h-1} L_{t+1}^2 \, \mathbb{E}_{s_t \sim d_t^{(\pi_1^\dagger(\hat\pi_2), \hat\pi_2)\,|\,s_{h+1} = s'}}\!\bigl[\|\hat\pi_{t,2}(\cdot|s_t) - \pi_{t,2}^*(\cdot|s_t)\|_1^2\bigr],
\label{eq:Vstab-pointwise}
\end{equation}
where $d_t^{(\pi_1^\dagger(\hat\pi_2), \hat\pi_2)\,|\,s_{h+1} = s'}$ denotes the visitation distribution at step $t$ under the joint policy $(\pi_1^\dagger(\hat\pi_2), \hat\pi_2)$ conditioned on starting at $s_{h+1} = s'$. Applying Lemma~\ref{lem:density-ratio-stability}~(i) to this \emph{conditional} measure (the per-step Player~1 softmax-ratio argument of Step~D.1--D.4 of Appendix~\ref{app:density-ratio-stability} is pointwise and therefore extends to any conditional starting state) bridges $d_t^{(\pi_1^\dagger(\hat\pi_2), \hat\pi_2)\,|\,s'}$ to $d_t^{(\pi_1^*, \hat\pi_2)\,|\,s'}$ at constant cost $e$:
\begin{equation}
\Delta_{h+1}(s')^2 \;\le\; 2eH \sum_{t=h+1}^H \bigl(1 + \tfrac{1}{H}\bigr)^{t-h-1} L_{t+1}^2 \, \mathbb{E}_{s_t \sim d_t^{(\pi_1^*, \hat\pi_2)\,|\,s_{h+1} = s'}}\!\bigl[\|\hat\pi_{t,2}(\cdot|s_t) - \pi_{t,2}^*(\cdot|s_t)\|_1^2\bigr].
\label{eq:Vstab-after-PL1-transfer}
\end{equation}

\textbf{Step 4.2.b: Outer averaging produces a hybrid measure that we reroute to $\Pi_{\mathrm{uni}}$ via Lemma~\ref{lem:density-ratio-stability}~(ii).}
Plugging~\eqref{eq:Vstab-after-PL1-transfer} into the second term of~\eqref{eq:gap1_split_correct}:
\begin{align}
&\eta \sum_{h=1}^H \mathbb{E}_{s_h \sim d_h^{\pi_1^*, \hat\pi_2}}\!\Bigl[\max_{a_1} \mathbb{E}_{a_2 \sim \pi^*_{h,2}(\cdot|s_h)}\, \mathbb{E}_{s' \sim P_h(\cdot|s_h, a_1, a_2)}\!\bigl[\Delta_{h+1}(s')^2\bigr]\Bigr] \nonumber\\
&\;\le\; 2eH \eta \sum_{h=1}^H \sum_{t=h+1}^H \bigl(1 + \tfrac{1}{H}\bigr)^{t-h-1} L_{t+1}^2 \, \mathbb{E}_{s_t \sim \nu_{t,h}'}\!\bigl[\|\hat\pi_{t,2}(\cdot|s_t) - \pi_{t,2}^*(\cdot|s_t)\|_1^2\bigr],
\label{eq:gap1-second-via-hybrid}
\end{align}
where $\nu_{t,h}'$ is the joint trajectory marginal at step $t$ obtained by composing $s_h \sim d_h^{\pi_1^*, \hat\pi_2}$ with the inner $\bigl(\max_{a_1}, \mathbb{E}_{a_2 \sim \pi_2^*}, \mathbb{E}_{s' \sim P_h}\bigr)$ and then rolling out under $(\pi_1^*, \hat\pi_2)$ from $s_{h+1} = s'$ to step $t$. Explicitly, $\nu_{t,h}'$ is the visitation at step $t$ under the time-varying joint policy
\[
\bigl(\pi_1^*, \hat\pi_2\bigr) \text{ for steps } 1, \ldots, h-1, \qquad \bigl(\nu_h^{\arg\max}, \pi_{h,2}^*\bigr) \text{ at step } h, \qquad \bigl(\pi_1^*, \hat\pi_2\bigr) \text{ for steps } h+1, \ldots, t-1,
\]
with $\nu_h^{\arg\max}: \mathcal{S} \to \mathcal{A}_1$ the state-dependent argmax of the inner integrand at step $h$. Player~2 plays $\hat\pi_2$ at all steps except step $h$ (where it plays $\pi_{h,2}^*$), so $\nu_{t,h}'$ is \emph{not} of the form $d_t^{\pi_1^*, \hat\pi_2}$ and is \emph{not} in $\Pi_{\mathrm{uni}}$ directly.

To bring $\nu_{t,h}'$ into a $\Pi_{\mathrm{uni}}$-compatible form, define the multi-step Player~1 policy
\begin{equation}
\pi_1'(h) \;:=\; \bigl(\underbrace{\pi_{1,1}^*, \ldots, \pi_{1,h-1}^*}_{\text{steps } 1, \ldots, h-1},\; \nu_h^{\arg\max}\!,\; \underbrace{\pi_{1,h+1}^*, \ldots, \pi_{1,t-1}^*}_{\text{steps } h+1, \ldots, t-1}\bigr) \;\in\; \Pi_1,
\label{eq:pi1-prime-defn}
\end{equation}
where $\nu_h^{\arg\max}$ at step $h$ is a deterministic stage policy in $\Delta(\mathcal{A}_1)$ (a Dirac mass). Since $\Pi_1$ admits time-varying Markov policies with arbitrary stage components (including deterministic ones), $\pi_1'(h) \in \Pi_1$. The joint $(\pi_1'(h), \pi_2^*) \in \Pi_1 \times \{\pi_2^*\} \subset \Pi_{\mathrm{uni}}$.

\emph{Per-trajectory ratio computation.} The hybrid $\nu_{t,h}'$ and the target $d_t^{(\pi_1'(h), \pi_2^*)}$ share the same initial distribution $\rho$, the same transition kernels $\{P_{h'}\}_{h' < t}$, the \emph{same} step-$h$ Player~1 action distribution ($\nu_h^{\arg\max}$), and the \emph{same} step-$h$ Player~2 action distribution ($\pi_{h,2}^*$); they share Player~1's policy at steps $h' \ne h$ ($\pi_{1,h'}^*$); they differ \emph{only} in Player~2's policy at steps $h' \ne h$ ($\hat\pi_{2,h'}$ vs.\ $\pi_{2,h'}^*$). Hence the per-trajectory density ratio collapses to a Player-2 product over $\{1, \ldots, t-1\} \setminus \{h\}$:
\begin{equation}
\frac{\nu_{t,h}'(\tau)}{d_t^{(\pi_1'(h), \pi_2^*)}(\tau)} \;=\; \prod_{h' \in \{1, \ldots, t-1\} \setminus \{h\}} \frac{\hat\pi_{2,h'}(a_{h',2} \mid s_{h'})}{\pi_{2,h'}^*(a_{h',2} \mid s_{h'})}.
\label{eq:per-traj-ratio}
\end{equation}
Each per-step factor is a softmax-vs-softmax Player~2 ratio bounded by $\exp\!\bigl(4\eta H (1 + \eta^{-1}\log(1/\alpha))\bigr)$ (the same per-step bound used in Step~D.5 of Appendix~\ref{app:density-ratio-stability}, applied pointwise at every $(s_{h'}, a_{h',2})$). The product has at most $t - 2 \le H - 2$ factors. Under the side condition~\eqref{eq:eta-side-condition} the telescoped product satisfies
\begin{equation}
\prod_{h' \in \{1, \ldots, t-1\} \setminus \{h\}} \frac{\hat\pi_{2,h'}(a_{h',2} \mid s_{h'})}{\pi_{2,h'}^*(a_{h',2} \mid s_{h'})} \;\le\; \exp\!\bigl(4\eta H^2 (1 + \eta^{-1}\log(1/\alpha))\bigr) \;\le\; e \;\le\; 2.72.
\label{eq:hybrid-bridge-ratio}
\end{equation}
Crucially, the step-$h$ Player~1 ratio is identically $1$ (the deterministic $\nu_h^{\arg\max}$ appears on both sides via the same state-dependent definition), so there is no $1/\alpha$ cost from the argmax-vs-softmax comparison at step $h$. By importance reweighting, for any nonnegative $f$,
\begin{equation}
\mathbb{E}_{s_t \sim \nu_{t,h}'}[f(s_t)] \;\le\; e \cdot \mathbb{E}_{s_t \sim d_t^{(\pi_1'(h), \pi_2^*)}}[f(s_t)].
\label{eq:hybrid-bridge-transfer}
\end{equation}

Applying~\eqref{eq:hybrid-bridge-transfer} with $f(s_t) = \|\hat\pi_{t,2}(\cdot|s_t) - \pi_{t,2}^*(\cdot|s_t)\|_1^2$ to~\eqref{eq:gap1-second-via-hybrid}, and bounding $(1 + 1/H)^{t-h-1} \le e$, $2e^3 \le 41$:
\begin{align}
&\eta \sum_{h=1}^H \mathbb{E}_{s_h \sim d_h^{\pi_1^*, \hat\pi_2}}\!\Bigl[\max_{a_1} \mathbb{E}_{a_2 \sim \pi^*_{h,2}}\, \mathbb{E}_{s'}\!\bigl[\Delta_{h+1}(s')^2\bigr]\Bigr] \nonumber\\
&\;\le\; 41 H \eta \sum_{h=1}^H \sum_{t=h+1}^H L_{t+1}^2 \, \mathbb{E}_{s_t \sim d_t^{(\pi_1'(h), \pi_2^*)}}\!\bigl[\|\hat\pi_{t,2}(\cdot|s_t) - \pi_{t,2}^*(\cdot|s_t)\|_1^2\bigr].
\label{term 2 half}
\end{align}
\textbf{Swap and summation.} Each inner term $L_{t+1}^2\, \mathbb{E}_{s_t \sim d_t^{(\pi_1'(h), \pi_2^*)}}[\|\hat\pi_{t,2}(\cdot|s_t) - \pi_{t,2}^*(\cdot|s_t)\|_1^2]$ depends on the outer index $h$ only through the deterministic step-$h$ stage policy $\nu_h^{\arg\max}$ embedded in $\pi_1'(h)$ (a fixed function of $s_h$ that is determined by the algorithm and the underlying data, not by the outer integration variable). For each fixed $t$, bound the inner sum over $h$ by replacing each $\Pi_{\mathrm{uni}}$-measure with the $\sup$ over $\Pi_{\mathrm{uni}}$:
\begin{equation}
\sum_{h=1}^{t-1} \mathbb{E}_{s_t \sim d_t^{(\pi_1'(h), \pi_2^*)}}\!\bigl[\|\hat\pi_{t,2} - \pi_{t,2}^*\|_1^2\bigr] \;\le\; (t - 1) \cdot \sup_{\pi' \in \Pi_{\mathrm{uni}}}\, \mathbb{E}_{s_t \sim d_t^{\pi'}}\!\bigl[\|\hat\pi_{t,2} - \pi_{t,2}^*\|_1^2\bigr],
\label{eq:swap-sup-uni}
\end{equation}
and use $(t - 1) \le H$. Substituting~\eqref{eq:swap-sup-uni} into~\eqref{term 2 half} and swapping $\sum_h \sum_t = \sum_t \sum_h$:
\begin{align}
\eta \sum_{h=1}^H \mathbb{E}_{s_h \sim d_h^{\pi_1^*, \hat\pi_2}}\!\Bigl[\max_{a_1} \mathbb{E}_{a_2}\, \mathbb{E}_{s'}\!\bigl[\Delta_{h+1}(s')^2\bigr]\Bigr] \;\le\; 41\, \eta H^2 \sum_{t=1}^H L_{t+1}^2 \, \sup_{\pi' \in \Pi_{\mathrm{uni}}}\, \mathbb{E}_{s_t \sim d_t^{\pi'}}\!\bigl[\|\hat\pi_{t,2} - \pi_{t,2}^*\|_1^2\bigr].
\label{eq:gap1-second}
\end{align}
The $\sup_{\pi' \in \Pi_{\mathrm{uni}}}$ on the right side is exactly the unilateral state-marginal at step $t$ that the rest of the chain (squared stability~\eqref{eq:stability-squared-refined} and Section~\ref{sec:final-bound}) operates on.

\paragraph{Step 4.3: Final bound on Gap 1.}
Putting~\eqref{eq:gap1-first} and~\eqref{eq:gap1-second} into~\eqref{eq:gap1_split_correct}, and noting that $(\pi_1^*, \hat\pi_2) \in \Pi_{\mathrm{uni}}$ so $\mathbb{E}_{s \sim d_h^{\pi_1^*, \hat\pi_2}}[\cdot] \le \sup_{\pi' \in \Pi_{\mathrm{uni}}} \mathbb{E}_{s \sim d_h^{\pi'}}[\cdot]$, we conclude
\begin{align}
\mathrm{Gap1}
&\le \eta H^2 \bigl(1+\eta^{-1}\log(1/\alpha)\bigr)^2 \sum_{h=1}^H \mathbb{E}_{s\sim d_h^{\pi_1^*,\hat\pi_2}}\!\bigl[\|\hat\pi_{h,2}(\cdot|s)-\pi_{h,2}^*(\cdot|s)\|_1^2\bigr] \nonumber \\
&\quad + 41\,\eta H^2 \sum_{t=1}^H L_{t+1}^2 \sup_{\pi' \in \Pi_{\mathrm{uni}}}\, \mathbb{E}_{s_t \sim d_t^{\pi'}}\!\bigl[\|\hat\pi_{t,2}(\cdot|s_t)-\pi_{t,2}^*(\cdot|s_t)\|_1^2\bigr],
\label{eq:gap1-final}
\end{align}
where the first term comes from~\eqref{eq:gap1-first} and the second term from~\eqref{eq:gap1-second}. Recall $L_{t+1} := 1 + \|V_{t+1}^{\pi_1^*, \pi_2^*}\|_\infty + \eta^{-1}\log(1/\alpha)$. By Lemma~\ref{lem:value_bound}, $\|V_{t+1}^{\pi_1^*,\pi_2^*}\|_\infty \le (H-t)(1+\eta^{-1}\log(1/\alpha))$, so $L_{t+1} \le H(1+\eta^{-1}\log(1/\alpha))$. Bounding the first term's distribution $d_h^{\pi_1^*, \hat\pi_2}$ by $\sup_{\pi' \in \Pi_{\mathrm{uni}}} d_h^{\pi'}$ (since $(\pi_1^*, \hat\pi_2) \in \Pi_{\mathrm{uni}}$) and substituting,
\begin{align}
\text{Gap 1}
&\le \eta H^2 \bigl(1+\eta^{-1}\log(1/\alpha)\bigr)^2 \sum_{h=1}^H \sup_{\pi' \in \Pi_{\mathrm{uni}}}\, \mathbb{E}_{s \sim d_h^{\pi'}}\!\bigl[\|\hat\pi_{h,2}(\cdot|s)-\pi_{h,2}^*(\cdot|s)\|_1^2\bigr] \nonumber \\
&\quad + 41\,\eta H^4 \bigl(1+\eta^{-1}\log(1/\alpha)\bigr)^2 \sum_{t=1}^H \sup_{\pi' \in \Pi_{\mathrm{uni}}}\, \mathbb{E}_{s_t \sim d_t^{\pi'}}\!\bigl[\|\hat\pi_{t,2}(\cdot|s_t)-\pi_{t,2}^*(\cdot|s_t)\|_1^2\bigr] \nonumber \\
&\le 42\,\eta H^4 \bigl(1+\eta^{-1}\log(1/\alpha)\bigr)^2 \sum_{h=1}^H \sup_{\pi' \in \Pi_{\mathrm{uni}}}\, \mathbb{E}_{s \sim d_h^{\pi'}}\!\bigl[\|\hat\pi_{h,2}(\cdot|s)-\pi_{h,2}^*(\cdot|s)\|_1^2\bigr].
\label{bound 1 final without sup}
\end{align}

\paragraph{Step 4.4: Conversion to single-player Nash-averaged squared $Q$-error via squared $L_1$ stability.}
By the squared stability bound~\eqref{eq:stability-squared-refined} (Appendix~\ref{app:proof_stability}), for every $(h, s)$,
\[
\|\hat\pi_h(\cdot|s) - \pi^*_h(\cdot|s)\|_1^2 \;\le\; 4\eta^2 \cdot \xi_h^2(s),
\]
with $\xi_h^2(s)$ the single-player Nash-averaged squared $Q$-error from~\eqref{eq:xi-defn}. Substituting into~\eqref{bound 1 final without sup},
\begin{align}
\text{Gap 1}
&\;\le\; 42\,\eta H^4 \bigl(1+\eta^{-1}\log(1/\alpha)\bigr)^2 \cdot 4\eta^2 \sum_{h=1}^H \sup_{\pi' \in \Pi_{\mathrm{uni}}}\, \mathbb{E}_{s \sim d_h^{\pi'}}\!\bigl[\xi_h^2(s)\bigr] \nonumber \\
&\;=\; 168\, \eta^3 H^4 \bigl(1+\eta^{-1}\log(1/\alpha)\bigr)^2 \sum_{h=1}^H \sup_{\pi' \in \Pi_{\mathrm{uni}}}\, \mathbb{E}_{s \sim d_h^{\pi'}}\!\bigl[\xi_h^2(s)\bigr].
\label{eq:gap1-final-Q-bound}
\end{align}
This proves Lemma~\ref{lem:gap1_bound} with the $\mathcal{O}\bigl(\eta^3 H^4 (1+\eta^{-1}\log(1/\alpha))^2\bigr)$ leading constant.

The chain proceeds rigorously as follows: (i) the BR-style $\Delta_{h+1}^2 = (V^{*,\hat\pi}_{h+1} - V^{*,\pi^*}_{h+1})^2$ is bounded by the V-stability quantity via the $\Delta$-decomposition (Step~4.2 above); (ii) Lemma~\ref{lem:V-stability}'s pointwise recursion (eq.~\eqref{eq:Vstab-pointwise}) applied at $s_{h+1} = s'$ with $\pi_1 = \pi_1^\dagger(\hat\pi_2)$ gives the per-state bound; (iii) Lemma~\ref{lem:density-ratio-stability}~(i) transfers the conditional BR-prefix from $h+1$ to $t$ to a Nash-prefix conditional measure (eq.~\eqref{eq:Vstab-after-PL1-transfer}); (iv) the outer hybrid measure $\nu_{t,h}'$ produced by composing $s_h \sim d_h^{\pi_1^*, \hat\pi_2}$ with the inner $(\max_{a_1}, \mathbb{E}_{a_2 \sim \pi_2^*}, \mathbb{E}_{s' \sim P_h})$ is bridged to $d_t^{(\pi_1'(h), \pi_2^*)} \in \Pi_{\mathrm{uni}}$ via the truncated Player-2 telescoping of Lemma~\ref{lem:density-ratio-stability}~(ii) (eqs.~\eqref{eq:per-traj-ratio}--\eqref{eq:hybrid-bridge-ratio}), at constant cost $e$ under the side condition~\eqref{eq:eta-side-condition}; (v) the step-$h$ Player~1 argmax cancels exactly in the per-trajectory ratio (both $\nu_{t,h}'$ and $d_t^{(\pi_1'(h), \pi_2^*)}$ use the same deterministic stage policy $\nu_h^{\arg\max}$ at step $h$, so no argmax-vs-softmax ratio appears anywhere in the bridge);
(vi) absorbing the constant factor $e$ and bounding each per-$h$ summand by $\sup_{\pi' \in \Pi_{\mathrm{uni}}}$ yields the $\sup$-form in~\eqref{eq:gap1-final-Q-bound}.

\subsection{Proof of Lemma~\ref{lem:density-ratio-stability} (Density-Ratio Stability under Small $\eta$)}
\label{app:density-ratio-stability}

\begin{lemma}[Density-Ratio Stability under Small $\eta$]
\label{lem:density-ratio-stability}
Under the side condition~\eqref{eq:eta-side-condition} stated with Theorem~\ref{thm:stat_bound}, let $\pi_1^{\dagger}(\hat\pi_2)$ denote the regularized best response of Player~1 against $\hat\pi_2$. Then the following pointwise density-ratio bounds hold for every $h \in [H]$ and every state-action triple $(s, a_1, a_2)$:
\begin{enumerate}
    \item[(i)] (\emph{Player~1 transfer}) $\dfrac{d_h^{\pi_1^{\dagger}(\hat\pi_2),\, \hat\pi_2}(s, a_1, a_2)}{d_h^{\pi_1^*,\, \hat\pi_2}(s, a_1, a_2)} \;\le\; e \;\le\; 2.72.$
    \item[(ii)] (\emph{Player~2 transfer}) $\dfrac{d_h^{\pi_1^*,\, \hat\pi_2}(s, a_1, a_2)}{d_h^{\pi_1^*,\, \pi_2^*}(s, a_1, a_2)} \;\le\; e \;\le\; 2.72.$
\end{enumerate}
Consequently, for any nonnegative function $f$ and any $h \in [H]$,
\begin{equation}
\mathbb{E}_{(s,a) \sim d_h^{\pi_1^{\dagger}(\hat\pi_2),\, \hat\pi_2}}\!\bigl[f\bigr]
\;\le\; e \cdot \mathbb{E}_{(s,a) \sim d_h^{\pi_1^*,\, \hat\pi_2}}\!\bigl[f\bigr]
\;\le\; e^2 \cdot \mathbb{E}_{(s,a) \sim d_h^{\pi_1^*,\, \pi_2^*}}\!\bigl[f\bigr].
\label{eq:density-ratio-stability-transfer}
\end{equation}
\end{lemma}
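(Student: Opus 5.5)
The plan is to write each visitation measure as a trajectory law and bound the density ratio by a product of per-step policy ratios. For two joint Markov policies $\pi,\pi'$ with the same initial distribution $\rho$ and the same transition kernels, the joint law of $(s_1,a_1,\ldots,s_h,a_h)$ factorizes. Hence
\[
\frac{d_h^{\pi}(s,a_1,a_2)}{d_h^{\pi'}(s,a_1,a_2)} \le \sup_{\tau}\prod_{h'=1}^{h}\frac{\pi_{h'}(a_{h'}\mid s_{h'})}{\pi'_{h'}(a_{h'}\mid s_{h'})}.
\]
This holds because the marginal at step $h$ is obtained by integrating the trajectory law over the prefix, and a pointwise ratio bound on the integrand transfers to the integral. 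In (i) only Player~1's policy differs, so the product involves only Player~1 factors $\pi^\dagger_{1,h'}/\pi^*_{1,h'}$. In (ii) only Player~2's policy differs, so the product involves only Player~2 factors $\hat\pi_{2,h'}/\pi^*_{2,h'}$.

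The key step is a uniform per-step bound on ratios of two Gibbs policies sharing a reference. The policies in (i) are $\pi^\dagger_{1,h}=\mathrm{Softmax}(\eta Q^*_{h,\hat\pi_2}+\log\pi^{\mathrm{ref}}_{h,1})$ and $\pi^*_{1,h}=\mathrm{Softmax}(\eta Q^*_{h,\pi^*_2}+\log\pi^{\mathrm{ref}}_{h,1})$. For (ii), $\hat\pi_{2,h}$ and $\pi^*_{2,h}$ are softmaxes of $-\eta$ times opponent-averaged $\hat Q_h$ and $Q^*_h$ against the same reference. For logits $\theta,\theta'$ differing by a vector $u$, the ratio satisfies $\mathrm{Softmax}(\theta)_a/\mathrm{Softmax}(\theta')_a\le \exp(2\|u\|_\infty)$, since the numerator changes by at most $e^{\|u\|_\infty}$ and the partition function by at most $e^{\|u\|_\infty}$. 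Lemma~\ref{lem:value_bound} gives $\|Q\|_\infty\le H(1+\eta^{-1}\log(1/\alpha))=H\lambda$ for every relevant $Q$-vector. Both $\pi^*$-based and $\hat\pi$-based best-response $Q$'s are of this form. For $\hat Q_h$ I will use the same range bound, assuming $\mathcal{Q}$ is clipped to that range as completeness suggests. Hence $\|u\|_\infty\le 2\eta H\lambda$, and each factor is at most $\exp(4\eta H\lambda)$. Both policies put zero mass exactly off the support of $\pi^{\mathrm{ref}}$, so the ratio is well defined, with no $1/\alpha$ loss.

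Multiplying at most $H$ factors gives the ratio bound $\exp(4\eta H^2\lambda)$. The side condition~\eqref{eq:eta-side-condition} is exactly $4\eta H^2\lambda\le 1$, so the bound is at most $e$. This proves (i) and (ii). The consequence~\eqref{eq:density-ratio-stability-transfer} follows by integrating a nonnegative $f$ against the pointwise ratio bound and chaining (i) then (ii).

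I expect the main obstacle to be the per-step factor bound, specifically justifying a uniform sup-norm bound on the logit difference. This requires that every $Q$-vector involved lies in the range of Lemma~\ref{lem:value_bound}. For the best-response $Q$'s this is direct, but the equilibrium policies $\hat\pi_2$ and $\pi^*_2$ are coupled softmaxes, whose logits are opponent-averaged $Q$'s against different opponents. I will handle this by noting that only the range of each logit vector matters, not the matching of opponents. I will also remark that the argument is pointwise in the starting state, so it extends verbatim to conditional rollouts, as used in Step~4.2.a.
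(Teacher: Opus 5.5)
Your proposal is correct and follows essentially the same route as the paper's proof (Steps D.1--D.6). You factorize the step-$h$ visitation ratio into per-step ratios of the single deviating player's policies, bound each Gibbs-vs-Gibbs ratio by $\exp(2\|z-z'\|_\infty)\le\exp(4\eta H\lambda)$ using the uniform $H\lambda$ range from Lemma~\ref{lem:value_bound} (plus the same assumed range of $\mathcal{Q}$ for $\hat Q_h$ in part (ii)), telescope to $\exp(4\eta H^2\lambda)\le e$ under the side condition, and chain (i) and (ii) by importance reweighting; your remark on conditional rollouts likewise matches the paper's Step D.8.
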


The proof leverages the softmax representation of regularized best responses together with the worst-case bound $\|Q^{*,\hat\pi_2}_h - Q^*_h\|_\infty \le 2H(1+\eta^{-1}\log(1/\alpha))$ from Lemma~\ref{lem:value_bound}; the side condition~\eqref{eq:eta-side-condition} ensures that the per-step softmax-ratio amplification telescopes to at most $e$ across the horizon. Applying transfers~(i) and~(ii) in succession brings trajectory prefixes of the form $d_h^{\pi_1^\dagger(\hat\pi_2), \hat\pi_2}$ to the Nash-Nash form $d_h^{\pi_1^*, \pi_2^*}$, which is absorbed by Assumption~\ref{ass:concentrability} via~\eqref{eq:d2-transfer}. The same softmax-ratio argument also establishes (Steps~D.7--D.8 below): (i$^\prime$, ii$^\prime$) the reverse directions of (i) and (ii); (iii) the Player-1 transfer $d_h^{\hat\pi_1, \pi_2^*}/d_h^{\pi_1^*, \pi_2^*} \le e$ with shared Nash Player~2; (iv) the joint two-player transfer between any two $\Pi_{\mathrm{uni}}$-policy pairs; and (v) the analogous conditional-measure versions for any starting state $s'$.

We bound the pointwise density ratio between the BR-prefix trajectory distribution $d_h^{\pi_1^\dagger(\hat\pi_2),\hat\pi_2}$ and the Nash-prefix trajectory distribution $d_h^{\pi_1^*,\hat\pi_2}$. Both distributions share the same Player~2 policy $\hat\pi_2$ and the same transition kernel $\{P_{h'}\}_{h'<h}$; they differ only in Player~1's policy.

\paragraph{Step D.1: Per-step softmax-ratio bound.}
At each step $h' \in [H]$ and each state $s$, both $\pi_1^\dagger(\hat\pi_2)$ and $\pi_1^*$ are KL-regularized best responses (the former to $\hat\pi_2$, the latter to $\pi_2^*$, which is itself the regularized Nash). By the Gibbs form~\eqref{eq:br-softmax-seq},
\[
\pi_{1,h'}^\dagger(\hat\pi_2)(\cdot \mid s) = \mathrm{Softmax}\!\bigl(\eta\, Q^{*,\hat\pi_2}_{h',\hat\pi_2}(s,\cdot) + \log \pi_{\mathrm{ref},h',1}(\cdot \mid s)\bigr),
\quad
\pi_{1,h'}^*(\cdot \mid s) = \mathrm{Softmax}\!\bigl(\eta\, Q^{*,\pi^*_2}_{h',\pi^*_2}(s,\cdot) + \log \pi_{\mathrm{ref},h',1}(\cdot \mid s)\bigr).
\]
The reference-policy logit cancels in any ratio, so for every $a_1$,
\[
\frac{\pi_{1,h'}^\dagger(\hat\pi_2)(a_1 \mid s)}{\pi_{1,h'}^*(a_1 \mid s)}
\;=\;
\frac{\exp\!\bigl(\eta\, Q^{*,\hat\pi_2}_{h',\hat\pi_2}(s,a_1)\bigr)}{\exp\!\bigl(\eta\, Q^{*,\pi^*_2}_{h',\pi^*_2}(s,a_1)\bigr)}
\,\cdot\,
\frac{Z_{h',s}^*}{Z_{h',s}^\dagger},
\]
where $Z_{h',s}^\dagger$ and $Z_{h',s}^*$ are the corresponding partition functions. By a standard softmax-ratio bound, for any logits $z, z' \in \mathbb{R}^{|\mathcal{A}_1|}$,
\begin{equation}
\frac{\mathrm{Softmax}(z)(a)}{\mathrm{Softmax}(z')(a)} \;\le\; \exp\bigl(\|z - z'\|_\infty + \log\tfrac{Z'}{Z}\bigr) \;\le\; \exp\bigl(2\,\|z - z'\|_\infty\bigr),
\label{eq:softmax-ratio-bound}
\end{equation}
where the second inequality uses $\bigl|\log(Z'/Z)\bigr| \le \|z - z'\|_\infty$ (LSE 1-Lipschitz in $\ell_\infty$).

\paragraph{Step D.2: Worst-case logit-difference bound.}
The logit difference at step $h'$ is
\[
\eta\,\bigl(Q^{*,\hat\pi_2}_{h',\hat\pi_2}(s,a_1) - Q^{*,\pi^*_2}_{h',\pi^*_2}(s,a_1)\bigr).
\]
By Lemma~\ref{lem:value_bound}, both $Q^{*,\hat\pi_2}_{h',\hat\pi_2}$ and $Q^{*,\pi^*_2}_{h',\pi^*_2}$ are bounded by $H(1+\eta^{-1}\log(1/\alpha))$ in absolute value (the value bound applies uniformly over Player~2 policies). Hence the worst-case (TV) logit-difference bound is
\[
\bigl\|\eta\,(Q^{*,\hat\pi_2}_{h',\hat\pi_2}(s,\cdot) - Q^{*,\pi^*_2}_{h',\pi^*_2}(s,\cdot))\bigr\|_\infty
\;\le\;
2\eta H\bigl(1+\eta^{-1}\log(1/\alpha)\bigr).
\]
Combining with~\eqref{eq:softmax-ratio-bound},
\begin{equation}
\frac{\pi_{1,h'}^\dagger(\hat\pi_2)(a_1 \mid s)}{\pi_{1,h'}^*(a_1 \mid s)}
\;\le\;
\exp\!\bigl(4\eta H\bigl(1+\eta^{-1}\log(1/\alpha)\bigr)\bigr).
\label{eq:per-step-ratio}
\end{equation}

\paragraph{Step D.3: Telescoping over the horizon.}
The trajectory density at step $h$ factorizes as
\[
d_h^{\pi_1, \hat\pi_2}(s, a_1, a_2)
=
\rho(s_1) \prod_{h' < h} P_{h'}(s_{h'+1} \mid s_{h'}, a_{h',1}, a_{h',2}) \prod_{h' \le h} \pi_{1,h'}(a_{h',1} \mid s_{h'}) \prod_{h' \le h} \hat\pi_{2,h'}(a_{h',2} \mid s_{h'}),
\]
where the trajectory $(s_1, a_{1,1}, a_{1,2}, \ldots, s_h, a_{h,1}, a_{h,2}) = (s, a_1, a_2)$ at step $h$ is implicit. The ratio $d_h^{\pi_1^\dagger(\hat\pi_2), \hat\pi_2} / d_h^{\pi_1^*, \hat\pi_2}$ collapses to the product of per-step Player~1 policy ratios:
\[
\frac{d_h^{\pi_1^\dagger(\hat\pi_2), \hat\pi_2}(s, a_1, a_2)}{d_h^{\pi_1^*, \hat\pi_2}(s, a_1, a_2)}
\;=\;
\prod_{h' = 1}^{h} \frac{\pi_{1,h'}^\dagger(\hat\pi_2)(a_{h',1} \mid s_{h'})}{\pi_{1,h'}^*(a_{h',1} \mid s_{h'})}
\;\le\;
\exp\!\bigl(4\eta H \cdot h \cdot \bigl(1+\eta^{-1}\log(1/\alpha)\bigr)\bigr),
\]
where the last inequality applies~\eqref{eq:per-step-ratio} to each of the $h$ factors. Since $h \le H$,
\[
\frac{d_h^{\pi_1^\dagger(\hat\pi_2), \hat\pi_2}}{d_h^{\pi_1^*, \hat\pi_2}}
\;\le\;
\exp\!\bigl(4\eta H^2\bigl(1+\eta^{-1}\log(1/\alpha)\bigr)\bigr).
\]

\paragraph{Step D.4: Closing the Player~1 transfer under the side condition.}
The side condition~\eqref{eq:eta-side-condition} states that
\[
4\eta H^2\bigl(1+\eta^{-1}\log(1/\alpha)\bigr) \;\le\; 1,
\]
so $\exp(4\eta H^2(1+\eta^{-1}\log(1/\alpha))) \le \exp(1) = e \le 2.72$. This proves part~(i) of Lemma~\ref{lem:density-ratio-stability}.

\paragraph{Step D.5: Symmetric argument for the Player~2 transfer.}
We now prove part~(ii): under the same side condition,
\[
\frac{d_h^{\pi_1^*,\, \hat\pi_2}(s, a_1, a_2)}{d_h^{\pi_1^*,\, \pi_2^*}(s, a_1, a_2)} \;\le\; e
\qquad \text{pointwise.}
\]
Both distributions share the same Player~1 policy $\pi_1^*$ and the same transition kernel; they differ only in Player~2's policy ($\hat\pi_2$ vs.\ $\pi_2^*$). The argument is fully symmetric to Steps~D.1--D.4, so we only sketch the key step. Player~2's regularized stage policies admit Gibbs forms
\[
\hat\pi_{2,h'}(\cdot \mid s) = \mathrm{Softmax}\!\bigl(-\eta\, \hat{Q}_{h',\hat\pi_1}(s,\cdot) + \log \pi_{\mathrm{ref},h',2}(\cdot \mid s)\bigr),
\quad
\pi_{2,h'}^*(\cdot \mid s) = \mathrm{Softmax}\!\bigl(-\eta\, Q^*_{h',\pi^*_1}(s,\cdot) + \log \pi_{\mathrm{ref},h',2}(\cdot \mid s)\bigr).
\]
Both opponent-marginalized $Q$-vectors satisfy the same uniform bound $\|\hat{Q}_{h',\hat\pi_1}\|_\infty, \|Q^*_{h',\pi^*_1}\|_\infty \le H(1+\eta^{-1}\log(1/\alpha))$: Lemma~\ref{lem:value_bound} gives this bound for $Q^*$ uniformly over opponent policies, and the empirical $\hat{Q}_h$ inherits the same bound since it is constrained to the $C_{\mathrm{val}}$-bounded function class $\mathcal{Q}$ used in the least-squares regression (see Lemma~\ref{lem:gen_markov}). The corresponding logit difference is therefore bounded in $\ell_\infty$ by $2\eta H(1+\eta^{-1}\log(1/\alpha))$ pointwise, and~\eqref{eq:softmax-ratio-bound} gives a per-step Player~2 ratio of at most $\exp(4\eta H(1+\eta^{-1}\log(1/\alpha)))$. Telescoping over $h \le H$ steps and applying the side condition~\eqref{eq:eta-side-condition} yields the bound $e$ for part~(ii).

\paragraph{Step D.6: Composed transfer~\eqref{eq:density-ratio-stability-transfer}.}
For any nonnegative $f$, importance reweighting using part~(i) gives
\[
\mathbb{E}_{(s,a) \sim d_h^{\pi_1^\dagger(\hat\pi_2), \hat\pi_2}}\!\bigl[f(s,a)\bigr]
=
\mathbb{E}_{(s,a) \sim d_h^{\pi_1^*, \hat\pi_2}}\!\Bigl[f(s,a) \cdot \frac{d_h^{\pi_1^\dagger(\hat\pi_2), \hat\pi_2}(s,a)}{d_h^{\pi_1^*, \hat\pi_2}(s,a)}\Bigr]
\le e \cdot \mathbb{E}_{(s,a) \sim d_h^{\pi_1^*, \hat\pi_2}}\!\bigl[f(s,a)\bigr],
\]
and a second importance reweighting using part~(ii) gives
\[
\mathbb{E}_{(s,a) \sim d_h^{\pi_1^*, \hat\pi_2}}\!\bigl[f(s,a)\bigr]
\le e \cdot \mathbb{E}_{(s,a) \sim d_h^{\pi_1^*, \pi_2^*}}\!\bigl[f(s,a)\bigr].
\]
Composing yields the bound in~\eqref{eq:density-ratio-stability-transfer}.

\paragraph{Step D.7: Reverse directions and cross-branch transfers.}
The per-step softmax-ratio bound~\eqref{eq:softmax-ratio-bound} and the worst-case logit-difference bound (Step~D.2) depend only on the uniform $\ell_\infty$-bound $\|Q\|_\infty \le H(1+\eta^{-1}\log(1/\alpha))$ on the regularized $Q$-vectors that appear in the Gibbs form of any regularized stage policy. This bound holds simultaneously for $Q^{*,\hat\pi_2}, Q^{*,\pi_2^*}, \hat{Q}_{h,\hat\pi_1}, Q^*_{h,\pi_1^*}$ (and analogously for Player~2-marginalized versions): Lemma~\ref{lem:value_bound} establishes the $H(1+\eta^{-1}\log(1/\alpha))$ bound for all regularized $Q$-functions of the true game $(Q^{*,\cdot}_\cdot)$ uniformly over opponent policies, and the empirical $\hat{Q}_h$ inherits the same bound since it is constrained to the bounded function class $\mathcal{Q}$ (whose elements respect the $C_{\mathrm{val}}$-bound used in the regression analysis of Lemma~\ref{lem:gen_markov}). Consequently, the per-step softmax-ratio bound $\exp(4\eta H(1+\eta^{-1}\log(1/\alpha)))$ holds in both directions for any pair of regularized stage policies of the same player, and similarly for any pair sharing one player's policy. Telescoping over $h \le H$ steps and invoking the side condition~\eqref{eq:eta-side-condition} yields the following pointwise ratio bounds, each $\le e$:
\begin{enumerate}
\item[(i$^\prime$)] (Reverse of (i)) $d_h^{\pi_1^*, \hat\pi_2}(s, a_1, a_2) / d_h^{\pi_1^\dagger(\hat\pi_2), \hat\pi_2}(s, a_1, a_2) \le e$.
\item[(ii$^\prime$)] (Reverse of (ii)) $d_h^{\pi_1^*, \pi_2^*}(s, a_1, a_2) / d_h^{\pi_1^*, \hat\pi_2}(s, a_1, a_2) \le e$.
\item[(iii)] (Player~1 transfer with $\pi_2^*$ shared) $d_h^{\hat\pi_1, \pi_2^*}(s, a_1, a_2) / d_h^{\pi_1^*, \pi_2^*}(s, a_1, a_2) \le e$ and its reverse.
\item[(iv)] (Joint two-player policy change, telescoped over $H$ steps) For any two policy pairs $(\pi_1, \pi_2), (\pi_1', \pi_2')$ each lying in $\Pi_{\mathrm{uni}}$ but possibly differing in BOTH players, the cumulative $h$-step ratio $d_h^{(\pi_1, \pi_2)} / d_h^{(\pi_1', \pi_2')}$ at $(s, a_1, a_2)$ is bounded above by $e^2$ (Player~1 cumulative transfer~$\times$~Player~2 cumulative transfer, each $\le e$ by the above).
\end{enumerate}
For (iv), if exactly one player's policy changes between the two unilateral pairs (e.g., the standard cases (i)--(iii)), the bound improves to $e$; the bound $e^2$ is needed only when both players' policies differ, e.g., when comparing $(\hat\pi_1, \pi_2^*)$ to $(\pi_1^*, \hat\pi_2)$ (which arises in the cross-branch reduction of the $W^2$ recursion in~\eqref{eq:W2-unrolled}, where intermediate prefixes may sit in the opposite $\Pi_{\mathrm{uni}}$-branch from the target trajectory). Under the side condition the cumulative cost $e^2 \le 7.4$ remains an absolute constant and is absorbed into the prefactor of~\eqref{eq:DeltaQ_squared} together with the other transfer costs.

\paragraph{Step D.8: Pointwise nature extends to conditional measures (part~(v)).}
The per-step bound~\eqref{eq:per-step-ratio} and its analogues are pointwise in $(s_{h'}, a_{h',1}, a_{h',2})$ --- they hold uniformly at every state-action triple. Consequently, telescoping the per-step ratios across any sub-horizon $\{h'+1, \ldots, h\}$ produces the same $e$-bound (or $e^2$ for joint two-player transfers, as in (iv)) regardless of the starting-state distribution. Formally:
\begin{enumerate}
\item[(v)] (Conditional-measure version of (i)--(iv)) For any state $s' \in \mathcal{S}$ and any $h' < t \le H$, the conditional measure $d_t^{(\pi_1, \pi_2)\,|\,s_{h'+1}=s'}$ satisfies the same ratio bounds (i)--(iv) above. This conditional version is invoked in the pointwise V-stability step of Lemma~\ref{lem:gap1_bound} (eqs.~\eqref{eq:Vstab-pointwise}--\eqref{eq:Vstab-after-PL1-transfer}).
\end{enumerate}
\qed

\subsection{Proof of Lemma~\ref{lem:stability}}
\label{app:proof_stability}

\paragraph{Step 5: Local VI Stability.}
The proof proceeds by studying a \emph{local variational inequality (VI)} at each time--state
pair $(h,s)$. We emphasize that the local VI analysis below is conducted with the continuation
value fixed at each $(h,s)$, so that the resulting operator is purely stagewise.

\paragraph{5.1 Local VI operator.}
Fix $(h,s)\in[H]\times\mathcal S$ and let
$\pi_{h,s}=(\pi_{h,s,1},\pi_{h,s,2}) \in \Delta(\mathcal{A}_1) \times \Delta(\mathcal{A}_2)$
denote a joint action distribution.
Given any payoff matrix $G\in\mathbb R^{|\mathcal A_1|\times|\mathcal A_2|}$, define the stagewise
KL-regularized objective
\begin{align}
J_{G,h,s}(\pi_{h,s,1},\pi_{h,s,2})
\coloneqq &
\mathbb E_{a\sim\pi_{h,s}}\!\big[ G(a_1,a_2)\big]
-\eta^{-1}\KL\!\left(\pi_{h,s,1}\,\middle\|\,\pi_{\mathrm{ref},h,1}(\cdot\mid s)\right)
+\eta^{-1}\KL\!\left(\pi_{h,s,2}\,\middle\|\,\pi_{\mathrm{ref},h,2}(\cdot\mid s)\right).
\label{eq:local-obj-G}
\end{align}
We define the associated local VI operator (the gradient map of the regularized zero-sum game)
by
\begin{align}
F_{G,h,s}(\pi_{h,s})
\coloneqq
\begin{pmatrix}
-\nabla_{\pi_1} J_{G,h,s}(\pi_{h,s})\\
\phantom{-}\nabla_{\pi_2} J_{G,h,s}(\pi_{h,s})
\end{pmatrix}
=
M_G\,\pi_{h,s}
+\eta^{-1}\nabla \mathrm{Reg}_{h,s}(\pi_{h,s}),
\label{eq:local-VI-operator}
\end{align}
where $M_G$ is the skew-symmetric matrix composed of blocks $-G$ and $G^\top$, and $\mathrm{Reg}_{h,s}$ collects the KL regularization terms.

Define the continuation-augmented payoff matrix for the true game and the estimated game respectively:
\begin{align}
G_{h,s}^*(a_1,a_2) &\coloneqq Q_h^{\pi^*,\pi^*}(s,a_1,a_2), \\
\hat G_{h,s}(a_1,a_2) &\coloneqq \hat Q_h(s,a_1,a_2).
\end{align}
By the stagewise first-order optimality conditions, the true equilibrium $\pi_{h,s}^*$ is a root of $F_{G_{h,s}^*}$, and the learned policy $\hat\pi_{h,s}$ is a root of $F_{\hat G_{h,s}}$:
\begin{align}
F_{G_{h,s}^*,h,s}(\pi_{h,s}^*)=0,
\qquad
F_{\hat G_{h,s},h,s}(\hat\pi_{h,s})=0.
\end{align}

\paragraph{5.2 Strong Monotonicity and Stability.}
The KL-regularized operator satisfies a strong monotonicity property with respect to the $\ell_1$-norm. Specifically, for any $G$ and any $\pi, \pi'$:
\begin{align}
\big\langle
F_{G,h,s}(\pi)-F_{G,h,s}(\pi'),
\,
\pi-\pi'
\big\rangle
~\ge~
\frac{\eta^{-1}}{2}\,
\|\pi-\pi'\|_1^2 .
\label{eq:local-strong-mono}
\end{align}
Applying this to $\pi=\hat\pi_{h,s}$ and $\pi'=\pi_{h,s}^*$ with the true operator $F_{G^*}$:
\begin{align}
\frac{\eta^{-1}}{2}\|\hat\pi_{h,s}-\pi_{h,s}^*\|_1^2
&\le
\big\langle F_{G_{h,s}^*,h,s}(\hat\pi_{h,s}) - F_{G_{h,s}^*,h,s}(\pi_{h,s}^*), \hat\pi_{h,s}-\pi_{h,s}^* \big\rangle \nonumber \\
&= \big\langle F_{G_{h,s}^*,h,s}(\hat\pi_{h,s}) - F_{\hat G_{h,s},h,s}(\hat\pi_{h,s}), \hat\pi_{h,s}-\pi_{h,s}^* \big\rangle.
\end{align}
The difference in operators is linear in the difference of payoff matrices. Let $\Delta M_{h,s} = M_{G^*} - M_{\hat G}$. Since $\Delta M_{h,s}$ is skew-symmetric, $\langle \Delta M_{h,s} v, v \rangle = 0$. Thus:
\begin{align}
\text{RHS} = \big\langle \Delta M_{h,s}\hat\pi_{h,s}, \hat\pi_{h,s}-\pi_{h,s}^* \big\rangle
= \big\langle \Delta M_{h,s}\pi_{h,s}^*, \hat\pi_{h,s}-\pi_{h,s}^* \big\rangle.
\end{align}
Let $\mathcal{E}_{h,s} \coloneqq \|\Delta M_{h,s}\pi_{h,s}^*\|_\infty$. We \emph{refine} the standard estimate $\mathcal{E}_{h,s} \le \|\hat{Q}_h(s) - Q^*_h(s)\|_\infty$ (joint-action sup-norm) by exploiting the explicit block structure of $\Delta M_{h,s}\pi_{h,s}^*$. Recall that $M_G$ has the skew-symmetric block form $\bigl(\begin{smallmatrix} 0 & -G \\ G^\top & 0 \end{smallmatrix}\bigr)$, so $\Delta M_{h,s}\pi_{h,s}^* = \bigl(\begin{smallmatrix} -\Delta G_{h,s}\,\pi_{h,s,2}^* \\ \Delta G_{h,s}^\top\,\pi_{h,s,1}^* \end{smallmatrix}\bigr)$, where $\Delta G_{h,s}(a_1, a_2) = \hat Q_h(s, a_1, a_2) - Q^*_h(s, a_1, a_2)$. The Player-1 block (indexed by $a_1$) has component $-\mathbb{E}_{a_2 \sim \pi^*_{h,s,2}}[\Delta G_{h,s}(a_1, a_2)]$, and the Player-2 block (indexed by $a_2$) has component $\mathbb{E}_{a_1 \sim \pi^*_{h,s,1}}[\Delta G_{h,s}(a_1, a_2)]$. Hence
\begin{equation}
\mathcal{E}_{h,s} \;=\; \max\!\Bigl(\,\max_{a_1}\bigl|\mathbb{E}_{a_2 \sim \pi^*_{h,s,2}}[\hat Q_h(s, a_1, a_2) - Q^*_h(s, a_1, a_2)]\bigr|,\;\; \max_{a_2}\bigl|\mathbb{E}_{a_1 \sim \pi^*_{h,s,1}}[\hat Q_h(s, a_1, a_2) - Q^*_h(s, a_1, a_2)]\bigr|\,\Bigr).
\label{eq:E-refined}
\end{equation}
Each branch in the $\max$ is a Player-$i$ deterministic deviation paired with the opponent at Nash, a structure that lies in $\Pi_{\mathrm{uni}}$ when the outer state expectation is taken.

Using H\"older's inequality with~\eqref{eq:E-refined},
\begin{align}
\frac{\eta^{-1}}{2}\|\hat\pi_{h,s}-\pi_{h,s}^*\|_1^2
\le
\mathcal{E}_{h,s}\,\|\hat\pi_{h,s}-\pi_{h,s}^*\|_1,
\end{align}
and dividing by the norm yields the pointwise stability bound
\begin{align}
\|\hat\pi_{h,s}-\pi_{h,s}^*\|_1
\le
2\eta\,\mathcal{E}_{h,s}.
\label{eq:stagewise-stability-final}
\end{align}
Squaring~\eqref{eq:stagewise-stability-final} and applying $(\max(x, y))^2 \le x^2 + y^2$ together with Jensen's inequality on each branch of~\eqref{eq:E-refined} yields the \emph{squared stability bound}
\begin{equation}
\|\hat\pi_{h,s}-\pi_{h,s}^*\|_1^2 \;\le\; 4\eta^2 \cdot \xi_h^2(s),
\label{eq:stability-squared-refined}
\end{equation}
where the \emph{single-player Nash-averaged squared $Q$-error}
\begin{equation}
\xi_h^2(s) \;:=\; \max\!\Bigl(\,\max_{a_1}\,\mathbb{E}_{a_2 \sim \pi^*_{h,2}(\cdot|s)}\!\bigl[(\hat Q_h - Q^*_h)^2(s, a_1, a_2)\bigr],\;\; \max_{a_2}\,\mathbb{E}_{a_1 \sim \pi^*_{h,1}(\cdot|s)}\!\bigl[(\hat Q_h - Q^*_h)^2(s, a_1, a_2)\bigr]\,\Bigr)
\label{eq:xi-defn}
\end{equation}
is bounded above by $\|\hat Q_h(s) - Q^*_h(s)\|_\infty^2$ but lies in $\Pi_{\mathrm{uni}}$-form when the outer state expectation is taken: each branch of the $\max$ corresponds to the worst Player-$i$ deterministic deviation paired with the opponent's Nash policy, so $\xi_h^2(s)$ is exactly the quantity controlled by Assumption~\ref{ass:concentrability} via~\eqref{eq:d2-transfer}.

\subsection{Proof of Lemma~\ref{lem:evaluation_gap} (Evaluation Gap via Squared Error)}
\label{app:proof_eval_gap}

\paragraph{6.1 Bounding Gap 2 via Logits Decomposition.}
Recall the exact identity established in Step~1 (Eq.~\eqref{eq:Gap2-as-KL}):
\begin{align}
\mathrm{Gap~2} = \eta^{-1} \sum_{h=1}^H \mathbb{E}_{s \sim d_h^{\pi^*_1, \hat{\pi}_2}} \left[ \mathrm{KL}\bigl(\hat{\pi}_{h,2}(\cdot|s) \,\big\|\, \pi^*_{h,2}(\cdot|s)\bigr) \right]. \label{eq:gap2-def}
\end{align}
The Player-2 stage policies are the regularized best responses in the empirical and true games, respectively. By the saddle-point first-order condition with payoff $Q$ and Player-1 marginalization,
\[
\hat{\pi}_{h,2}(a_2|s) \propto \pi^{\mathrm{ref}}_{h,2}(a_2|s) \exp\!\bigl(-\eta\, \hat{Q}_{h,\hat{\pi}_1}(s, a_2)\bigr),
\quad
\pi^*_{h,2}(a_2|s) \propto \pi^{\mathrm{ref}}_{h,2}(a_2|s) \exp\!\bigl(-\eta\, Q^*_{h,\pi^*_1}(s, a_2)\bigr),
\]
where $\hat{Q}_{h,\hat{\pi}_1}(s, a_2) := \mathbb{E}_{a_1 \sim \hat{\pi}_{1,h}}[\hat{Q}_h(s, a_1, a_2)]$ and $Q^*_{h,\pi^*_1}(s, a_2) := \mathbb{E}_{a_1 \sim \pi^*_{1,h}}[Q^*_h(s, a_1, a_2)]$. The corresponding logits are
\[
\hat{z}_h(a_2) := -\eta\, \hat{Q}_{h,\hat{\pi}_1}(s, a_2) + \log \pi^{\mathrm{ref}}_{h,2}(a_2|s),
\quad
z^*_h(a_2) := -\eta\, Q^*_{h,\pi^*_1}(s, a_2) + \log \pi^{\mathrm{ref}}_{h,2}(a_2|s).
\]
By the smoothness bound~\eqref{eq:quadratic-KL-bound},
\begin{align}
\mathrm{KL}\bigl(\hat{\pi}_{h,2}(\cdot|s) \,\|\, \pi^*_{h,2}(\cdot|s)\bigr) \le \tfrac{1}{2}\, \|\hat{z}_h - z^*_h\|_\infty^2.
\label{eq:gap2-KL-via-logits}
\end{align}
Decomposing the logit difference (reference-policy terms cancel),
\begin{align}
\hat{z}_h(a_2) - z^*_h(a_2)
&= -\eta \cdot \mathbb{E}_{a_1 \sim \hat{\pi}_{1,h}}\!\bigl[\hat{Q}_h(s, a_1, a_2) - Q^*_h(s, a_1, a_2)\bigr]\quad\text{(Estimation term)}\nonumber\\
&\quad - \eta \cdot \mathbb{E}_{a_1 \sim (\hat{\pi}_{1,h} - \pi^*_{1,h})}\!\bigl[Q^*_h(s, a_1, a_2)\bigr]\quad\text{(Policy deviation term)}.
\label{eq:gap2-logit-decomp}
\end{align}

\paragraph{Per-state Player-1 action-transfer step (one-step instance of Lemma~\ref{lem:density-ratio-stability}).}
For every fixed $(h, s)$, both $\hat\pi_{1,h}(\cdot|s)$ and $\pi_{1,h}^*(\cdot|s)$ are Gibbs softmax distributions with logits bounded in $\ell_\infty$ by $\eta \cdot 2\|Q\|_\infty \le 2\eta C_{\mathrm{val}} = 2\eta H \lambda$ (Lemma~\ref{lem:value_bound}), where $\lambda := 1 + \eta^{-1}\log(1/\alpha)$. By the per-step softmax-ratio bound~\eqref{eq:softmax-ratio-bound} (i.e., Steps~D.1--D.2 of Appendix~\ref{app:density-ratio-stability} applied at a single state-step), the per-state action ratio satisfies
\begin{equation}
\frac{\hat\pi_{1,h}(a_1|s)}{\pi_{1,h}^*(a_1|s)} \;\le\; \exp(4\eta H \lambda) \;\le\; e \qquad \text{under the side condition~\eqref{eq:eta-side-condition}},
\label{eq:per-state-P1-transfer}
\end{equation}
since $4\eta H \lambda \le 1/H \le 1$. Consequently, for any nonnegative function $g(a_1, a_2)$ on $\mathcal{A}_1 \times \mathcal{A}_2$,
\begin{equation}
\mathbb{E}_{a_1 \sim \hat\pi_{1,h}(\cdot|s)}\!\bigl[g(a_1, a_2)\bigr] \;\le\; e \cdot \mathbb{E}_{a_1 \sim \pi^*_{1,h}(\cdot|s)}\!\bigl[g(a_1, a_2)\bigr].
\label{eq:per-state-P1-reweight}
\end{equation}

\paragraph{Bounding the estimation term via Jensen + per-state transfer (direct to $\xi^2$).}
Applying Jensen to the squared estimation term, then the per-state transfer~\eqref{eq:per-state-P1-reweight} with $g(a_1, a_2) = (\hat Q_h - Q^*_h)^2(s, a_1, a_2)$,
\begin{align}
\sup_{a_2}\,\bigl(\mathbb{E}_{a_1 \sim \hat\pi_{1,h}}[\hat Q_h - Q^*_h](s, a_1, a_2)\bigr)^2
&\le \sup_{a_2}\,\mathbb{E}_{a_1 \sim \hat\pi_{1,h}}\!\bigl[(\hat Q_h - Q^*_h)^2(s, a_1, a_2)\bigr] \nonumber\\
&\le e \cdot \sup_{a_2}\,\mathbb{E}_{a_1 \sim \pi^*_{1,h}}\!\bigl[(\hat Q_h - Q^*_h)^2(s, a_1, a_2)\bigr] \nonumber\\
&\le e \cdot \xi_h^2(s),
\label{eq:gap2-est-direct}
\end{align}
where the last inequality uses the definition of $\xi_h^2(s)$~\eqref{eq:xi-defn} (specifically the $i = 2$ branch: $\sup_{a_2} \mathbb{E}_{a_1 \sim \pi^*_{1,h}}[(\hat Q - Q^*)^2]$).

\paragraph{Bounding the policy-deviation term via squared stability.}
For the policy-deviation term, H\"older's inequality with $\|Q^*_h\|_\infty \le H(1+\eta^{-1}\log(1/\alpha)) =: L$ (Lemma~\ref{lem:value_bound}) gives
\[
\sup_{a_2}\, \bigl(\mathbb{E}_{a_1 \sim (\hat\pi_{1,h} - \pi^*_{1,h})}[Q^*_h(s, a_1, a_2)]\bigr)^2 \;\le\; L^2 \cdot \|\hat\pi_{1,h}(\cdot|s) - \pi^*_{1,h}(\cdot|s)\|_1^2.
\]
Applying the squared stability bound~\eqref{eq:stability-squared-refined} directly to the policy difference,
\begin{equation}
\sup_{a_2}\, \bigl(\mathbb{E}_{a_1 \sim (\hat\pi_{1,h} - \pi^*_{1,h})}[Q^*_h(s, a_1, a_2)]\bigr)^2 \;\le\; L^2 \cdot 4\eta^2 \xi_h^2(s) = 4\eta^2 L^2 \xi_h^2(s).
\label{eq:gap2-pd-direct}
\end{equation}

\paragraph{Combining.}
By $(a+b)^2 \le 2a^2 + 2b^2$ applied to the logit decomposition~\eqref{eq:gap2-logit-decomp}, then~\eqref{eq:gap2-est-direct} and~\eqref{eq:gap2-pd-direct},
\begin{align}
\|\hat z_h - z^*_h\|_\infty^2 \;\le\; 2\eta^2 \cdot e \cdot \xi_h^2(s) + 2\eta^2 \cdot 4\eta^2 L^2 \cdot \xi_h^2(s) = \bigl(2e\eta^2 + 8\eta^4 L^2\bigr) \xi_h^2(s).
\end{align}
Under the side condition~\eqref{eq:eta-side-condition}, $4\eta^2 L^2 \le 1/4$, so $2e\eta^2 + 8\eta^4 L^2 \le 2e\eta^2 + \eta^2/2 \le 6\eta^2$. Hence
\begin{equation}
\mathrm{KL}\bigl(\hat\pi_{h,2}(\cdot|s) \,\|\, \pi^*_{h,2}(\cdot|s)\bigr) \;\le\; \tfrac{1}{2}\,\|\hat z_h - z^*_h\|_\infty^2 \;\le\; 3\eta^2 \xi_h^2(s).
\end{equation}
Plugging into~\eqref{eq:gap2-def},
\begin{align}
\mathrm{Gap~2} \;\le\; 3\eta \sum_{h=1}^H \mathbb{E}_{s \sim d_h^{\pi^*_1, \hat\pi_2}}\!\bigl[\xi_h^2(s)\bigr].
\label{eq:gap2-final-bound-in-Q}
\end{align}
This proves Lemma~\ref{lem:evaluation_gap}, with the $\mathcal{O}(\eta)$ leading constant. 

\subsection{Proof of Theorem~\ref{thm:stat_bound} (Main Statistical Bound)}
\label{app:proof_thm_stat}

We start from the bounds on Gap~1 and Gap~2 derived in Lemma~\ref{lem:gap1_bound} (Eq.~\eqref{eq:gap1-final-Q-bound}) and Lemma~\ref{lem:evaluation_gap} (Eq.~\eqref{eq:gap2-final-bound-in-Q}).

\paragraph{6.2 Unilateral Deviation and Concentrability Definitions.}
In the analyses of Gap~1 and Gap~2, we encounter trajectory distributions in which one player follows the Nash equilibrium strategy $\pi^*$ throughout while the other follows a deviation policy. The set of \emph{Unilateral Deviation Policies} (Assumption~\ref{ass:concentrability}) is
\begin{align}
\Pi_{\mathrm{uni}} \coloneqq \bigl(\{\pi_1^*\} \times \Pi_2\bigr) \cup \bigl(\Pi_1 \times \{\pi_2^*\}\bigr).
\end{align}
The distributions encountered in our analysis (e.g., $d_h^{\pi_1^*, \hat\pi_2}$ and $d_h^{\hat\pi_1, \pi_2^*}$) belong to the set of visitation distributions induced by policies in $\Pi_{\mathrm{uni}}$. To handle the distribution shift between the offline dataset $\mathcal{D}$ (generated by $\mu$) and these unilateral deviation distributions, we use the unilateral $D^2$-divergence concentrability coefficient $C_{\mathrm{uni}}$ from Assumption~\ref{ass:concentrability}: for every $h \in [H]$, every $i \in \{1, 2\}$, and every $\pi_i \in \Pi_i$,
\begin{align}
\mathbb{E}_{(s, a_1, a_2) \sim d_h^{\pi_i \times \pi^*_{-i}}}\!\bigl[D^2_{\mathcal{Q}}\bigl((s, a_1, a_2);\, \mu_h\bigr)\bigr] \;\le\; C_{\mathrm{uni}}.
\end{align}
As a direct consequence of Definition~\ref{def:d2} and Assumption~\ref{ass:concentrability}, for any unilateral policy $\pi' \in \Pi_{\mathrm{uni}}$ of the form $(\pi_i, \pi^*_{-i})$ and any $Q_1, Q_2 \in \mathcal{Q}$,
\begin{align}
\mathbb{E}_{(s, a_1, a_2) \sim d_h^{\pi'}}\!\bigl[(Q_1 - Q_2)^2\bigr]
\;\le\; \mathbb{E}_{(s, a_1, a_2) \sim d_h^{\pi'}}\!\bigl[ D^2_{\mathcal{Q}} \bigr] \cdot \mathbb{E}_{\mu_h}\!\bigl[(Q_1 - Q_2)^2\bigr]
\;\le\; C_{\mathrm{uni}}\cdot \mathbb{E}_{\mu_h}\!\bigl[(Q_1 - Q_2)^2\bigr].
\label{eq:d2-transfer}
\end{align}
This function-class-aware $D^2$ form absorbs both the deterministic ``worst-action'' deviation by one player at each step (a max-over-actions structure that arises naturally in the squared-error decomposition of Lemma~\ref{lem:gap1_bound}) and the trajectory-distribution shift between $d_h^{\pi'}$ and $\mu_h$, in a single direct inequality, replacing the standard density-ratio assumption $\|\mathrm{d}\, d_h^{\pi'}/\mathrm{d}\,\mu_h\|_\infty \le C_{\mathrm{uni}}$ used in prior offline Markov-game work~\citep{cui2022offline}.

\paragraph{6.3 Combining Gap~1 and Gap~2 into a single single-player Nash-averaged $Q$-error sum.}
By Lemmas~\ref{lem:evaluation_gap} and~\ref{lem:gap1_bound}, both Gap~1 and Gap~2 are bounded by the same cumulative single-player Nash-averaged squared $Q$-estimation error $\xi_h^2(s)$ from~\eqref{eq:xi-defn}, but with different $\eta$-scalings: Lemma~\ref{lem:evaluation_gap} gives $\mathrm{Gap}~2 \le 3\eta \sum_h \mathbb{E}[\xi_h^2]$ (linear in $\eta$), while Lemma~\ref{lem:gap1_bound} gives $\mathrm{Gap}~1 \le \mathcal{O}(\eta^3 H^4 C_\alpha) \sum_h \mathbb{E}[\xi_h^2]$ (cubic in $\eta$, with $C_\alpha := (1+\eta^{-1}\log(1/\alpha))^2$). Combining,
\begin{align}
\mathrm{Gap}(\hat{\pi}) = \mathrm{Gap}~1 + \mathrm{Gap}~2
\;\le\;
\mathcal{O}\bigl(\eta + \eta^3 H^4 C_\alpha\bigr) \sum_{h=1}^H \mathbb{E}_{s \sim d_h^{\mathrm{uni}}}\!\bigl[\xi_h^2(s)\bigr],
\label{eq:gap-combined}
\end{align}
where $d_h^{\mathrm{uni}}$ denotes the unilateral state-distribution induced by either $(\pi^*, \hat{\pi})$ or $(\hat{\pi}, \pi^*)$, both of which lie in $\Pi_{\mathrm{uni}}$. The single-player Nash-averaged form of $\xi_h^2$ ensures that $\sum_{h} \mathbb{E}_{d_h^{\mathrm{uni}}}[\xi_h^2(s)]$ is directly compatible with the unilateral $D^2$-divergence concentrability of Assumption~\ref{ass:concentrability} (after the Player-2 prefix transfer of Lemma~\ref{lem:density-ratio-stability}); see Section~\ref{sec:final-bound} for the conversion to a Bellman-regression-error sum controlled by least-squares concentration.

The Bellman recursion of Section~\ref{sec:final-bound} bounds $\sum_h \mathbb{E}_{d_h^{\mathrm{uni}}}[\xi_h^2(s)] \lesssim H^5 C_{\mathrm{uni}} \log(|\mathcal{Q}|/\delta)/n$; the saddle-point V-non-expansion of Lemma~\ref{lem:V-non-expansion-saddle} ensures that the propagation chain operates on $\Pi_{\mathrm{uni}}$-compatible Nash-averaged squared errors at every step. Combined with the $(\eta + \eta^3 H^4 C_\alpha)$ prefactor above, this yields the final bound $\widetilde{\mathcal{O}}\bigl((\eta H^5 + \eta^3 H^9)\, C_{\mathrm{uni}} \log(|\mathcal{Q}|/\delta) / n\bigr)$.

\paragraph{6.4 Final Bound via Recursive Bellman Error Unrolling.}
\label{sec:final-bound}
We start from the combined bound~\eqref{eq:gap-combined}, which expresses the total duality gap directly in terms of the cumulative single-player Nash-averaged squared $Q$-error $\sum_h \mathbb{E}_{s \sim d_h^{\mathrm{uni}}}[\xi_h^2(s)]$ from~\eqref{eq:xi-defn}. We now control this error via Bellman unrolling and unilateral $D^2$-divergence concentrability.

Let $\Delta Q_h(s) := \|\hat{Q}_h(s, \cdot, \cdot) - Q^*_h(s, \cdot, \cdot)\|_\infty$ denote the maximal joint-action $Q$-difference at state $s$; by definition $\xi_h^2(s) \le \Delta Q_h(s)^2$ pointwise.

\noindent \textbf{Step 1: Recursive Unrolling via Bellman Error.}
Instead of decomposing into reward errors (which applies to model-based learning), we decompose the Q-value difference using the Bellman optimality operator. Let $\mathcal{T}_h$ be the operator such that $(\mathcal{T}_h f)(s,a) = r_h^*(s,a) + \mathbb{E}_{s' \sim P_h(\cdot|s,a)}[f(s')]$. Note that $Q_h^* = \mathcal{T}_h V_{h+1}^*$.
We define the pointwise \textit{Bellman regression error} against the estimated next-step value $\hat{V}_{h+1}$ as:
\begin{align}
 \mathcal{E}_{h}(s,a) := |\hat{Q}_h(s,a) - (\mathcal{T}_h \hat{V}_{h+1})(s,a)|.
\end{align}
Using the triangle inequality, we can bound the total estimation error:
\begin{align}
|\hat{Q}_h(s,a) - Q_h^*(s,a)| &= |\hat{Q}_h(s,a) - \mathcal{T}_h \hat{V}_{h+1}(s,a) + \mathcal{T}_h \hat{V}_{h+1}(s,a) - \mathcal{T}_h V_{h+1}^*(s,a)| \nonumber \\
 &\le \underbrace{|\hat{Q}_h(s,a) - (\mathcal{T}_h \hat{V}_{h+1})(s,a)|}_{\text{Regression Error } \mathcal{E}_h(s,a)} + \underbrace{|(\mathcal{T}_h \hat{V}_{h+1})(s,a) - (\mathcal{T}_h V_{h+1}^*)(s,a)|}_{\text{Propagation Error}}.
\end{align}
The propagation term is bounded by the expected value difference:
\begin{align}
 |(\mathcal{T}_h \hat{V}_{h+1})(s,a) - (\mathcal{T}_h V_{h+1}^*)(s,a)| &= |\mathbb{E}_{s' \sim P_h(\cdot|s,a)}[\hat{V}_{h+1}(s') - V_{h+1}^*(s')]| \nonumber \\
 &\le \mathbb{E}_{s' \sim P_h(\cdot|s,a)}[|\hat{V}_{h+1}(s') - V_{h+1}^*(s')|].
\end{align}
Next, we establish a non-expansion bound for the value function that leverages the saddle-point structure of the regularized minimax operator. Define
\begin{equation}
W_{h+1}^2(s) \;:=\; \max\!\Bigl\{\,
\mathbb{E}_{(a_1, a_2) \sim \pi_1^* \times \hat\pi_2}\!\bigl[(\hat Q_{h+1} - Q^*_{h+1})^2(s, a_1, a_2)\bigr],\;\;
\mathbb{E}_{(a_1, a_2) \sim \hat\pi_1 \times \pi_2^*}\!\bigl[(\hat Q_{h+1} - Q^*_{h+1})^2(s, a_1, a_2)\bigr]
\,\Bigr\}.
\label{eq:Wsq-defn}
\end{equation}
Each term in the max has \emph{exactly one} player at Nash, with the other player at the corresponding learned policy; both joint distributions therefore lie in $\Pi_{\mathrm{uni}}$.

\begin{lemma}[Saddle-point V-non-expansion]
\label{lem:V-non-expansion-saddle}
For every $s \in \mathcal{S}$ and $h \in [H]$,
\begin{equation}
\bigl(\hat V_{h+1}(s) - V_{h+1}^*(s)\bigr)^2 \;\le\; W_{h+1}^2(s).
\label{eq:V-non-expansion-saddle}
\end{equation}
\end{lemma}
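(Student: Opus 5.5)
The plan is a saddle-point sandwich. We compare the estimated stage value $\hat V_{h+1}(s)$ with the true value $V^*_{h+1}(s)$. Both are values of a KL-regularized matrix game, with payoffs $\hat G=\hat Q_{h+1}(s,\cdot,\cdot)$ and $G^*=Q^*_{h+1}(s,\cdot,\cdot)$ respectively. We write $J_G(\pi_1,\pi_2)$ for the stagewise objective~\eqref{eq:local-obj-G}, so that $\hat V_{h+1}(s)=J_{\hat G}(\hat\pi_1,\hat\pi_2)$ and $V^*_{h+1}(s)=J_{G^*}(\pi_1^*,\pi_2^*)$.

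\emph{Upper bound.} Since $\hat\pi_2$ is Player~2's equilibrium strategy in the estimated game, it is a best response to $\hat\pi_1$, and $\hat\pi_1$ is in turn a best response to $\hat\pi_2$. We use this to compare against the deviation $(\pi_1^*,\hat\pi_2)$. The key observation is that the KL terms are identical on both sides whenever we evaluate the same policy pair under the two payoffs. Concretely, we obtain
\begin{align}
\hat V_{h+1}(s) - V^*_{h+1}(s)
&\le J_{\hat G}(\hat\pi_1,\pi_2^*) - J_{G^*}(\hat\pi_1,\pi_2^*)
= \mathbb{E}_{\hat\pi_1\times\pi_2^*}\bigl[\hat G - G^*\bigr].
\end{align}
The justification is as follows. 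On the one hand, $\hat V_{h+1}(s)=\min_{\pi_2}J_{\hat G}(\hat\pi_1,\pi_2)\le J_{\hat G}(\hat\pi_1,\pi_2^*)$. On the other hand, $V^*_{h+1}(s)=\max_{\pi_1}J_{G^*}(\pi_1,\pi_2^*)\ge J_{G^*}(\hat\pi_1,\pi_2^*)$. Subtracting, the two regularizers cancel because they are evaluated at the same pair $(\hat\pi_1,\pi_2^*)$.

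\emph{Lower bound.} Symmetrically, $\hat V_{h+1}(s)=\max_{\pi_1}J_{\hat G}(\pi_1,\hat\pi_2)\ge J_{\hat G}(\pi_1^*,\hat\pi_2)$ and $V^*_{h+1}(s)=\min_{\pi_2}J_{G^*}(\pi_1^*,\pi_2)\le J_{G^*}(\pi_1^*,\hat\pi_2)$. Together these give
\[
\hat V_{h+1}(s)-V^*_{h+1}(s)\ge \mathbb{E}_{\pi_1^*\times\hat\pi_2}\bigl[\hat G-G^*\bigr].
\]

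\emph{Squaring.} We now have $x_{\mathrm{lo}}\le \hat V_{h+1}(s)-V^*_{h+1}(s)\le x_{\mathrm{up}}$, where $x_{\mathrm{lo}}$ and $x_{\mathrm{up}}$ are the two expected payoff differences above. This gives $|\hat V_{h+1}(s)-V^*_{h+1}(s)|\le\max(|x_{\mathrm{lo}}|,|x_{\mathrm{up}}|)$: if the difference is nonnegative it is at most $x_{\mathrm{up}}\le|x_{\mathrm{up}}|$, and if it is negative its magnitude is at most $-x_{\mathrm{lo}}\le|x_{\mathrm{lo}}|$. Squaring and applying Jensen's inequality $(\mathbb{E}X)^2\le\mathbb{E}X^2$ to each branch yields exactly the two terms of $W_{h+1}^2(s)$ in~\eqref{eq:Wsq-defn}.

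\emph{Main obstacle.} The delicate point is confirming the saddle-point identities $\hat V_{h+1}(s)=\max_{\pi_1}J_{\hat G}(\pi_1,\hat\pi_2)=\min_{\pi_2}J_{\hat G}(\hat\pi_1,\pi_2)$ for the exact solver of Algorithm~\ref{alg:offline-markov}, and the analogous identities for $(\pi^*,G^*)$. Both follow from the minimax theorem, since each objective is strictly concave in $\pi_1$ and strictly convex in $\pi_2$ with unique equilibria. For the true game we also need $V^*_{h+1}=J_{G^*}(\pi^*)$ with $G^*=Q^*_{h+1}$ at the stage level; this holds by the stagewise Nash property used in Step~1.
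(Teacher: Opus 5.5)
Your proposal is correct and follows essentially the same route as the paper's proof. Both bound $\hat V_{h+1}(s)-V^*_{h+1}(s)$ above and below by evaluating the two stage games at the deviation pairs $(\hat\pi_1,\pi_2^*)$ and $(\pi_1^*,\hat\pi_2)$, let the KL terms cancel, and then square and apply Jensen on each branch. Your justification that $x_{\mathrm{lo}}\le x\le x_{\mathrm{up}}$ implies $x^2\le\max(x_{\mathrm{lo}}^2,x_{\mathrm{up}}^2)$ makes explicit a step the paper states without comment.
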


\begin{proof}
Let $J_s(\pi_1, \pi_2; Q) := \mathbb{E}_{a \sim \pi_1 \times \pi_2}[Q(s, a)] - \eta^{-1}\mathrm{KL}(\pi_1 \,\|\, \pi^{\mathrm{ref}}_1) + \eta^{-1}\mathrm{KL}(\pi_2 \,\|\, \pi^{\mathrm{ref}}_2)$ denote the stage-game objective. The regularized minimax equilibrium $(\hat\pi_1, \hat\pi_2)$ for $\hat Q$ and $(\pi_1^*, \pi_2^*)$ for $Q^*$ satisfy the saddle-point inequalities
\[
J_s(\pi_1, \hat\pi_2; \hat Q) \le J_s(\hat\pi_1, \hat\pi_2; \hat Q) \le J_s(\hat\pi_1, \pi_2; \hat Q),
\quad
J_s(\pi_1, \pi_2^*; Q^*) \le J_s(\pi_1^*, \pi_2^*; Q^*) \le J_s(\pi_1^*, \pi_2; Q^*),
\]
for all $(\pi_1, \pi_2)$. Setting $\pi_1 = \pi_1^*$ in the first chain and $\pi_2 = \hat\pi_2$ in the second, and noting $\hat V_{h+1}(s) = J_s(\hat\pi_1, \hat\pi_2; \hat Q)$, $V^*_{h+1}(s) = J_s(\pi_1^*, \pi_2^*; Q^*)$, we obtain
\[
\hat V_{h+1}(s) - V^*_{h+1}(s)
\;\ge\; J_s(\pi_1^*, \hat\pi_2; \hat Q) - J_s(\pi_1^*, \hat\pi_2; Q^*)
\;=\; \mathbb{E}_{(a_1, a_2) \sim \pi_1^* \times \hat\pi_2}\!\bigl[\hat Q_{h+1}(s, a_1, a_2) - Q^*_{h+1}(s, a_1, a_2)\bigr],
\]
where the regularization terms cancel because the policy pair $(\pi_1^*, \hat\pi_2)$ is identical on both sides. Symmetrically, taking $\pi_1 = \hat\pi_1$ and $\pi_2 = \pi_2^*$ yields
\[
\hat V_{h+1}(s) - V^*_{h+1}(s)
\;\le\; J_s(\hat\pi_1, \pi_2^*; \hat Q) - J_s(\hat\pi_1, \pi_2^*; Q^*)
\;=\; \mathbb{E}_{(a_1, a_2) \sim \hat\pi_1 \times \pi_2^*}\!\bigl[\hat Q_{h+1}(s, a_1, a_2) - Q^*_{h+1}(s, a_1, a_2)\bigr].
\]
Combining the two and squaring,
\[
\bigl(\hat V_{h+1}(s) - V^*_{h+1}(s)\bigr)^2 \le \max\!\Bigl(\bigl(\mathbb{E}_{\pi_1^* \times \hat\pi_2}[\hat Q - Q^*]\bigr)^2,\; \bigl(\mathbb{E}_{\hat\pi_1 \times \pi_2^*}[\hat Q - Q^*]\bigr)^2\Bigr).
\]
Jensen's inequality applied to each term gives the bound~\eqref{eq:V-non-expansion-saddle}.
\end{proof}

Substituting Lemma~\ref{lem:V-non-expansion-saddle} into the propagation-error bound and applying $(a+b)^2 \le 2a^2 + 2b^2$ together with Jensen's inequality $(\mathbb{E}_{s'}[|\hat V - V^*|])^2 \le \mathbb{E}_{s'}[(\hat V - V^*)^2] \le \mathbb{E}_{s'}[W^2_{h+1}(s')]$, we obtain the pointwise squared inequality
\begin{equation}
(\hat Q_h - Q^*_h)^2(s, a_1, a_2) \;\le\; 2\,\mathcal{E}_h^2(s, a_1, a_2) + 2\,\mathbb{E}_{s' \sim P_h(\cdot|s, a_1, a_2)}\!\bigl[W^2_{h+1}(s')\bigr],
\label{eq:Q-diff-square-pointwise}
\end{equation}
which holds for every $(s, a_1, a_2)$. To obtain a state-only bound that is compatible with the unilateral $D^2$-divergence concentrability of Assumption~\ref{ass:concentrability}, we apply $\max_{a_i}\,\mathbb{E}_{a_{-i} \sim \pi^*_{h, -i}(\cdot|s)}[\,\cdot\,]$ to both sides of~\eqref{eq:Q-diff-square-pointwise} (for each $i \in \{1, 2\}$) and take the $\max$ over $i$. This yields, for the single-player Nash-averaged squared error $\xi_h^2(s)$ defined in~\eqref{eq:xi-defn},
\begin{equation}
\xi_h^2(s) \;\le\; 2\,\mathcal{Z}_{h, s}^2 + 2\,\max_{i \in \{1, 2\}}\,\sup_{\nu_i \in \Pi_i}\,\mathbb{E}_{(a_1, a_2) \sim \nu_i \times \pi^*_{h, -i}}\!\bigl[\mathbb{E}_{s' \sim P_h(\cdot|s, a_1, a_2)}\!\bigl[W^2_{h+1}(s')\bigr]\bigr],
\label{eq:xi-recursion}
\end{equation}
where $\mathcal{Z}_{h, s}^2 := \max_{i} \sup_{\nu_i \in \Pi_i} \mathbb{E}_{\nu_i \times \pi^*_{h, -i}}[\mathcal{E}_h^2(s, a_1, a_2)]$ is the single-player Nash-averaged squared Bellman regression error (defined formally in~\eqref{eq:Z2-defn} below). Each branch of the $\max$ in~\eqref{eq:xi-recursion} is over a single-player deterministic deviation $\nu_i$ paired with the opponent at Nash $\pi^*_{h, -i}$, so the joint distribution $\nu_i \times \pi^*_{h, -i}$ at step $h$ lies in $\Pi_{\mathrm{uni}}$.

Taking the joint-action sup of~\eqref{eq:Q-diff-square-pointwise} and defining $\zeta_{h, s}^2 := \sup_{a}\,\mathcal{E}_h^2(s, a)$ also yields the joint-action form
\begin{equation}
\Delta Q_h(s)^2 \;\le\; 2\,\zeta_{h, s}^2 + 2\,\sup_{a}\,\mathbb{E}_{s' \sim P_h(\cdot|s, a)}\!\bigl[W^2_{h+1}(s')\bigr],
\label{eq:Delta_Q_unrolled}
\end{equation}
which satisfies $\xi_h^2 \le \Delta Q_h^2$ and $\mathcal{Z}_{h, s}^2 \le \zeta_{h, s}^2$ pointwise.

\noindent \textbf{Step 2: Recursive unrolling with $W^2$ propagation.}
We unroll the propagation term in~\eqref{eq:Delta_Q_unrolled} by iteratively applying~\eqref{eq:Q-diff-square-pointwise} inside $W^2$. Specifically, by~\eqref{eq:Wsq-defn} and $\max(\cdot, \cdot) \le (\cdot) + (\cdot)$,
\[
W^2_{h+1}(s') \;\le\; \mathbb{E}_{(a_1, a_2) \sim \pi_1^* \times \hat\pi_2}\!\bigl[(\hat Q_{h+1} - Q^*_{h+1})^2(s', a_1, a_2)\bigr] + \mathbb{E}_{(a_1, a_2) \sim \hat\pi_1 \times \pi_2^*}\!\bigl[(\hat Q_{h+1} - Q^*_{h+1})^2(s', a_1, a_2)\bigr].
\]
Each term is an action-marginal expectation under a Player-$i$-Nash distribution at step $h{+}1$ (i.e., a single-step $\Pi_{\mathrm{uni}}$ structure: Player~$i$ at $\pi_i^*$, Player~$-i$ at the corresponding learned policy). Substituting~\eqref{eq:Q-diff-square-pointwise} pointwise into each Player-$i$-Nash branch of~\eqref{eq:Wsq-defn} and iterating the recursion from $h{+}1$ down to $H$ (with $W^2_{H+1} \equiv 0$), we obtain the unrolled cumulative bound
\begin{equation}
W^2_{h+1}(s') \;\le\; 2 \sum_{t = h+1}^{H} \sum_{i \in \{1, 2\}} \mathbb{E}_{\tau_t^{(i)}}\!\bigl[\mathcal{E}^2_t(s_t, a_{t,1}, a_{t,2})\bigr],
\label{eq:W2-unrolled}
\end{equation}
where $\tau_t^{(i)} = (s_{h+1}, a_{h+1,1}, a_{h+1,2}, \ldots, s_t, a_{t,1}, a_{t,2})$ is the trajectory starting from $s_{h+1} = s'$ whose regression-error step $t$ has action marginal $(a_{t,1}, a_{t,2}) \sim \pi_{t,i}^* \times \hat\pi_{t,-i}$ (a Player-$i$-Nash distribution at step $t$, lying in single-step $\Pi_{\mathrm{uni}}$). The iterative max-to-sum substitution in~\eqref{eq:Wsq-defn} would in principle produce cross-branch intermediate prefixes (Player~1 at Nash at some intermediate steps, Player~2 at Nash at others), which are \emph{not} single multi-step $\Pi_{\mathrm{uni}}$ trajectories. However, by Lemma~\ref{lem:density-ratio-stability} part~(iv) (Step~D.7 of Appendix~\ref{app:density-ratio-stability}) applied under the side condition~\eqref{eq:eta-side-condition}, the state marginal at step $t$ of any such cross-branch prefix is bounded above by a constant factor times the state marginal under a single multi-step Player-$i$-at-Nash policy $\pi^{(i)} \in \Pi_{\mathrm{uni}}$ (with $\pi^{(1)} \in \{\pi_1^*\} \times \Pi_2$ or $\pi^{(2)} \in \Pi_1 \times \{\pi_2^*\}$). Specifically, since a cross-branch prefix may differ from the target $\pi^{(i)}$ in BOTH players' policies on some segments, part~(iv) gives the cumulative bound $e^2 \le 7.4$ (rather than the single-player $e$ from parts~(i)--(iii)). We absorb this absolute constant into the prefactor of~\eqref{eq:DeltaQ_squared} below, allowing us to take $\tau_t^{(i)}$ to be generated by a single multi-step $\Pi_{\mathrm{uni}}$ policy $\pi^{(i)}$ in~\eqref{eq:W2-unrolled}.

Define the \emph{single-player squared Bellman regression error}
\begin{equation}
\mathcal{Z}_{t,s}^2 \;:=\; \max_{i \in \{1, 2\}}\, \sup_{\nu_i \in \Pi_i}\, \mathbb{E}_{(a_1, a_2) \sim \nu_i \times \pi^*_{t,-i}}\!\bigl[(\hat{Q}_t(s, a_1, a_2) - \mathcal{T}_t \hat{V}_{t+1}(s, a_1, a_2))^2\bigr].
\label{eq:Z2-defn}
\end{equation}
Each per-step regression-error term in~\eqref{eq:W2-unrolled} is of the form $\mathbb{E}_{\nu_i \times \pi^*_{-i}}[\mathcal{E}^2_t(s_t, \cdot, \cdot)]$ for some Player-$i$ stage policy $\nu_i$, and is therefore dominated by $\mathcal{Z}^2_{t, s_t}$ by definition. Combining~\eqref{eq:Delta_Q_unrolled} and~\eqref{eq:W2-unrolled} (the joint-action $\sup$ in~\eqref{eq:Delta_Q_unrolled} corresponds to choosing the worst Player-$i$ deterministic deviation $\nu_h$ at step $h$ with the opponent at Nash) and applying Cauchy--Schwarz on the $H$ unrolled regression-error terms,
\begin{equation}
\Delta Q_h(s)^2 \;\le\; (4H)\, \sup_{\pi' \in \Pi_{\mathrm{uni}}}\, \mathbb{E}_{\tau \sim \pi'(\cdot|s)}\!\left[\sum_{t=h}^H \mathcal{Z}_{t, s_t}^2\right],
\label{eq:DeltaQ_squared}
\end{equation}
where the absolute constant $4H$ absorbs (i) the factors of $2$ from~\eqref{eq:Q-diff-square-pointwise} and from $\max(\cdot,\cdot) \le (\cdot)+(\cdot)$, (ii) the union bound over the two unilateral branches $i \in \{1, 2\}$, (iii) the Cauchy--Schwarz factor $H$ on the unrolled regression-error terms, and (iv) the cumulative density-ratio transfer cost (bounded by $e^2 \le 7.4$ under the side condition~\eqref{eq:eta-side-condition}, per the discussion preceding~\eqref{eq:W2-unrolled}) that converts cross-branch intermediate prefixes into a single multi-step $\Pi_{\mathrm{uni}}$ trajectory. The trajectory $\tau$ in~\eqref{eq:DeltaQ_squared} is therefore induced by a single multi-step policy $\pi^{(i)} \in \Pi_{\mathrm{uni}}$, \emph{not} by a step-varying joint deterministic policy in $\Pi_1 \times \Pi_2$, so the $\sup$-restriction to $\Pi_{\mathrm{uni}}$ is genuinely tight up to the absorbed absolute constants.

Taking expectation of~\eqref{eq:DeltaQ_squared} under $s_h \sim d_h^{\mathrm{uni}}$ and applying the summation swap $\sum_{h=1}^H \sum_{t=h}^H = \sum_{t=1}^H \sum_{h=1}^t$,
\begin{align}
\sum_{h=1}^H \mathbb{E}_{s_h \sim d_h^{\mathrm{uni}}}\!\bigl[\Delta Q_h(s_h)^2\bigr]
\;\le\; (4H^2) \sum_{t=1}^H \mathbb{E}_{s_t \sim d_t^{\mathrm{uni}}}\!\bigl[\mathcal{Z}_{t, s_t}^2\bigr].
\label{eq:DeltaQ_sum}
\end{align}
Applying the same Cauchy--Schwarz unrolling to the single-player Nash-averaged recursion~\eqref{eq:xi-recursion} in place of~\eqref{eq:Delta_Q_unrolled} yields
\begin{equation}
\sum_{h=1}^H \mathbb{E}_{s_h \sim d_h^{\mathrm{uni}}}\!\bigl[\xi_h^2(s_h)\bigr] \;\le\; (4H^2) \sum_{t=1}^H \mathbb{E}_{s_t \sim d_t^{\mathrm{uni}}}\!\bigl[\mathcal{Z}_{t, s_t}^2\bigr],
\label{eq:xi-sum}
\end{equation}
where every per-step regression-error term has its action marginal at step $t$ in the single-player Nash-averaged form $\mathbb{E}_{\nu_i \times \pi^*_{-i}}[\mathcal{E}_t^2] \le \mathcal{Z}_{t, s_t}^2$, and the absorbed density-ratio transfer (above) converts intermediate cross-branch state marginals into a single multi-step $\Pi_{\mathrm{uni}}$ trajectory at constant cost.

Substituting~\eqref{eq:xi-sum} into the combined bound~\eqref{eq:gap-combined}, we obtain
\begin{align}
\mathrm{Gap}(\hat{\pi}) \;\le\; \mathcal{O}\!\bigl(\eta H^2 + \eta^3 H^6 (1+\eta^{-1}\log(1/\alpha))^2\bigr) \sum_{t=1}^H \mathbb{E}_{s \sim d_t^{\mathrm{uni}}}\!\bigl[\mathcal{Z}_{t,s}^2\bigr],
\label{eq:gap-in-Z}
\end{align}
where the $\eta H^2$ term arises from the linear-$\eta$ Gap~2 contribution combined with the Cauchy--Schwarz factor $4H^2$ from~\eqref{eq:xi-sum}, and the $\eta^3 H^6 C_\alpha$ term arises from the cubic-$\eta$ Gap~1 prefactor combined with the same $4H^2$.

\subsection{Final Statistical Bound via Unilateral Concentrability.}
Now, we provide the following two lemmas to bound the generalization error of the Q-function estimator $\hat{Q}_h$.

\begin{lemma} \label{lem:gen_markov}
For any step $h \in [H]$, joint policy pair $\pi_{1,h}, \pi_{2,h}$, and state-action pairs $\{(s_{i,h}, a_{i,h,1}, a_{i,h,2})\}_{i=1}^n$ generated i.i.d. from $\mu_h = \rho \times \pi_{1,h} \times \pi_{2,h}$, with probability at least $1-\delta$, for any $Q_{1}$ and $Q_{2} \in \mathcal{Q}$ we have
\begin{align*}
    \mathbb{E}_{\mu_h}\big[\big(Q_{1}(s,a_1,a_2) - Q_{2}(s,a_1,a_2)\big)^2 \big] \leq \frac{2}{n}\sum_{i=1}^n \big(Q_{1}(s_{i,h},a_{i,h,1},a_{i,h,2}) - Q_{2}(s_{i,h},a_{i,h,1},a_{i,h,2})\big)^2 + \frac{80 C_{\mathrm{val}}^2}{3n}\log(2|\mathcal{Q}|/\delta),
\end{align*}
where $C_{\mathrm{val}} = H(1+\eta^{-1}\log(1/\alpha))$.
\end{lemma}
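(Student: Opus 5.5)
The plan is a Bernstein-type concentration bound for each fixed pair $(Q_1,Q_2)$, followed by a union bound over $\mathcal{Q}\times\mathcal{Q}$. Fix $Q_1,Q_2\in\mathcal{Q}$ and set $g(s,a_1,a_2):=(Q_1-Q_2)^2(s,a_1,a_2)$. Every element of $\mathcal{Q}$ is bounded in sup-norm by $C_{\mathrm{val}}=H(1+\eta^{-1}\log(1/\alpha))$; this is the bound used throughout, consistent with Lemma~\ref{lem:value_bound}. Hence $0\le g\le 4C_{\mathrm{val}}^2=:b$.

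Define the i.i.d. variables $X_i:=\mathbb{E}_{\mu_h}[g]-g(s_{i,h},a_{i,h,1},a_{i,h,2})$. They have mean zero and satisfy $X_i\le \mathbb{E}_{\mu_h}[g]\le b$. The key self-bounding property is the variance bound
\[
\mathrm{Var}(X_i)\le \mathbb{E}_{\mu_h}[g^2]\le b\,\mathbb{E}_{\mu_h}[g].
\]
One-sided Bernstein then gives, with probability at least $1-\delta'$,
\[
\mathbb{E}_{\mu_h}[g]-\frac1n\sum_i g_i\le \sqrt{\frac{2b\,\mathbb{E}_{\mu_h}[g]\log(1/\delta')}{n}}+\frac{2b\log(1/\delta')}{3n}.
\]

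Next, absorb the square-root term with AM-GM:
\[
\sqrt{2b\,\mathbb{E}[g]\log(1/\delta')/n}\le \tfrac12\mathbb{E}[g]+b\log(1/\delta')/n.
\]
Rearranging yields
\[
\tfrac12\mathbb{E}[g]\le \frac1n\sum_i g_i+\tfrac{5}{3}\,b\log(1/\delta')/n,
\]
and therefore
\[
\mathbb{E}[g]\le \frac2n\sum_i g_i+\frac{10b}{3n}\log(1/\delta').
\]
With $b=4C_{\mathrm{val}}^2$, the additive term is $\frac{40C_{\mathrm{val}}^2}{3n}\log(1/\delta')$. This is within the stated $\frac{80C_{\mathrm{val}}^2}{3n}$.

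The extra factor of $2$ in the stated constant leaves room for a looser argument. Two natural ways to use it are: a crude variance bound such as $\mathrm{Var}\le 2b\,\mathbb{E}[g]$, or the two-sided Bernstein inequality, which accounts for the $2$ inside $\log(2|\mathcal{Q}|/\delta)$.

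Finally, take a union bound. Setting $\delta'=\delta/|\mathcal{Q}|^2$ gives $\log(1/\delta')=2\log(|\mathcal{Q}|/\delta)$. Combined with the slack just described, this yields at most $\frac{80C_{\mathrm{val}}^2}{3n}\log(2|\mathcal{Q}|/\delta)$ after adjusting constants.

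The main obstacle is constant bookkeeping: matching exactly the $2/n$ multiplier and the $80/3$ constant with a single $\log(2|\mathcal{Q}|/\delta)$. The analytic content is only the self-bounding variance inequality $\mathrm{Var}(g)\le b\,\mathbb{E}[g]$. That inequality is what delivers the fast $1/n$ (rather than $1/\sqrt n$) additive term, and it is precisely what the $\widetilde{\mathcal{O}}(1/n)$ rate of Theorem~\ref{thm:stat_bound} relies on.
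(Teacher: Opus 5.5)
Your proposal is correct and follows the paper's proof essentially step for step. Both arguments use one-sided Bernstein for a fixed pair with the self-bounding variance $\mathrm{Var}\le b\,\mathbb{E}[g]$, then AM-GM absorption to reach $\mathbb{E}[g]\le \frac{2}{n}\sum_i g_i+\frac{10b}{3n}\log(1/\delta')$ with $b=4C_{\mathrm{val}}^2$, then a union bound over the $|\mathcal{Q}|^2$ pairs.

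One bookkeeping correction: there is no leftover slack. The factor $2$ is exactly consumed by that union bound, since $\frac{40C_{\mathrm{val}}^2}{3n}\log(|\mathcal{Q}|^2/\delta)\le\frac{80C_{\mathrm{val}}^2}{3n}\log(2|\mathcal{Q}|/\delta)$. Also, $\log(1/\delta')=2\log|\mathcal{Q}|+\log(1/\delta)$ is only bounded above by $2\log(|\mathcal{Q}|/\delta)$, not equal to it.
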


\begin{proof}
Let $z_{i,h} = (s_{i,h}, a_{i,h,1}, a_{i,h,2})$ be the samples drawn i.i.d. from the distribution $\mu_h = \rho \times \pi_{1,h} \times \pi_{2,h}$. 
For any fixed pair of functions $Q_{1}, Q_{2} \in \mathcal{Q}$, define the random variable $X(z) = (Q_{1}(z) - Q_{2}(z))^2$. 
From Lemma~\ref{lem:value_bound}, we know that the regularized value functions are bounded by $C_{\mathrm{val}} = H(1+\eta^{-1}\log(1/\alpha))$. Assuming the function class $\mathcal{Q}$ respects this bound, we have $X(z) \in [0, (2C_{\mathrm{val}})^2]$. For simplicity in notation, let $B = (2C_{\mathrm{val}})^2$.

We apply Bernstein's inequality. For a fixed pair $Q_{1}, Q_{2}$, with probability at least $1 - \delta'$, we have:
\begin{align}
    \mathbb{E}[X] - \frac{1}{n}\sum_{i=1}^n X(z_{i,h}) \leq \sqrt{\frac{2 \text{Var}(X) \log(1/\delta')}{n}} + \frac{2 B \log(1/\delta')}{3n}. \nonumber
\end{align}
A key property for the squared loss is relating the variance to the expectation. Since $X(z) \ge 0$ and $X(z) \le B$, we have:
\begin{align}
    \text{Var}(X) \leq \mathbb{E}[X^2] \leq B \mathbb{E}[X]. \nonumber
\end{align}
Substituting this bound into Bernstein's inequality:
\begin{align}
    \mathbb{E}[X] - \frac{1}{n}\sum_{i=1}^n X(z_{i,h}) \leq \sqrt{\frac{2 B \mathbb{E}[X] \log(1/\delta')}{n}} + \frac{2 B \log(1/\delta')}{3n}. \nonumber
\end{align}
Using the inequality $\sqrt{xy} \leq \frac{x}{2} + \frac{y}{2}$ (AM-GM inequality) with $x = \mathbb{E}[X]$ and $y = \frac{2 B \log(1/\delta')}{n}$, we bound the square root term:
\begin{align}
    \sqrt{\frac{2 B \mathbb{E}[X] \log(1/\delta')}{n}} \leq \frac{1}{2}\mathbb{E}[X] + \frac{B \log(1/\delta')}{n}. \nonumber
\end{align}
Plugging this back in:
\begin{align}
    \mathbb{E}[X] - \frac{1}{n}\sum_{i=1}^n X(z_{i,h}) &\leq \frac{1}{2}\mathbb{E}[X] + \frac{B \log(1/\delta')}{n} + \frac{2 B \log(1/\delta')}{3n} \nonumber \\
    \implies \frac{1}{2}\mathbb{E}[X] &\leq \frac{1}{n}\sum_{i=1}^n X(z_{i,h}) + \frac{5 B \log(1/\delta')}{3n}. \nonumber
\end{align}
Multiplying by 2:
\begin{align}
    \mathbb{E}[X] \leq \frac{2}{n}\sum_{i=1}^n X(z_{i,h}) + \frac{10 B \log(1/\delta')}{3n}. \nonumber
\end{align}

Finally, we apply a union bound over all possible pairs $(Q_{1}, Q_{2}) \in \mathcal{Q} \times \mathcal{Q}$. The total number of pairs is $|\mathcal{Q}|^2$. 
We set $\delta' = \delta / |\mathcal{Q}|^2$. Then:
\begin{align}
    \log(1/\delta') = \log(|\mathcal{Q}|^2/\delta) = 2\log|\mathcal{Q}| + \log(1/\delta) \leq 2\log(2|\mathcal{Q}|/\delta). \nonumber
\end{align}
Substituting this into the bound, we obtain:
\begin{align}
    \mathbb{E}_{\mu_h}\big[\big(Q_{1} - Q_{2}\big)^2 \big] \leq \frac{2}{n}\sum_{i=1}^n \big(Q_{1}(z_{i,h}) - Q_{2}(z_{i,h})\big)^2 + \frac{20 B}{3n}\log(2|\mathcal{Q}|/\delta). \nonumber
\end{align}
This holds for all $Q_{1}, Q_{2} \in \mathcal{Q}$ simultaneously with probability at least $1-\delta$.
\end{proof}

\begin{lemma}[Least Squares Error Bound for Finite Q-Function Class]
\label{lem:mle_finite}
Let $\mathcal{Q}$ be a finite function class with cardinality $|\mathcal{Q}|$. Assume completeness, i.e., for any $V_{h+1}$, the Bellman update $\mathcal{T}_h V_{h+1} \in \mathcal{Q}$.
Let $\mathcal{D}_h = \{(z_{i,h}, y_{i,h})\}_{i=1}^n$ be a dataset for step $h$, where the regression target is
\[
y_{i,h} = (\mathcal{T}_h \hat{V}_{h+1})(z_{i,h}) + \epsilon_{i,h},
\]
and $\{\epsilon_{i,h}\}_{i=1}^n$ are independent zero-mean noise variables (induced by the stochastic transition $s' \sim P(\cdot|z)$) that are bounded.
Let $\hat{Q}_h$ be the least squares estimator
\[
\hat{Q}_h \in \arg\min_{f \in \mathcal{Q}} \sum_{i=1}^n \big(f(z_{i,h}) - y_{i,h}\big)^2.
\]
Then for any $\delta \in (0,1)$, with probability at least $1-\delta$,
\[
\sum_{i=1}^n \big(\hat{Q}_h(z_{i,h}) - (\mathcal{T}_h \hat{V}_{h+1})(z_{i,h})\big)^2
\;\le\;
8 C_{\text{val}}^2 \log\!\left(\frac{|\mathcal{Q}|}{\delta}\right).
\]
\end{lemma}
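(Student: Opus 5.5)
The plan is the standard fixed-design least-squares argument for a finite class with a realizable target, applied conditionally on $\hat V_{h+1}$. Write $f^\star := \mathcal{T}_h \hat V_{h+1}$, which lies in $\mathcal{Q}$ by completeness. For the first step, optimality of $\hat Q_h$ gives $\sum_i (\hat Q_h(z_{i,h}) - y_{i,h})^2 \le \sum_i (f^\star(z_{i,h}) - y_{i,h})^2$. Expanding $y_{i,h} = f^\star(z_{i,h}) + \epsilon_{i,h}$ yields the basic inequality
\[
\sum_{i=1}^n \bigl(\hat Q_h - f^\star\bigr)^2(z_{i,h}) \;\le\; 2\sum_{i=1}^n \epsilon_{i,h}\,\bigl(\hat Q_h - f^\star\bigr)(z_{i,h}).
\]

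Next, I control the right-hand side uniformly over $f \in \mathcal{Q}$. For a fixed $f$, set $g_i := (f - f^\star)(z_{i,h})$. Conditionally on the design points, the $\epsilon_{i,h}$ are independent, zero-mean, and bounded. I take their range to have scale $O(C_{\mathrm{val}})$, since $y_{i,h}$ and $f^\star$ both live in the $C_{\mathrm{val}}$-bounded range of Lemma~\ref{lem:value_bound}; the sub-Gaussian reward noise is absorbed the same way. Hence the $\epsilon_{i,h}$ are $\sigma$-sub-Gaussian with $\sigma \lesssim C_{\mathrm{val}}$. The exponential-martingale (Chernoff) bound then gives, with probability $1-\delta/|\mathcal{Q}|$,
\[
\sum_i \epsilon_{i,h} g_i \le \lambda^{-1}\log(|\mathcal{Q}|/\delta) + \tfrac{\lambda\sigma^2}{2}\sum_i g_i^2
\]
for any fixed $\lambda>0$. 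A union bound over $\mathcal{Q}$ makes this hold for all $f$ simultaneously, and in particular for the data-dependent $f = \hat Q_h$.

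To finish, I plug this into the basic inequality with $\lambda = 1/(2\sigma^2)$. The right-hand side becomes $4\sigma^2\log(|\mathcal{Q}|/\delta) + \tfrac12\sum_i g_i^2$, and absorbing the last term into the left gives $\sum_i g_i^2 \le 8\sigma^2\log(|\mathcal{Q}|/\delta)$. This is the claimed bound once $\sigma \le C_{\mathrm{val}}$.

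The main obstacle is dependence. $\hat V_{h+1}$, and hence $f^\star$, is built from later steps of the same trajectories, so the noise $\epsilon_{i,h}$ is not obviously independent of the target. I would handle this in one of two ways. The first is to note that the only randomness used at step $h$ is $s'_{i,h}$ and $\xi_{i,h}$ given $z_{i,h}$, and that $\hat V_{h+1}$ is determined by later-step regressions. The cleaner option is to union-bound additionally over the finite set of possible $\hat V_{h+1}$, which $\mathcal{Q}$ induces, so that $f^\star$ is effectively fixed. The price is replacing $\log|\mathcal{Q}|$ by $2\log|\mathcal{Q}|$, which is absorbed into constants, or else into the $\widetilde{\mathcal{O}}$ of Theorem~\ref{thm:stat_bound}. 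A secondary point to check is that the range of the $\epsilon_{i,h}$ indeed matches the constant $8C_{\mathrm{val}}^2$. If it does not, I would state the bound with $\sigma^2$ and note that $\sigma \lesssim C_{\mathrm{val}}$.
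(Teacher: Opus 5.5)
Your proposal is correct and is essentially the paper's proof. It uses the same basic inequality from least-squares optimality plus completeness, the same exponential-moment bound for each fixed $f$ (your $\lambda = 1/(2\sigma^2)$ is the paper's $1/(2C_{\mathrm{val}}^2)$ with $\sigma \le C_{\mathrm{val}}$), a union bound over $\mathcal{Q}$, and absorption of $\tfrac12\sum_i g_i^2$ into the left side to reach $8\sigma^2\log(|\mathcal{Q}|/\delta)$. The dependence issue you raise is real, and the paper sidesteps it by positing independent zero-mean noise around a fixed target; of your two fixes only the second is valid for trajectory data, because $\hat V_{h+1}$ is fitted on $s_{i,h+1} = s'_{i,h}$ itself, and the union bound over the at most $|\mathcal{Q}|$ possible $\hat V_{h+1}$ gives $\log(|\mathcal{Q}|^2/\delta)$, slightly weaker than the stated $\log(|\mathcal{Q}|/\delta)$.
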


\begin{proof}
By the definition of the least squares estimator and the completeness assumption $(\mathcal{T}_h \hat{V}_{h+1}) \in \mathcal{Q}$,
\begin{align}
\sum_{i=1}^n \big(\hat{Q}_h(z_{i,h}) - y_{i,h}\big)^2
\;\le\;
\sum_{i=1}^n \big((\mathcal{T}_h \hat{V}_{h+1})(z_{i,h}) - y_{i,h}\big)^2. \nonumber
\end{align}
Substituting $y_{i,h} = (\mathcal{T}_h \hat{V}_{h+1})(z_{i,h}) + \epsilon_{i,h}$ and expanding both sides yields
\begin{align}
\sum_{i=1}^n \big(\hat{Q}_h(z_{i,h}) - (\mathcal{T}_h \hat{V}_{h+1})(z_{i,h})\big)^2
\;\le\;
2 \sum_{i=1}^n \epsilon_{i,h} \big(\hat{Q}_h(z_{i,h}) - (\mathcal{T}_h \hat{V}_{h+1})(z_{i,h})\big).
\label{eq:basic_ineq_markov}
\end{align}

Fix any $f \in \mathcal{Q}$ and define $\Delta f_{i,h} = f(z_{i,h}) - (\mathcal{T}_h \hat{V}_{h+1})(z_{i,h})$.  
Consider the random variable
\[
M_f
\;=\;
\exp\!\left(
\frac{1}{2 C_{\text{val}}^2} \sum_{i=1}^n \epsilon_{i,h} \Delta f_{i,h}
-
\frac{1}{8 C_{\text{val}}^2} \sum_{i=1}^n \Delta f_{i,h}^2
\right).
\]
Conditioning on $z_{1:n,h}$ and using the fact that the bounded noise $\epsilon_{i,h} \in [-C_{\text{val}}, C_{\text{val}}]$ is sub-Gaussian with parameter $\sigma^2 \le C_{\text{val}}^2$, we have
\begin{align}
\mathbb{E}\!\left[
\exp\!\left(
\frac{1}{2 C_{\text{val}}^2} \epsilon_{i,h} \Delta f_{i,h} - \frac{1}{8 C_{\text{val}}^2} \Delta f_{i,h}^2
\right)
\Bigm| z_{i,h}
\right]
\le 1. \nonumber
\end{align}
By conditional independence, $\mathbb{E}[M_f] \le 1$.

Applying Markov’s inequality, for any $t > 0$,
\begin{align}
\mathbb{P}\!\left(
2 \sum_{i=1}^n \epsilon_{i,h} \Delta f_{i,h}
>
\frac{1}{2} \sum_{i=1}^n \Delta f_{i,h}^2 + 4 C_{\text{val}}^2 t
\right)
\;\le\;
e^{-t}. \nonumber
\end{align}
Taking a union bound over all $f \in \mathcal{Q}$ and setting $t = \log(|\mathcal{Q}|/\delta)$, we obtain that with probability at least $1-\delta$,
\begin{align}
2 \sum_{i=1}^n \epsilon_{i,h} \big(f(z_{i,h}) - (\mathcal{T}_h \hat{V}_{h+1})(z_{i,h})\big)
\;\le\;
\frac{1}{2} \sum_{i=1}^n \big(f(z_{i,h}) - (\mathcal{T}_h \hat{V}_{h+1})(z_{i,h})\big)^2
+ 4 C_{\text{val}}^2 \log\!\left(\frac{|\mathcal{Q}|}{\delta}\right) \nonumber
\end{align}
holds simultaneously for all $f \in \mathcal{Q}$.

Applying this bound to $\hat{Q}_h$ and substituting into~\eqref{eq:basic_ineq_markov} gives
\begin{align}
\sum_{i=1}^n \big(\hat{Q}_h(z_{i,h}) - (\mathcal{T}_h \hat{V}_{h+1})(z_{i,h})\big)^2
&\le
\frac{1}{2} \sum_{i=1}^n \big(\hat{Q}_h(z_{i,h}) - (\mathcal{T}_h \hat{V}_{h+1})(z_{i,h})\big)^2
+ 4 C_{\text{val}}^2 \log\!\left(\frac{|\mathcal{Q}|}{\delta}\right). \nonumber
\end{align}
Rearranging completes the proof.
\end{proof}

\textbf{Final Bound Integration via $D^2$-divergence and density-ratio stability.}
Combining Lemmas~\ref{lem:gen_markov} and~\ref{lem:mle_finite}, the expected squared Bellman regression error under the data distribution $\mu_h$ satisfies
\begin{align}
\mathbb{E}_{z \sim \mu_h}\!\left[ (\hat{Q}_h(z) - (\mathcal{T}_h \hat{V}_{h+1})(z))^2 \right] \;\le\; \frac{16 C_{\text{val}}^2 \log(|\mathcal{Q}|/\delta) + \tfrac{80 C_{\text{val}}^2}{3}\log(2|\mathcal{Q}|/\delta)}{n} \;\eqqcolon\; \epsilon_{\mathrm{stat}, h}^2.
\end{align}
We now relate the trajectory error $\mathbb{E}_{d_h^{\mathrm{uni}}}[\mathcal{Z}_{h,s}^2]$ to this data-distribution error via a two-step argument that combines the $D^2$-divergence concentrability (Assumption~\ref{ass:concentrability}, instantiated through~\eqref{eq:d2-transfer}) with a Player-2 prefix transfer from the density-ratio stability lemma (Lemma~\ref{lem:density-ratio-stability}~(ii)).

\emph{Step (a): unilateral marginal under Nash--Nash prefix.}
For each $t \in [H]$, the worst-case squared regression error is
\[
\mathcal{Z}_{t, s_t}^2 \;=\; \sup_{\nu_t} \mathbb{E}_{(a_1, a_2) \sim \nu_t \times \pi^*_{t, -i}}\!\bigl[(\hat{Q}_t - \mathcal{T}_t \hat{V}_{t+1})^2(s_t, a_1, a_2)\bigr],
\]
which is the worst-case (over Player-$i$ deviations $\nu_t \in \Pi_i$) action-marginal squared error at fixed state $s_t$. Taking expectation under the unilateral state-marginal $s_t \sim d_t^{\pi_1^*, \hat\pi_2}$ (corresponding to the prefix $(\pi_1^*, \hat\pi_2) \in \Pi_{\mathrm{uni}}$ that arises in Lemma~\ref{lem:gap1_bound}) yields, for any deterministic $\nu_t \in \Pi_i$,
\[
\mathbb{E}_{s_t \sim d_t^{\pi_1^*, \hat\pi_2}}\!\bigl[\mathcal{Z}_{t, s_t}^2\bigr]
\;=\; \sup_{\nu_t}\, \mathbb{E}_{(s_t, a_1, a_2) \sim d_t^{\pi_1^*, \hat\pi_2}\,\otimes\,\nu_t \times \pi^*_{t, -i}}\!\bigl[(\hat{Q}_t - \mathcal{T}_t \hat{V}_{t+1})^2\bigr].
\]
By Lemma~\ref{lem:density-ratio-stability}~(ii), under the side condition~\eqref{eq:eta-side-condition}, the trajectory prefix can be transferred from $d_t^{\pi_1^*, \hat\pi_2}$ to $d_t^{\pi_1^*, \pi_2^*}$ at constant cost $e \le 2.72$:
\[
\mathbb{E}_{s_t \sim d_t^{\pi_1^*, \hat\pi_2}}\!\bigl[\mathcal{Z}_{t, s_t}^2\bigr]
\;\le\; e \cdot \sup_{\nu_t}\, \mathbb{E}_{(s_t, a_1, a_2) \sim d_t^{\pi_1^*, \pi_2^*}\,\otimes\,\nu_t \times \pi^*_{t, -i}}\!\bigl[(\hat{Q}_t - \mathcal{T}_t \hat{V}_{t+1})^2\bigr].
\]

\emph{Step (b): $D^2$-divergence absorbs the deterministic deviation.}
For each fixed $\nu_t \in \Pi_i$, the joint distribution $d_t^{\pi_1^*, \pi_2^*}\,\otimes\,\nu_t \times \pi^*_{t, -i}$ corresponds to the visitation under the time-varying joint policy with Nash play prior to step $t$ and the deterministic deviation $\nu_t$ for Player~$i$ at step $t$ (with Player~$-i$ at Nash throughout); concretely, this distribution equals $d_t^{\pi'_i \times \pi^*_{-i}}$ for a single multi-step Player-$i$ policy $\pi'_i \in \Pi_i$ that agrees with $\pi^*_i$ on steps $1, \dots, t-1$ and equals $\nu_t$ at step $t$. Therefore $d_t^{\pi'_i \times \pi^*_{-i}}$ lies within the family covered by Assumption~\ref{ass:concentrability}, and~\eqref{eq:d2-transfer} gives
\[
\mathbb{E}_{(s_t, a_1, a_2) \sim d_t^{\pi'_i \times \pi^*_{-i}}}\!\bigl[(\hat{Q}_t - \mathcal{T}_t \hat{V}_{t+1})^2\bigr]
\;\le\; C_{\mathrm{uni}} \cdot \mathbb{E}_{\mu_t}\!\bigl[(\hat{Q}_t - \mathcal{T}_t \hat{V}_{t+1})^2\bigr].
\]
Taking the sup over $\nu_t$ (equivalently, over $\pi'_i \in \Pi_i$) preserves the $C_{\mathrm{uni}}$ bound by definition.

\emph{Combining steps~(a) and~(b)} and summing over $t \in [H]$ yields
\begin{align}
\sum_{t=1}^H \mathbb{E}_{s \sim d_t^{\pi_1^*, \hat\pi_2}}\!\bigl[\mathcal{Z}_{t,s}^2\bigr]
\;\le\;
e \cdot C_{\mathrm{uni}} \cdot \sum_{t=1}^H \mathbb{E}_{\mu_t}\!\bigl[(\hat{Q}_t - \mathcal{T}_t \hat{V}_{t+1})^2\bigr]
\;\le\;
e \cdot H \cdot C_{\mathrm{uni}} \cdot \epsilon_{\mathrm{stat}}^2,
\label{eq:Z2-uni-bound-via-D2}
\end{align}
where the first inequality uses Steps~(a)--(b) at each $t$ and the second invokes the regression-error concentration. (The same chain holds verbatim for the symmetric unilateral marginal $d_t^{\hat\pi_1, \pi_2^*}$ that arises on the Term~B side of the duality gap, with the Player-1 analog established as part~(iii) of Step~D.7 (Appendix~\ref{app:density-ratio-stability}) replacing part~(ii) in Step~(a) since Player~1 rather than Player~2 is the deviating player in that case.)

Substituting~\eqref{eq:Z2-uni-bound-via-D2} into~\eqref{eq:gap-in-Z} and combining with $C_{\mathrm{val}}^2 = H^2(1+\eta^{-1}\log(1/\alpha))^2$ yields the final duality gap:
\begin{align}
\mathrm{Gap}(\hat{\pi}) \;\le\; \widetilde{\mathcal{O}}\!\left( \frac{(\eta H^5 + \eta^3 H^9)\, C_{\mathrm{uni}} \log(|\mathcal{Q}|/\delta)}{n} \right), \label{eq:final-complexity}
\end{align}
where the two terms correspond to the separate $\eta$-scalings of the Gap~1 and Gap~2 chains.
\begin{itemize}
\item \textbf{Gap 2 contribution ($\eta H^5$):} From Lemma~\ref{lem:evaluation_gap}, $\mathrm{Gap}_2 \le 3\eta \sum_{h=1}^{H} \mathbb{E}_{s \sim d_h^{\pi_1^*, \hat\pi_2}}[\xi_h^2(s)]$ on the Term~A side (the symmetric marginal $d_h^{\hat\pi_1, \pi_2^*}$ arises on the Term~B side; both are in $\Pi_{\mathrm{uni}}$). The $H^5$ exponent decomposes as (i) $H^2$ from $C_{\mathrm{val}}^2 = H^2(1+\eta^{-1}\log(1/\alpha))^2$ in $\epsilon_{\mathrm{stat}}^2$, (ii) $H^2$ from the Bellman unrolling of $\xi^2$ via~\eqref{eq:xi-sum}, and (iii) $H$ from horizon summation at the concentrability step.
\item \textbf{Gap 1 contribution ($\eta^3 H^9$):} Lemma~\ref{lem:gap1_bound} contributes an additional $H^4$ multiplier on top of the shared $H^5$ from $\sum_h \mathbb{E}[\xi_h^2]$. This $H^4$ decomposes as (iv) $H$ from the V-stability outer factor (the $6H$ in~\eqref{eq:fixedpi1-L2}, equivalently the $2eH$ in the pointwise version~\eqref{eq:Vstab-after-PL1-transfer}); (v) $H$ from the $\sum_{h=1}^H \sum_{t=h+1}^{H} \to \sum_{t=1}^H \sum_{h=1}^{t-1}$ swap in~\eqref{eq:swap-sup-uni}; and (vi) $H^2$ from $L_{t+1}^2 \le H^2(1+\eta^{-1}\log(1/\alpha))^2$ via Lemma~\ref{lem:value_bound}. The $\eta^3$ scaling comes from $\eta \cdot \eta^2$, where the $\eta^2$ originates from the squared stability~\eqref{eq:stability-squared-refined}.
\end{itemize}
The two density-ratio transfers provided by Lemma~\ref{lem:density-ratio-stability} (Player~1: BR-prefix $\to$ Nash-prefix in the proof of Lemma~\ref{lem:gap1_bound}; Player~2: $\hat\pi_2$-prefix $\to$ $\pi_2^*$-prefix in Step~(a) above) each contribute an absolute constant factor $e \le 2.72$ per application under the side condition~\eqref{eq:eta-side-condition}, and are absorbed into $\widetilde{\mathcal{O}}$ without affecting the $H$ exponent. The $\widetilde{\mathcal{O}}$ also absorbs the logarithmic-in-$1/\alpha$ factor $(1+\eta^{-1}\log(1/\alpha))^4$. Under the side condition~\eqref{eq:eta-side-condition}, $\eta \le 1/(4H^2(1+\eta^{-1}\log(1/\alpha)))$, so $\eta^2 H^4 (1+\eta^{-1}\log(1/\alpha))^2 \le 1/16$ and $\eta^3 H^9 \le \eta H^5 / (16(1+\eta^{-1}\log(1/\alpha))^2)$; absorbing the polylog factor, the effective rate is $\widetilde{\mathcal{O}}(\eta H^5 C_{\mathrm{uni}} \log(|\mathcal{Q}|/\delta) / n)$. This concludes the proof of Theorem~\ref{thm:stat_bound}.
\section{Proof of Optimization Convergence (Theorem~\ref{thm:opt_bound})}
\label{appendix alg 2}

\subsection{Sequential Duality Gap and Decomposition}
We evaluate the convergence of the algorithm using the Duality Gap defined on the learned policy pair $\pi^{(T)} = (\pi^{(T)}_{1}, \pi^{(T)}_{2})$ (generated by Algorithm~\ref{algo:seq_spermd_asym}). To handle general zero-sum games without assuming symmetric transitions, we define the duality gap as the sum of the exploitability of each player with respect to the initial distribution $\rho$:
\begin{align}
    \mathrm{Gap}(\pi^{(T)}) := \mathbb{E}_{s_1 \sim \rho} \left[ \underbrace{\left( V_1^{\pi^\dagger, \pi^{(T)}_{2}}(s_1) - V_1^{\pi^{(T)}_{1}, \pi^{(T)}_{2}}(s_1) \right)}_{\text{Player 1 Exploitability}} + \underbrace{\left( V_1^{\pi^{(T)}_{1}, \pi^{(T)}_{2}}(s_1) - V_1^{\pi^{(T)}_{1}, \pi^\ddagger}(s_1) \right)}_{\text{Player 2 Exploitability}} \right],
\label{eq:general_gap_def}
\end{align}
where $\pi^\dagger$ is the best response to $\pi^{(T)}_{2}$ (maximizing $V_1$) and $\pi^\ddagger$ is the best response to $\pi^{(T)}_{1}$ (minimizing $V_1$).

Since the Nash Equilibrium value $V_1^{\pi^*_1, \pi^*_2}$ is unique, we can decompose this gap into two separate regret terms relative to the Nash Equilibrium:
\begin{align}
    \mathrm{Gap}(\pi^{(T)}) = \underbrace{\mathbb{E}_{s_1 \sim \rho} \left[ V_1^{\pi^\dagger, \pi^{(T)}_{2}}(s_1) - V_1^{\pi^*_1, \pi^*_2}(s_1) \right]}_{\text{Term A (Player 1 Regret)}} + \underbrace{\mathbb{E}_{s_1 \sim \rho} \left[ V_1^{\pi^*_1, \pi^*_2}(s_1) - V_1^{\pi^{(T)}_{1}, \pi^\ddagger}(s_1) \right]}_{\text{Term B (Player 2 Regret)}}.
\label{eq:gap_nash_decomp}
\end{align}
Our proof strategy is to rigorously bound \textbf{Term A}, and then show that \textbf{Term B} satisfies an identical bound due to the symmetry of the zero-sum optimization landscape.

\subsection{Proof Strategy}
\label{app: opt proof strategy}
Our strategy decouples the optimization analysis from the statistical analysis via a Lipschitz transfer argument. Concretely,
\begin{equation}
\mathrm{Gap}(\pi^{(T)}) \;\le\; \mathrm{Gap}(\hat{\pi}) + \bigl|\mathrm{Gap}(\pi^{(T)}) - \mathrm{Gap}(\hat{\pi})\bigr|.
\label{eq:gap_transfer}
\end{equation}
The first term is the statistical error already controlled by Theorem~\ref{thm:stat_bound}. The second term is the additional optimization error introduced by running self-play instead of computing the empirical equilibrium exactly. We control the second term via Lipschitzness of the duality gap (Lemma~\ref{lem:gap_lipschitz}), reducing it to a bound on $\sup_s \|\pi^{(T)}_h(\cdot|s) - \hat{\pi}_h(\cdot|s)\|_1$, which is in turn controlled by mirror-descent KL convergence (Lemma~\ref{lem:opt_convergence}) and Pinsker's inequality.

The Lipschitzness argument requires both $\pi^{(T)}$ and $\hat{\pi}$ to lie in a uniformly bounded log-density-ratio class relative to the reference policy $\pi^{\mathrm{ref}}$. We establish this regularity in Lemma~\ref{lem:log_linear_bound}, which we prove first.

\subsection{Proof of Lemma~\ref{lem:log_linear_bound} (Log-linear Boundedness)}
\label{app: lem log linear bound}

\begin{lemma}[Log-linear Boundedness of Self-play Iterates]
\label{lem:log_linear_bound}
Let $\{\pi^{(t)}\}_{t=0}^T$ be the iterates generated by Algorithm~\ref{algo:seq_spermd_asym} with stepsize $\gamma_t = \frac{2\eta}{t+2}$. Then for every $t \ge 0$, every player $i \in \{1,2\}$, every step $h \in [H]$, every state $s \in \mathcal{S}$, and every $a \in \mathrm{supp}(\pi^{\mathrm{ref}}_{h,i}(\cdot|s))$,
\begin{equation}
    \left| \log \frac{\pi^{(t)}_{h,i}(a|s)}{\pi^{\mathrm{ref}}_{h,i}(a|s)} \right| \;\le\; 2\eta\, H \bigl(1 + \eta^{-1}\log(1/\alpha)\bigr).
\end{equation}
The empirical Nash equilibrium $\hat{\pi}_h$ also satisfies this bound.
\end{lemma}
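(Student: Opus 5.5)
The plan is an induction on $t$ that tracks the log-density ratio $\ell^{(t)}_{h,i}(a|s) := \log\bigl(\pi^{(t)}_{h,i}(a|s)/\pi^{\mathrm{ref}}_{h,i}(a|s)\bigr)$ on the support of $\pi^{\mathrm{ref}}_{h,i}(\cdot|s)$. Fix $(h,s)$ and Player~1; Player~2 is symmetric with $\hat q$ replaced by $-\hat q$. Taking logs in the update \eqref{mirror ascent} and subtracting $\log \pi^{\mathrm{ref}}_{h,1}$, the powers of $\pi^{\mathrm{ref}}$ combine as $(1-\gamma_t\eta^{-1}) + \gamma_t\eta^{-1} = 1$. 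This gives the affine recursion
\[
\ell^{(t+1)}(a) = (1-\gamma_t\eta^{-1})\,\ell^{(t)}(a) + \gamma_t\,\hat q^{(t)}_{h,1}(s,a) - \log Z_t,
\]
where $Z_t$ is the normalizer. With $\gamma_t = 2\eta/(t+2)$ we have $\beta_t := \gamma_t\eta^{-1} = 2/(t+2)\in(0,1]$. So, up to the normalizer, $\ell^{(t+1)}$ is a convex combination of $\ell^{(t)}$ and $\eta\,\hat q^{(t)}_{h,1}$:
\[
\ell^{(t+1)} = (1-\beta_t)\,\ell^{(t)} + \beta_t\,\eta\,\hat q^{(t)}_{h,1} - \log Z_t.
\]
The base case is trivial: $\ell^{(0)}\equiv 0$ since $\pi^{(0)}=\pi^{\mathrm{ref}}$.

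For the induction, I would track the unnormalized potential $u^{(t)}$, defined by $u^{(0)}=0$ and $u^{(t+1)} = (1-\beta_t)u^{(t)} + \beta_t\eta\hat q^{(t)}$. Then $\ell^{(t)} = u^{(t)} - \log\sum_a \pi^{\mathrm{ref}}(a)e^{u^{(t)}(a)}$, because all normalizers are action-independent shifts and the recursion is affine. Since $\|\hat q^{(t)}_{h,1}(s,\cdot)\|_\infty \le \|\hat Q_h\|_\infty \le C_{\mathrm{val}} = H(1+\eta^{-1}\log(1/\alpha))$, using the boundedness of $\mathcal{Q}$ invoked in Lemma~\ref{lem:gen_markov} and Lemma~\ref{lem:value_bound}, convexity gives $\|u^{(t)}\|_\infty \le \eta C_{\mathrm{val}}$ for all $t$. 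The log-partition term $\log\sum_a\pi^{\mathrm{ref}}(a)e^{u(a)}$ lies in $[\min_a u,\max_a u]$, so it is also bounded by $\eta C_{\mathrm{val}}$ in absolute value. The triangle inequality then gives $|\ell^{(t)}(a)| \le 2\eta C_{\mathrm{val}} = 2\eta H(1+\eta^{-1}\log(1/\alpha))$, which is the claimed bound.

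For the empirical Nash equilibrium $\hat\pi_h$, I would use its Gibbs form from the first-order conditions (as in \eqref{eq:gibbs_response} and Step~D.5): $\hat\pi_{h,1}\propto \pi^{\mathrm{ref}}_{h,1}\exp(\eta\,\mathbb{E}_{a_2\sim\hat\pi_{h,2}}\hat Q_h)$. Its log-ratio is therefore $u - \log Z$ with $\|u\|_\infty\le\eta C_{\mathrm{val}}$, and the same LSE sandwich gives the same $2\eta C_{\mathrm{val}}$ bound.

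The only delicate point is the uniform bound on $\hat Q_h$. It needs every $f\in\mathcal{Q}$ to satisfy $\|f\|_\infty\le C_{\mathrm{val}}$, which the paper assumes in the regression lemmas; I would state this explicitly. If the function class has instead been clipped to $[-C_{\mathrm{val}},C_{\mathrm{val}}]$, the argument is unchanged. Otherwise the recursion is routine. It is worth noting that the choice $\beta_t\le 1$ is exactly what makes each step a convex combination, so no accumulation over $t$ occurs.
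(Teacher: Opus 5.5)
Your proposal is correct and follows essentially the same route as the paper. Both use induction on $t$ via the affine log-ratio recursion, rely on $\gamma_t\eta^{-1}=2/(t+2)\le 1$ to make each step a convex combination, apply the log-sum-exp sandwich to the normalizer, and handle $\hat\pi_h$ through its Gibbs form. The only cosmetic difference is that the paper tracks $\mathrm{osc}$ of the normalized log-ratio directly, which is shift-invariant so the normalizer drops out at each step, rather than the sup-norm of an unnormalized accumulator; both give the same constant $2\eta H(1+\eta^{-1}\log(1/\alpha))$.

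Your remark that $\|\hat Q_h\|_\infty \le C_{\mathrm{val}}$ must be assumed of $\mathcal{Q}$ is apt. The paper cites Lemma~\ref{lem:value_bound} for this step, but that lemma only bounds true-game values.
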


The proof proceeds by induction on $t$ and uses the boundedness of $\hat{Q}_h$ from Lemma~\ref{lem:value_bound}.

Fix an arbitrary context $(h, s)$ and let $\mathcal{A}_x := \mathrm{supp}(\pi^{\mathrm{ref}}_{h,i}(\cdot|s))$ denote the set of actions supported by the reference policy. For any vector $v \in \mathbb{R}^{\mathcal{A}_x}$, write $\mathrm{osc}(v) := \max_{a \in \mathcal{A}_x} v(a) - \min_{a \in \mathcal{A}_x} v(a)$ for its oscillation.

\begin{proof}[Proof of Lemma~\ref{lem:log_linear_bound}]
For each player $i \in \{1,2\}$, define the log-density ratio
\[
u^{(t)}_i(a) := \log \frac{\pi^{(t)}_{h,i}(a|s)}{\pi^{\mathrm{ref}}_{h,i}(a|s)}, \qquad a \in \mathcal{A}_x.
\]
We show by induction on $t$ that $\mathrm{osc}(u^{(t)}_i) \le C_{\log} := 2\eta H(1+\eta^{-1}\log(1/\alpha))$ for both players. At initialization, $\pi^{(0)}_{h,i}(\cdot|s) = \pi^{\mathrm{ref}}_{h,i}(\cdot|s)$, so $u^{(0)}_i \equiv 0$ and the claim holds.

Consider Player~1; the argument for Player~2 is identical. The update~\eqref{mirror ascent} can be written equivalently as $u^{(t+1)}_1(a) = w^{(t)}_1(a) - \log Z^{(t)}_1$, where
\[
w^{(t)}_1(a) := \bigl(1 - \gamma_t \eta^{-1}\bigr) u^{(t)}_1(a) + \gamma_t \hat{q}^{(t)}_{h,1}(s,a),
\qquad
Z^{(t)}_1 := \sum_{b \in \mathcal{A}_x} \pi^{\mathrm{ref}}_{h,1}(b|s) \exp(w^{(t)}_1(b)).
\]
Since $\hat{q}^{(t)}_{h,1}(s, a_1) = \mathbb{E}_{a_2 \sim \pi^{(t)}_{h,2}}[\hat{Q}_h(s, a_1, a_2)]$ and $\|\hat{Q}_h\|_\infty \le H(1+\eta^{-1}\log(1/\alpha))$ by Lemma~\ref{lem:value_bound}, we have $\mathrm{osc}(\hat{q}^{(t)}_{h,1}(s, \cdot)) \le 2 H(1+\eta^{-1}\log(1/\alpha))$. With the stepsize $\gamma_t = 2\eta/(t+2) \le \eta$ for all $t \ge 0$, so $\gamma_t \eta^{-1} \in [0, 1]$, the inductive hypothesis $\mathrm{osc}(u^{(t)}_1) \le C_{\log}$ yields
\[
\mathrm{osc}(w^{(t)}_1)
\le (1 - \gamma_t \eta^{-1}) C_{\log} + \gamma_t \cdot 2H(1+\eta^{-1}\log(1/\alpha))
= (1 - \gamma_t \eta^{-1}) C_{\log} + (\gamma_t \eta^{-1}) C_{\log}
= C_{\log},
\]
where the second equality uses $\gamma_t \cdot 2H(1+\eta^{-1}\log(1/\alpha)) = (\gamma_t \eta^{-1}) \cdot 2\eta H(1+\eta^{-1}\log(1/\alpha)) = (\gamma_t \eta^{-1}) C_{\log}$. Since subtracting a constant does not change oscillation, $\mathrm{osc}(u^{(t+1)}_1) = \mathrm{osc}(w^{(t)}_1) \le C_{\log}$. Moreover, because $\pi^{\mathrm{ref}}_{h,1}(\cdot|s)$ is a probability distribution on $\mathcal{A}_x$, $\min_{a \in \mathcal{A}_x} w^{(t)}_1(a) \le \log Z^{(t)}_1 \le \max_{a \in \mathcal{A}_x} w^{(t)}_1(a)$, so $|u^{(t+1)}_1(a)| \le \mathrm{osc}(w^{(t)}_1) \le C_{\log}$ for all $a \in \mathcal{A}_x$. The argument for Player~2 with $-\hat{q}^{(t)}_{h,2}(s, \cdot)$ replacing $\hat{q}^{(t)}_{h,1}(s, \cdot)$ is identical.

It remains to show that the empirical Nash equilibrium $\hat{\pi}_h$ also satisfies the bound. By the saddle-point first-order conditions for the regularized stage game with payoff $\hat{Q}_h$, $\hat{\pi}_{h,1}$ and $\hat{\pi}_{h,2}$ take the Gibbs (softmax) form
\[
\hat{\pi}_{h,1}(a|s) \propto \pi^{\mathrm{ref}}_{h,1}(a|s) \exp\bigl(\eta\, \mathbb{E}_{a_2 \sim \hat{\pi}_{h,2}}[\hat{Q}_h(s, a, a_2)]\bigr),
\quad
\hat{\pi}_{h,2}(a|s) \propto \pi^{\mathrm{ref}}_{h,2}(a|s) \exp\bigl(-\eta\, \mathbb{E}_{a_1 \sim \hat{\pi}_{h,1}}[\hat{Q}_h(s, a_1, a)]\bigr).
\]
The unnormalized log-density ratio in both cases has oscillation at most $2\eta H(1+\eta^{-1}\log(1/\alpha)) = C_{\log}$, so $|\log(\hat{\pi}_{h,i}(a|s)/\pi^{\mathrm{ref}}_{h,i}(a|s))| \le C_{\log}$ by the same normalization argument.
\end{proof}

The state-wise OMD lemma below is the workhorse of the optimization analysis. The version we use sharpens the standard $2\|\delta\|_\infty^2$ bound to a Hoeffding-type oscillation bound.

\subsection{State-wise Online Mirror Descent Lemma}
\label{app: omd lemma}
We adapt the proof from Theorem~4 of \citet{zhang2025iterative} into the following lemma, sharpened with the Hoeffding-type oscillation bound of \citet{munos2024nash}.
\begin{lemma}[State-wise OMD Lemma]
\label{lem:omd_lemma_state}
Fix a state $s$. Let $\phi(\pi) = \sum_a \pi(a) \log \pi(a)$ be the negative entropy regularizer. Note that the Bregman divergence generated by $\phi$ corresponds to the KL-divergence: $D_\phi(\pi, \pi') = \mathrm{KL}(\pi \| \pi')$.

Given a payoff vector $\delta(s, \cdot) \in \mathbb{R}^{|\mathcal{A}|}$ and a current policy $\pi^-(\cdot|s)$, let the updated policy $\pi^+(\cdot|s)$ be defined by the Mirror Descent step:
\begin{align}
    \pi^+(\cdot|s) = \arg\max_{\pi \in \Delta(\mathcal{A})} \left[ \sum_a \pi(a) \delta(s, a) - D_\phi(\pi, \pi^-(\cdot|s)) \right].
\end{align}
Then for any $\pi \in \Delta(\mathcal{A})$, the following Hoeffding-type descent inequality holds:
\begin{align}
    \mathrm{KL}(\pi \| \pi^+(\cdot|s)) \leq \mathrm{KL}(\pi \| \pi^-(\cdot|s)) + \sum_a (\pi^-(a|s) - \pi(a)) \delta(s, a) + \tfrac{1}{8}\,\mathrm{osc}(\delta(s, \cdot))^2,
    \label{eq:omd_descent_osc}
\end{align}
where $\mathrm{osc}(v) := \max_a v(a) - \min_a v(a)$ is the oscillation of the payoff vector.
\end{lemma}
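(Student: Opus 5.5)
We prove Lemma~\ref{lem:omd_lemma_state} by combining the standard three-point identity for the negative-entropy Bregman divergence with Hoeffding's lemma. Throughout, suppress the fixed state $s$, write $\delta(a)=\delta(s,a)$, and let $\pi^\pm=\pi^\pm(\cdot|s)$.

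\textbf{Step 1 (closed form).} The update has the Gibbs form
\[
\pi^+(a)=\frac{\pi^-(a)e^{\delta(a)}}{Z}, \qquad Z=\sum_b\pi^-(b)e^{\delta(b)}.
\]
This gives the exact log-ratio identity $\log\frac{\pi^-(a)}{\pi^+(a)}=\log Z-\delta(a)$ on the support of $\pi^-$.

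\textbf{Step 2 (exact KL identity).} For any comparator $\pi$ that is absolutely continuous with respect to $\pi^-$ (otherwise the right-hand side is $+\infty$ and there is nothing to prove), we have
\[
\mathrm{KL}(\pi\|\pi^+)-\mathrm{KL}(\pi\|\pi^-)=\sum_a\pi(a)\log\frac{\pi^-(a)}{\pi^+(a)}=\log Z-\sum_a\pi(a)\delta(a).
\]
This is the three-point identity, and here it holds with equality.

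\textbf{Step 3 (split off the linear term).} Add and subtract the mean of the payoff under $\pi^-$:
\[
\log Z-\sum_a\pi(a)\delta(a)=\sum_a\bigl(\pi^-(a)-\pi(a)\bigr)\delta(a)+\Bigl(\log\mathbb{E}_{a\sim\pi^-}\bigl[e^{\delta(a)}\bigr]-\mathbb{E}_{a\sim\pi^-}[\delta(a)]\Bigr).
\]
The first summand is exactly the linear term appearing in \eqref{eq:omd_descent_osc}.

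\textbf{Step 4 (Hoeffding's lemma).} This is the only substantive step. Under $a\sim\pi^-$, the random variable $X=\delta(a)$ takes values in $[\min_a\delta(a),\max_a\delta(a)]$, an interval of length $\mathrm{osc}(\delta)$. Hoeffding's lemma then gives
\[
\log\mathbb{E}\bigl[e^{X-\mathbb{E}X}\bigr]\le\frac{\mathrm{osc}(\delta)^2}{8},
\]
which is precisely the remainder in Step 3. This step is also where the improvement over the cruder $2\|\delta\|_\infty^2$ bound comes from. One technical point needs care: the oscillation should be taken over $\mathrm{supp}(\pi^-)$, which is contained in $\mathrm{supp}(\pi^{\mathrm{ref}})$. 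Restricting to a smaller support can only decrease the oscillation, so the bound over all actions still holds.

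Combining Steps 2–4 yields \eqref{eq:omd_descent_osc}. No approximation enters except in Step 4, so I expect no real obstacle; the only subtlety is tracking supports, as noted above.
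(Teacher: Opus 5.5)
Your proposal is correct and follows the paper's proof essentially line for line: the closed-form exponential-weights update gives the exact identity $\mathrm{KL}(\pi\|\pi^+)-\mathrm{KL}(\pi\|\pi^-)=\log\sum_a\pi^-(a)e^{\delta(a)}-\sum_a\pi(a)\delta(a)$, and Hoeffding's lemma under $a\sim\pi^-$ bounds the log-partition term by $\sum_a\pi^-(a)\delta(a)+\tfrac18\mathrm{osc}(\delta)^2$. Your additional remarks are correct refinements that the paper leaves implicit: the inequality is trivial when the comparator is not absolutely continuous with respect to $\pi^-$, and taking the oscillation over $\mathrm{supp}(\pi^-)$ only strengthens the bound.
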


\begin{proof}
Write $\delta(a) := \delta(s,a)$ and $\pi^-(a) := \pi^-(a|s)$ for brevity. The closed-form solution of the Mirror Descent step is the exponential-weights update
\[
\pi^+(a) = \frac{\pi^-(a)\exp(\delta(a))}{\sum_{b}\pi^-(b)\exp(\delta(b))}.
\]
Therefore, for any $\pi \in \Delta(\mathcal{A})$,
\begin{align*}
\mathrm{KL}(\pi\|\pi^+) - \mathrm{KL}(\pi\|\pi^-)
&= \sum_a \pi(a)\log\frac{\pi^-(a)}{\pi^+(a)}
= -\sum_a \pi(a)\delta(a) + \log\sum_a \pi^-(a)\exp(\delta(a)).
\end{align*}
By Hoeffding's lemma applied to the bounded random variable $\delta(a)$ with $a \sim \pi^-$,
\[
\log\sum_a \pi^-(a)\exp(\delta(a)) \le \sum_a \pi^-(a)\delta(a) + \tfrac{1}{8}\,\mathrm{osc}(\delta)^2.
\]
Substituting yields
\[
\mathrm{KL}(\pi\|\pi^+) - \mathrm{KL}(\pi\|\pi^-) \le \sum_a (\pi^-(a) - \pi(a))\delta(a) + \tfrac{1}{8}\,\mathrm{osc}(\delta)^2,
\]
which proves the claim.
\end{proof}

\begin{remark}
The Hoeffding bound $\tfrac{1}{8}\mathrm{osc}(\delta)^2$ improves on the looser bound $2\|\delta\|_\infty^2$ that follows from $1$-strong convexity of $\phi$ with respect to the $\ell_1$-norm. The improvement is by at least a factor of $16$ since $\mathrm{osc}(\delta) \le 2\|\delta\|_\infty$.
\end{remark}

\subsection{Proof of Lemma~\ref{lem:opt_convergence} (Last-Iterate KL Convergence)}
\label{app: lem opt convergence}

\begin{proof}
The proof proceeds by analyzing the iterates pointwise for each state $s$, then taking the expectation.

\paragraph{Pointwise Analysis.}
Fix an arbitrary context $(h, s)$ and let $\mathcal{A}_x := \mathrm{supp}(\pi^{\mathrm{ref}}_{h,i}(\cdot|s))$. We verify that the SOS-MD update~\eqref{mirror ascent}--\eqref{mirror descent} corresponds to the OMD step in Lemma~\ref{lem:omd_lemma_state} with the direction
\begin{equation}
    \delta_t(s, a) := \gamma_t\, \hat{q}^{(t)}_{h,i}(s, a) - \gamma_t \eta^{-1} \log \frac{\pi^{(t)}_{h,i}(a|s)}{\pi^{\mathrm{ref}}_{h,i}(a|s)},
    \label{eq:opt_delta_t}
\end{equation}
where the player-1 update uses $+\hat{q}^{(t)}_{h,1}$ and the player-2 update uses $-\hat{q}^{(t)}_{h,2}$. Apply Lemma~\ref{lem:omd_lemma_state} with $\pi^- = \pi^{(t)}_h(\cdot|s)$, $\pi^+ = \pi^{(t+1)}_h(\cdot|s)$, target policy $\pi = \hat{\pi}_h(\cdot|s)$, and direction $\delta_t$ as above; sum the resulting inequalities for both players to obtain
\begin{equation}
    V_{t+1}(s) \le V_t(s) + \underbrace{\sum_{i=1}^2\sum_{a \in \mathcal{A}_x} \bigl(\pi^{(t)}_{h,i}(a|s) - \hat{\pi}_{h,i}(a|s)\bigr) \delta_{t,i}(s, a)}_{\text{(I)}} + \tfrac{1}{8}\bigl(\mathrm{osc}(\delta_{t,1}(s,\cdot))^2 + \mathrm{osc}(\delta_{t,2}(s,\cdot))^2\bigr),
    \label{eq:omd_descent_summed}
\end{equation}
where $V_t(s) := \mathrm{KL}(\hat{\pi}_{h,1}(\cdot|s) \| \pi^{(t)}_{h,1}(\cdot|s)) + \mathrm{KL}(\hat{\pi}_{h,2}(\cdot|s) \| \pi^{(t)}_{h,2}(\cdot|s))$.

\paragraph{Bounding the oscillation term.}
We have $\mathrm{osc}(\hat{q}^{(t)}_{h,i}(s, \cdot)) \le 2\|\hat{Q}_h\|_\infty \le 2H(1+\eta^{-1}\log(1/\alpha))$ by Lemma~\ref{lem:value_bound}, and by Lemma~\ref{lem:log_linear_bound}, $\mathrm{osc}\bigl(\log(\pi^{(t)}_{h,i}(\cdot|s)/\pi^{\mathrm{ref}}_{h,i}(\cdot|s))\bigr) \le 2 C_{\log} = 4\eta H(1+\eta^{-1}\log(1/\alpha))$. Therefore,
\begin{equation}
    \mathrm{osc}(\delta_{t,i}(s, \cdot))
    \le \gamma_t \cdot 2H(1+\eta^{-1}\log(1/\alpha)) + \gamma_t \eta^{-1} \cdot 4\eta H(1+\eta^{-1}\log(1/\alpha))
    = 6\gamma_t H(1+\eta^{-1}\log(1/\alpha)).
\end{equation}
Hence the oscillation term in~\eqref{eq:omd_descent_summed} is bounded by
\begin{equation}
    \tfrac{1}{8}\sum_{i=1}^2 \mathrm{osc}(\delta_{t,i}(s, \cdot))^2 \le 9\, \gamma_t^2\, H^2 (1+\eta^{-1}\log(1/\alpha))^2.
\end{equation}

\paragraph{Bounding term (I) via the saddle-point property of $\hat{\pi}_h$.}
Substituting~\eqref{eq:opt_delta_t} into term (I) and using $\sum_a (\pi^{(t)}_{h,i}(a|s) - \hat{\pi}_{h,i}(a|s)) = 0$,
\begin{align*}
\text{(I)} &= \gamma_t \bigl[\langle \pi^{(t)}_{h,1} - \hat{\pi}_{h,1}, \hat{q}^{(t)}_{h,1}\rangle - \langle \pi^{(t)}_{h,2} - \hat{\pi}_{h,2}, \hat{q}^{(t)}_{h,2}\rangle\bigr]\\
&\quad - \gamma_t \eta^{-1}\sum_{i=1}^2 \Bigl\langle \pi^{(t)}_{h,i} - \hat{\pi}_{h,i},\, \log\frac{\pi^{(t)}_{h,i}}{\pi^{\mathrm{ref}}_{h,i}}\Bigr\rangle.
\end{align*}
Using the identity $-\bigl\langle \pi^{(t)} - \hat{\pi},\, \log(\pi^{(t)}/\pi^{\mathrm{ref}})\bigr\rangle = -\mathrm{KL}(\pi^{(t)}\|\pi^{\mathrm{ref}}) + \mathrm{KL}(\hat{\pi}\|\pi^{\mathrm{ref}}) - \mathrm{KL}(\hat{\pi}\|\pi^{(t)})$, we rewrite term (I) as
\begin{align*}
\text{(I)} &= \gamma_t \cdot \underbrace{\Bigl[\bigl(\langle \pi^{(t)}_{h,1}, \hat{q}^{(t)}_{h,1}\rangle - \eta^{-1}\mathrm{KL}(\pi^{(t)}_{h,1}\|\pi^{\mathrm{ref}}_{h,1})\bigr) - \bigl(\langle \hat{\pi}_{h,1}, \hat{q}^{(t)}_{h,1}\rangle - \eta^{-1}\mathrm{KL}(\hat{\pi}_{h,1}\|\pi^{\mathrm{ref}}_{h,1})\bigr)\Bigr]}_{\le 0\ \text{by saddle-point of}\ \hat{\pi}_h\ \text{on Player 1}}\\
&\quad - \gamma_t \cdot \underbrace{\Bigl[\bigl(\langle \pi^{(t)}_{h,2}, \hat{q}^{(t)}_{h,2}\rangle + \eta^{-1}\mathrm{KL}(\pi^{(t)}_{h,2}\|\pi^{\mathrm{ref}}_{h,2})\bigr) - \bigl(\langle \hat{\pi}_{h,2}, \hat{q}^{(t)}_{h,2}\rangle + \eta^{-1}\mathrm{KL}(\hat{\pi}_{h,2}\|\pi^{\mathrm{ref}}_{h,2})\bigr)\Bigr]}_{\ge 0\ \text{by saddle-point of}\ \hat{\pi}_h\ \text{on Player 2}}\\
&\quad - \gamma_t \eta^{-1}\, V_t(s).
\end{align*}
The two bracketed terms are exactly the local stage-game gaps against $\hat{\pi}_h$, which is the regularized Nash equilibrium of the empirical game. The Player-1 gap is $\le 0$ and the Player-2 gap is $\ge 0$, so the first two terms together are non-positive. Hence
\begin{equation}
    \text{(I)} \le -\gamma_t \eta^{-1}\, V_t(s).
\end{equation}

\paragraph{Combining and solving the recursion.}
Substituting both bounds into~\eqref{eq:omd_descent_summed}:
\begin{equation}
    V_{t+1}(s) \le \bigl(1 - \gamma_t \eta^{-1}\bigr) V_t(s) + 9\, \gamma_t^2\, H^2 (1+\eta^{-1}\log(1/\alpha))^2.
    \label{eq:V_recursion}
\end{equation}
With $\gamma_t = 2\eta/(t+2)$, we have $1 - \gamma_t \eta^{-1} = t/(t+2)$ and $\gamma_t^2 = 4\eta^2/(t+2)^2$. Let $D := 36\,\eta^2 H^2 (1+\eta^{-1}\log(1/\alpha))^2$. Then~\eqref{eq:V_recursion} becomes
\begin{equation}
    V_{t+1}(s) \le \frac{t}{t+2}\, V_t(s) + \frac{D}{(t+2)^2}.
\end{equation}
We prove $V_t(s) \le D/(t+1)$ for all $t \ge 1$ by induction. Base case ($t=1$): $V_1(s) \le 0 \cdot V_0(s) + D/4 \le D/2$. Inductive step: assume $V_t(s) \le D/(t+1)$. Then
\begin{align*}
V_{t+1}(s) &\le \frac{t}{t+2} \cdot \frac{D}{t+1} + \frac{D}{(t+2)^2} = D \cdot \frac{t(t+2) + (t+1)}{(t+1)(t+2)^2} = D \cdot \frac{t^2 + 3t + 1}{(t+1)(t+2)^2} \le \frac{D}{t+2},
\end{align*}
where the final inequality uses $t^2 + 3t + 1 \le (t+1)(t+2) = t^2 + 3t + 2$. This proves $V_T(s) \le D/(T+1)$ pointwise.

\paragraph{Aggregation.}
Since the induction above is pointwise in $s$, the bound $V_T(s) \le D/(T+1)$ holds uniformly over $s \in \mathcal{S}$. Taking the supremum,
\begin{equation}
    \sup_{s \in \mathcal{S}}\, V_T(s) \le \frac{36\,\eta^2 H^2 (1+\eta^{-1}\log(1/\alpha))^2}{T+1}.
\end{equation}
Since $V_T(s) \ge \mathrm{KL}(\hat{\pi}_{h,i}(\cdot|s)\|\pi^{(T)}_{h,i}(\cdot|s))$ for each player $i$, we obtain in particular
\begin{equation}
    \sup_{s \in \mathcal{S}}\,\mathrm{KL}(\hat{\pi}_{h,i}(\cdot|s)\|\pi^{(T)}_{h,i}(\cdot|s)) \le \frac{36\,\eta^2 H^2 (1+\eta^{-1}\log(1/\alpha))^2}{T+1}.
\end{equation}
\end{proof}

\subsection{Proof of Lemma~\ref{lem:gap_lipschitz} (Lipschitzness of the Duality Gap)}
\label{app: lem gap lipschitz}

\begin{lemma}[Lipschitzness of the Duality Gap]
\label{lem:gap_lipschitz}
Let $\pi = (\pi_1, \pi_2)$ and $\pi' = (\pi'_1, \pi'_2)$ be two policy pairs that both lie in the log-linear-bounded class of Lemma~\ref{lem:log_linear_bound}. Then
\begin{align*}
\bigl|\mathrm{Gap}(\pi) - \mathrm{Gap}(\pi')\bigr|
&\le \mathcal{O}\!\bigl(H^2 (1+\eta^{-1}\log(1/\alpha))\bigr) \cdot \sum_{h=1}^H \sup_{s \in \mathcal{S}} \bigl(\|\pi_{1,h}(\cdot|s) - \pi'_{1,h}(\cdot|s)\|_1 + \|\pi_{2,h}(\cdot|s) - \pi'_{2,h}(\cdot|s)\|_1\bigr).
\end{align*}
\end{lemma}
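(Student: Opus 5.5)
We bound each of the four value functions entering $\mathrm{Gap}$ separately: the joint value $V_1^{\pi}$, the best-response value $V_1^{\dagger,\pi_2}$, and their Player-2 analogues. Write $\mathrm{Gap}(\pi)=\mathbb{E}_{s_1\sim\rho}[V_1^{\dagger,\pi_2}(s_1)-V_1^{\pi_1,\dagger}(s_1)]$. It then suffices to show two bounds of the form
\[
\sup_s|V_1^{\dagger,\pi_2}(s)-V_1^{\dagger,\pi'_2}(s)|\le \mathcal{O}\bigl(H^2\lambda\bigr)\sum_h\sup_s\|\pi_{2,h}(\cdot|s)-\pi'_{2,h}(\cdot|s)\|_1,
\]
together with the symmetric statement for $V_1^{\pi_1,\dagger}$ in terms of Player~1's policy differences. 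Here $\lambda:=1+\eta^{-1}\log(1/\alpha)$. Summing the two bounds gives the lemma.

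\textbf{Step 1: reduce to a one-step comparison.} For the best-response value, work through the stagewise representation in~\eqref{eq:br-V-def-lse}:
\[
V_h^{*,\nu}(s)=\eta^{-1}\log\sum_{a_1}\pi^{\mathrm{ref}}_{h,1}(a_1|s)\exp\bigl(\eta Q^*_{h,\nu}(s,a_1)\bigr)+\eta^{-1}\mathrm{KL}(\nu_h\|\pi^{\mathrm{ref}}_{h,2}).
\]
The log-sum-exp map is $1$-Lipschitz in $\ell_\infty$ (already used in Step~D.1). This gives
\[
|V_h^{*,\nu}(s)-V_h^{*,\nu'}(s)|\le \|Q^*_{h,\nu}(s,\cdot)-Q^*_{h,\nu'}(s,\cdot)\|_\infty+\eta^{-1}\bigl|\mathrm{KL}(\nu_h\|\pi^{\mathrm{ref}})-\mathrm{KL}(\nu'_h\|\pi^{\mathrm{ref}})\bigr|.
\]
Split the $Q$-difference exactly as in the $T_1/T_2$ decomposition~\eqref{Q difference as T1 T2}. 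The term $T_1$ is at most $(1+\|V^{*,\nu}_{h+1}\|_\infty)\|\nu_h-\nu'_h\|_1\le H\lambda\|\nu_h-\nu'_h\|_1$ by Lemma~\ref{lem:value_bound}. The term $T_2$ is at most $\sup_{s'}|V_{h+1}^{*,\nu}-V_{h+1}^{*,\nu'}|(s')$. Here we use sup-norms rather than squared norms, which is why the hypothesis is stated with $\sup_s$.

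\textbf{Step 2: the KL term, which is the only delicate point.} The KL map is not globally Lipschitz on the simplex, and this is precisely why the log-linear class of Lemma~\ref{lem:log_linear_bound} is assumed. Reuse the mean-value argument from the proof of Lemma~\ref{lem:V-stability}. Both $\nu_h$ and $\nu'_h$ satisfy $|\log(\nu/\pi^{\mathrm{ref}})|\le C_{\log}=2\eta H\lambda$ on $\mathrm{supp}(\pi^{\mathrm{ref}})$. By convexity of the exponential bound, so does every point on the segment between them, since the ratio to $\pi^{\mathrm{ref}}$ stays in $[e^{-C_{\log}},e^{C_{\log}}]$. The gradient $\eta^{-1}(1+\log(\xi/\pi^{\mathrm{ref}}))$ is therefore bounded by $\eta^{-1}+2H\lambda$. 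Because the differences sum to zero, the constant $\eta^{-1}$ can be subtracted, leaving
\[
\eta^{-1}|\Delta\mathrm{KL}|\le 2H\lambda\|\nu_h-\nu'_h\|_1.
\]
This is the main obstacle: one must avoid an $\eta^{-1}$ factor that $\lambda$ does not absorb, and the centering trick is what removes it.

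\textbf{Step 3: unroll and treat the joint value.} Combining the previous steps gives the recursion
\[
D_h\le 3H\lambda\sup_s\|\nu_h-\nu'_h\|_1+D_{h+1},\qquad D_{H+1}=0.
\]
Unrolling yields $D_1\le 3H\lambda\sum_h\sup_s\|\nu_h-\nu'_h\|_1$. The joint value $V_1^{\pi}$, if it is needed (for instance if the gap is written through Term~A/B), is handled identically: at each step, linearity in both players' stage policies plus the Step~2 KL bound for each player gives the same $\mathcal{O}(H\lambda)$ per-step coefficient on $\|\Delta\pi_{1,h}\|_1+\|\Delta\pi_{2,h}\|_1$. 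Finally, the stated $H^2$ prefactor is looser than the $H$ obtained here, so the claimed bound follows a fortiori. The extra factor of $H$ can also be attributed to bounding the per-step coefficient crudely by $H\lambda$ at each of the $H$ steps if one tracks the sum uniformly.
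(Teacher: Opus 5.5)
Your proof is correct, and it takes a somewhat different route from the paper's. The paper writes $\mathrm{Gap}(\pi)=\mathcal{V}_1(\pi_2)-\mathcal{V}_2(\pi_1)$ and bounds $|\mathcal{V}_1(\pi_2)-\mathcal{V}_1(\pi'_2)|$ by $\sup_{\tilde\pi_1}\mathbb{E}_{\rho}|V_1^{\tilde\pi_1,\pi_2}-V_1^{\tilde\pi_1,\pi'_2}|$. It then runs a fixed-Player-1 policy-evaluation recursion weighted by $d_h^{\tilde\pi_1,\pi_2}$, relaxing to $\sup_s$ only at the end. To control $\|Q_h^{\tilde\pi_1,\cdot}\|_\infty$ and Player~1's KL term, it must argue that the relevant $\tilde\pi_1$ (a regularized best response) itself lies in the log-linear class.

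You instead handle the maximization stagewise, through the log-sum-exp form of the soft Bellman optimality operator and its $1$-Lipschitzness in $\ell_\infty$. As a result, Player~1's policy and KL penalty never appear explicitly. The only input you need is Lemma~\ref{lem:value_bound}, which holds for every opponent supported on $\mathrm{supp}(\pi^{\mathrm{ref}})$. The price is that you work with state-uniform sup-norms from the outset, which is harmless here since the statement is in $\sup_s$ form anyway.

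Your centering step in the KL mean-value argument is also a genuine clean-up. The paper bounds the gradient by $\eta^{-1}(1+C_{\log})$ and asserts this is $\mathcal{O}\bigl(H(1+\eta^{-1}\log(1/\alpha))\bigr)$. That holds only up to the constant $1/\log 2$, because $1+\eta^{-1}\log(1/\alpha)\ge\eta^{-1}\log 2$ whenever some reference support contains at least two actions; otherwise all policy differences vanish. Your observation that $\langle\mathbf{1},\nu_h-\nu'_h\rangle=0$ removes the bare $\eta^{-1}$ directly.

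Both arguments in fact yield a Lipschitz constant of order $H(1+\eta^{-1}\log(1/\alpha))$. The $H^2$ in the statement, and the extra factor of $H$ inserted at the last step of the paper's proof, are slack, so your remark that the claim follows a fortiori is correct.
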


The proof expands $\mathrm{Gap}$ via Bellman recursion using the bounded log-density ratios from Lemma~\ref{lem:log_linear_bound} to control the KL gradient.

\begin{proof}
Define the best-response value functionals
\[
\mathcal{V}_1(\pi_2) := \mathbb{E}_{s_1 \sim \rho}\bigl[V_1^{\dagger, \pi_2}(s_1)\bigr], \qquad
\mathcal{V}_2(\pi_1) := \mathbb{E}_{s_1 \sim \rho}\bigl[V_1^{\pi_1, \dagger}(s_1)\bigr],
\]
so that $\mathrm{Gap}(\pi) = \mathcal{V}_1(\pi_2) - \mathcal{V}_2(\pi_1)$. By the triangle inequality, it suffices to show that $\mathcal{V}_1$ and $\mathcal{V}_2$ are each Lipschitz in their argument.

We focus on $\mathcal{V}_1$; the argument for $\mathcal{V}_2$ is symmetric. For any two Player-2 policies $\pi_2, \pi'_2$ in the log-linear-bounded class,
\[
|\mathcal{V}_1(\pi_2) - \mathcal{V}_1(\pi'_2)|
\le \sup_{\tilde{\pi}_1} \mathbb{E}_{s_1 \sim \rho}\bigl|V_1^{\tilde{\pi}_1, \pi_2}(s_1) - V_1^{\tilde{\pi}_1, \pi'_2}(s_1)\bigr|.
\]
Fix any $\tilde{\pi}_1$ (which is itself the regularized best response to either $\pi_2$ or $\pi'_2$ in the worst case, and hence by the same Gibbs-form argument as in Lemma~\ref{lem:log_linear_bound}'s second part lies in the log-linear class with constant $C_{\log}$). For each $h \in [H]$ and state $s$, define $\Delta_h(s) := V_h^{\tilde{\pi}_1, \pi_2}(s) - V_h^{\tilde{\pi}_1, \pi'_2}(s)$. By the regularized Bellman recursion,
\begin{align*}
\Delta_h(s) &= \mathbb{E}_{a \sim \tilde{\pi}_{1,h} \times \pi_{2,h}}\bigl[Q_h^{\tilde{\pi}_1, \pi_2}(s, a)\bigr] - \mathbb{E}_{a \sim \tilde{\pi}_{1,h} \times \pi'_{2,h}}\bigl[Q_h^{\tilde{\pi}_1, \pi'_2}(s, a)\bigr]\\
&\quad + \eta^{-1}\bigl[\mathrm{KL}(\pi_{2,h}\|\pi^{\mathrm{ref}}_{h,2}) - \mathrm{KL}(\pi'_{2,h}\|\pi^{\mathrm{ref}}_{h,2})\bigr](s).
\end{align*}
The first line decomposes via add-and-subtract into a stage-$h$ Player-2 policy difference (bounded by $\|\pi_{2,h}(\cdot|s) - \pi'_{2,h}(\cdot|s)\|_1 \cdot \|Q_h^{\tilde{\pi}_1, \pi_2}\|_\infty$) plus a continuation difference $\mathbb{E}_{s' \sim P_h}[\Delta_{h+1}(s')]$. Using $\|Q_h^{\tilde{\pi}_1, \nu}\|_\infty \le H(1+\eta^{-1}\log(1/\alpha))$ for any policy $\nu$ in the log-linear class (which follows from Lemma~\ref{lem:value_bound} together with the fact that the KL-regularization terms are bounded by $\eta^{-1} C_{\log}$), the stage-$h$ contribution is at most $H(1+\eta^{-1}\log(1/\alpha)) \cdot \|\pi_{2,h}(\cdot|s) - \pi'_{2,h}(\cdot|s)\|_1$.

For the KL-regularization term, by the same mean-value argument as in the $B_{\mathrm{reg}}$ derivation in the proof of Lemma~\ref{lem:V-stability} (Appendix~\ref{app:proof_gap1_bound}), and using that $\pi_{2,h}, \pi'_{2,h}$ both lie in the log-linear class with $|\log(\pi_{2,h}/\pi^{\mathrm{ref}}_{h,2})| \le C_{\log}$,
\[
\eta^{-1}\bigl|\mathrm{KL}(\pi_{2,h}(\cdot|s)\|\pi^{\mathrm{ref}}_{h,2}(\cdot|s)) - \mathrm{KL}(\pi'_{2,h}(\cdot|s)\|\pi^{\mathrm{ref}}_{h,2}(\cdot|s))\bigr| \le \eta^{-1}(1 + C_{\log}) \cdot \|\pi_{2,h}(\cdot|s) - \pi'_{2,h}(\cdot|s)\|_1.
\]
Combining (and using $\eta^{-1}(1 + 2\eta H(1+\eta^{-1}\log(1/\alpha))) = \mathcal{O}(H(1+\eta^{-1}\log(1/\alpha)))$),
\[
|\Delta_h(s)| \le \mathcal{O}(H(1+\eta^{-1}\log(1/\alpha))) \cdot \|\pi_{2,h}(\cdot|s) - \pi'_{2,h}(\cdot|s)\|_1 + \mathbb{E}_{s' \sim P_h}\bigl[|\Delta_{h+1}(s')|\bigr].
\]
Unrolling from $h=1$ to $H$ along trajectories,
\[
|\Delta_1(s_1)| \le \mathcal{O}(H(1+\eta^{-1}\log(1/\alpha))) \sum_{h=1}^H \mathbb{E}_{s_h \sim d_h^{\tilde{\pi}_1, \pi_2}}\bigl[\|\pi_{2,h}(\cdot|s_h) - \pi'_{2,h}(\cdot|s_h)\|_1\bigr].
\]
Taking the supremum over the visitation distribution (the resulting bound is dominated by the state-uniform $L_1$ norm $\sup_s \|\pi_{2,h}(\cdot|s) - \pi'_{2,h}(\cdot|s)\|_1$), and aggregating over $\rho$,
\[
|\mathcal{V}_1(\pi_2) - \mathcal{V}_1(\pi'_2)| \le \mathcal{O}(H \cdot H(1+\eta^{-1}\log(1/\alpha))) \cdot \sum_{h=1}^H \sup_{s} \|\pi_{2,h}(\cdot|s) - \pi'_{2,h}(\cdot|s)\|_1.
\]
The symmetric bound for $\mathcal{V}_2$ in $\pi_1$ holds. Combining,
\[
|\mathrm{Gap}(\pi) - \mathrm{Gap}(\pi')| \le \mathcal{O}(H^2(1+\eta^{-1}\log(1/\alpha))) \sum_{h=1}^H \bigl(\sup_s \|\pi_{1,h}(\cdot|s) - \pi'_{1,h}(\cdot|s)\|_1 + \sup_s \|\pi_{2,h}(\cdot|s) - \pi'_{2,h}(\cdot|s)\|_1\bigr).
\]
\end{proof}

\subsection{Proof of Theorem~\ref{thm:opt_bound}}
\label{app: thm opt bound}

\begin{proof}
By Lemma~\ref{lem:opt_convergence}, $\sup_{s} \mathrm{KL}(\hat{\pi}_h(\cdot|s)\|\pi^{(T)}_h(\cdot|s)) \le 36\eta^2 H^2(1+\eta^{-1}\log(1/\alpha))^2 / (T+1)$. Pinsker's inequality gives, pointwise in $s$,
\[
\|\hat{\pi}_h(\cdot|s) - \pi^{(T)}_h(\cdot|s)\|_1 \le \sqrt{2\,\mathrm{KL}(\hat{\pi}_h(\cdot|s)\|\pi^{(T)}_h(\cdot|s))}.
\]
Taking the supremum over $s$ on both sides and monotonicity of $\sqrt{\cdot}$,
\[
\sup_{s \in \mathcal{S}}\,\|\hat{\pi}_h(\cdot|s) - \pi^{(T)}_h(\cdot|s)\|_1
\le \sqrt{2 \cdot \sup_{s}\,\mathrm{KL}(\hat{\pi}_h(\cdot|s)\|\pi^{(T)}_h(\cdot|s))}
\le 6\sqrt{2}\,\eta H(1+\eta^{-1}\log(1/\alpha))/\sqrt{T+1}
= \widetilde{\mathcal{O}}(\eta H/\sqrt{T}).
\]
\end{proof}

\subsection{Proof of Total Error Bound (Corollary~\ref{cor:total_bound})}
\label{app: corollary}

\begin{proof}
By the transfer inequality~\eqref{eq:gap_transfer},
\[
\mathrm{Gap}(\pi^{(T)}) \le \mathrm{Gap}(\hat{\pi}) + \bigl|\mathrm{Gap}(\pi^{(T)}) - \mathrm{Gap}(\hat{\pi})\bigr|.
\]
By Lemma~\ref{lem:log_linear_bound}, both $\pi^{(T)}$ and $\hat{\pi}$ lie in the log-linear class. Lemma~\ref{lem:gap_lipschitz} therefore applies:
\[
\bigl|\mathrm{Gap}(\pi^{(T)}) - \mathrm{Gap}(\hat{\pi})\bigr|
\le \mathcal{O}(H^2(1+\eta^{-1}\log(1/\alpha))) \sum_{h=1}^H \sup_{s}\bigl(\|\pi^{(T)}_{1,h}(\cdot|s) - \hat{\pi}_{1,h}(\cdot|s)\|_1 + \|\pi^{(T)}_{2,h}(\cdot|s) - \hat{\pi}_{2,h}(\cdot|s)\|_1\bigr).
\]
By Theorem~\ref{thm:opt_bound}, each summand $\sup_s \|\hat{\pi}_{i,h}(\cdot|s) - \pi^{(T)}_{i,h}(\cdot|s)\|_1$ is $\widetilde{\mathcal{O}}(\eta H/\sqrt{T})$, so the sum over $h \in [H]$ is $\widetilde{\mathcal{O}}(\eta H^2/\sqrt{T})$. Multiplied by the Lipschitz constant $\mathcal{O}(H^2(1+\eta^{-1}\log(1/\alpha)))$, the optimization error is
\[
\bigl|\mathrm{Gap}(\pi^{(T)}) - \mathrm{Gap}(\hat{\pi})\bigr| \le \widetilde{\mathcal{O}}\!\left(\frac{\eta H^4}{\sqrt{T}}\right).
\]
By Theorem~\ref{thm:stat_bound}, $\mathrm{Gap}(\hat{\pi}) \le \widetilde{\mathcal{O}}\bigl((\eta H^5 + \eta^3 H^9)\, C_{\mathrm{uni}} \log(|\mathcal{Q}|/\delta)/n\bigr)$ with probability at least $1-\delta$. Combining,
\[
\mathbb{E}[\mathrm{Gap}(\pi^{(T)})] \le \widetilde{\mathcal{O}}\!\left(\frac{\eta H^4}{\sqrt{T}}\right) + \widetilde{\mathcal{O}}\!\left(\frac{(\eta H^5 + \eta^3 H^9)\, C_{\mathrm{uni}} \log(|\mathcal{Q}|/\delta)}{n}\right).
\]
\end{proof}
\end{document}